\documentclass[runningheads]{llncs}
\usepackage{graphicx}
% DO NOT USE \usepackage{times}, it will be removed by typesetters
%\usepackage{times}

\usepackage{tikz}
\usepackage{comment}
\usepackage{amsmath,amssymb} % define this before the line numbering.
\usepackage{color}

% INITIAL SUBMISSION - The following two lines are NOT commented
% CAMERA READY - Comment OUT the following two lines
%\usepackage{ruler}
%\usepackage[width=122mm,left=12mm,paperwidth=146mm,height=193mm,top=12mm,paperheight=217mm]{geometry}

\usepackage{epsfig}
\usepackage{graphicx}
\usepackage[utf8]{inputenc} % allow utf-8 input
\usepackage[T1]{fontenc}    % use 8-bit T1 fonts
\usepackage{hyperref}       % hyperlinks
\usepackage{url}            % simple URL typesetting
\usepackage{booktabs}       % professional-quality tables
\usepackage{amsfonts}       % blackboard math symbols
\usepackage{nicefrac}       % compact symbols for 1/2, etc.
\usepackage{microtype}      % microtypography
\usepackage{placeins}       % provides FloatBarrier

\usepackage{subcaption}
\captionsetup{subrefformat=parens}

\newcommand{\citep}[1]{\cite{#1}}
\newcommand{\drop}[1]{}

% ********************************************************************
% Laying out text nicely
% ********************************************************************
% Things for maths
% \usepackage{amsmath} %[fleqn]

\usepackage{amstext} % Full math equation support
\usepackage{physics}         % \abs and \norm
\usepackage{siunitx}         % like units, but better

% ********************************************************************
% Maths things
% ********************************************************************
% Declare a font which is useful for making symbols as curly letters,
% such as \mathpzc{H}
\DeclareMathAlphabet{\mathpzc}{OT1}{pzc}{m}{it}
\newcommand{\opn}[1]{\operatorname{#1}}
%%% `Log-like' maths functions
                         % sign
         % expectation
   % expectation
%\newcommand{\EE}{\mathlarger{\operatorname{\mathbb E}}}
  % probability
% Information theoretic (mutual info and entropy)

%
% Derivatives
                                        % rename builtin command \d{} to \underdot{}
%\renewcommand{\d}{\operatorname{d}}                     % old method
      % straight operator d (\! for no space after)
%\renewcommand{\d}{\ensuremath{d}}                       % italic (variable-like) d
                    % first order ordinary derivative operator
         % n-th order ordinary derivative operator
\newcommand{\pd}[1]{\frac{\partial}{\partial #1}}       % first order partial derivative operator
    % n-th order partial derivative operator
%
%%% Ordinals
%\newcommand{\ord}[2]{#1#2}                              %where we are doing normal 1st, 2nd, 3rd, 4th and not n-th
  %where we are doing normal 1st, 2nd, 3rd, 4th and not n-th
%\newcommand{\nth}[1]{#1\text{-th}}                      %where the argument is mathematical and we are in math-mode
%\newcommand{\mth}[1]{$#1$-th}                           %where the argument is mathematical but we are not in math-mode
         %NEW: where argument is mathematical (may be used in any mode)
                          %ditto. for backward compatibility.
%
%%% Vectors
\newcommand{\mtx}[1]{\left[ \begin{matrix} #1 \end{matrix} \right]} %matrix
                                      %column vector
                                          %row vector
                                   %transposed col vector
%
%%% Notation
\newcommand{\VEC}[1]{\mathbf{#1}}                                   %vector symbol typeface
                                   %set symbol typeface
\newcommand{\NSYS}[1]{\mathbb{#1}}                                  %number system typeface (R, C, Z, N)
\newcommand{\R}{\NSYS{R}}                                           %number system typeface (R, C, Z, N)
                                           %number system typeface (R, C, Z, N)
                                           %number system typeface (R, C, Z, N)
                                           %number system typeface (R, C, Z, N)
%
%% Imaginary unit

%% Real and Imaginary components

%
%% Handed by the llncs class
%\newtheorem{theorem}{Theorem}%[section]
%\newtheorem{corollary}{Corollary}[theorem]
%\newtheorem{lemma}[theorem]{Lemma}
% ****************************************************************************************************

\newcommand{\mbf}[1]{\mathbf{#1}}
\newcommand{\mbi}[1]{\mathbf{#1}}
\newcommand{\boolop}[1]{\textit{#1}}

% The \author macro works with any number of authors. There are two commands
% used to separate the names and addresses of multiple authors: \And and \AND.
%
% Using \And between authors leaves it to LaTeX to determine where to break the
% lines. Using \AND forces a line break at that point. So, if LaTeX puts 3 of 4
% authors names on the first line, and the last on the second line, try using
% \AND instead of \And before the third author name.

%\author{%
%  Scott C.~Lowe \\
%  Faculty of Computer Science\\
%  Dalhousie University\\
%  Halifax\\
%  \texttt{scottclowe@gmail.com} \\
%  \And
%  Thomas Trappenberg \\
%  Faculty of Computer Science\\
%  Dalhousie University\\
%  Halifax\\
%  \texttt{tt@cs.dal.ca} \\
%  \AND
%  Sageev Oore \\
%  Faculty of Computer Science\\
%  Dalhousie University\\
%  Halifax\\
%  \texttt{sageev@dal.ca} \\
%  % \And
%  % Coauthor \\
%  % Affiliation \\
%  % Address \\
%  % \texttt{email} \\
%  % \And
%  % Coauthor \\
%  % Affiliation \\
%  % Address \\
%  % \texttt{email} \\
%}

\begin{document}
% \renewcommand\thelinenumber{\color[rgb]{0.2,0.5,0.8}\normalfont\sffamily\scriptsize\arabic{linenumber}\color[rgb]{0,0,0}}
% \renewcommand\makeLineNumber {\hss\thelinenumber\ \hspace{6mm} \rlap{\hskip\textwidth\ \hspace{6.5mm}\thelinenumber}}
% \linenumbers
\pagestyle{headings}
\mainmatter
\def\ECCVSubNumber{7453}  % Insert your submission number here

\title{LogAvgExp Provides a Principled and Performant Global Pooling Operator} % Replace with your title

% INITIAL SUBMISSION 
\begin{comment}
\titlerunning{ECCV-20 submission ID \ECCVSubNumber} 
\authorrunning{ECCV-20 submission ID \ECCVSubNumber} 
\author{Anonymous ECCV submission}
\institute{Paper ID \ECCVSubNumber}
\end{comment}
%******************

% CAMERA READY SUBMISSION
%\begin{comment}
\titlerunning{LogAvgExp Provides a Principled and Performant Global Pooling Operator}
% If the paper title is too long for the running head, you can set
% an abbreviated paper title here
%
\author{
    Scott C.~Lowe\inst{1,2,*}
    \and
    Thomas Trappenberg\inst{1}
    \and
    Sageev Oore\inst{1,2}
}
\authorrunning{S. Lowe et al.}
% First names are abbreviated in the running head.
% If there are more than two authors, 'et al.' is used.
%
\institute{
    Faculty of Computer Science, Dalhousie University Halifax, Nova Scotia, Canada
    \and
    Vector Institute for Artificial Intelligence, Toronto, Ontario, Canada\\
    $^*$Correspondence: scottclowe@gmail.com
}
%\end{comment}
%******************
\maketitle

% \begin{abstract}
% The abstract should summarize the contents of the paper. LNCS guidelines indicate it should be at least 70 and at most 150 words. It should be set in 9-point font size and should be inset 1.0~cm from the right and left margins. \dots \keywords{We would like to encourage you to list your keywords within the abstract section}
% \end{abstract}

% Feedback from R1 @ ECCV
% (1) Generalized \alpha-pooling 
%   Generalized orderless pooling performs implicit salient matching, ICCV17
% (2) Second-order Pooling 
%   Bilinear CNNs for Fine-grained Visual Recognition, ICCV15
%   Matrix Backpropagation for Deep Networks with Structured Layers, ICCV15
%   Is Second-order Information Helpful for Large-scale Visual Recognition? ICCV17
%   Statistically-motivated Second-order Pooling, ECCV18
%   Deep CNNs Meet Global Covariance Pooling: Better Representation and Generalization, TPAMI 2020
% (3) Probability Distribution Pooling
%   G2DeNet: Global Gaussian Distribution Embedding Network and Its Application to Visual Recognition, CVPR17
%   Deep Global Generalized Gaussian Networks, CVPR19
% (4) Higher-order Pooling
%   Kernel Pooling for Convolutional Neural Networks, CVPR17
%   Therefore, the authors would better make a comprehensive investigation on related works, while making discussions and compassions with these related works.

\begin{abstract}
We seek to improve the pooling operation in neural networks, by applying a more theoretically justified operator.
We demonstrate that LogSumExp provides a natural \boolop{or} operator for logits.
When one corrects for the number of elements inside the pooling operator, this becomes $\operatorname{LogAvgExp} := \log(\operatorname{mean}(\exp(\VEC{x})))$.
By introducing a single temperature parameter, LogAvgExp smoothly transitions from the max of its operands to the mean (found at the limiting cases $t \to 0^+$ and $t \to +\infty$).
We experimentally tested LogAvgExp, both with and without a learnable temperature parameter, in a variety of deep neural network architectures for computer vision.
%We found that this computationally and parameter lightweight change yielded a significant improvement in the performance of the networks.
\end{abstract}

%We seek to improve the pooling operation in neural networks, by applying a more theoretically justified operator.
%We demonstrate that LogSumExp provides a natural "or" operator for logits.
%When one corrects for the number of elements inside the pooling operator, this becomes LogAvgExp := log(mean(exp(x))).
%By introducing a single temperature parameter, LogAvgExp smoothly transitions from the max of its operands to the mean (found at the limiting cases t->0 and t->+infinity.
%We experimentally tested LogAvgExp, both with and without a learnable temperature parameter, in a variety of deep neural network architectures for computer vision.

\section{Introduction}
\label{sec:intro}

Over the past decade, computer vision has been dominated by neural network based methods.
These permit features to be learnt automatically from data with backpropagation and stochastic gradient descent.
Such learnt features out-perform hand-crafted features from preceding ``classical'' computer vision methodologies.

The neural network architectures dominant in computer vision are convolutional neural networks (CNNs), with weight sharing between the kernels applied at each point in space.
This weight sharing exploits the fact that objects, their features, and subfeatures are invariant to translation within the visual field.

The original convolutional neural networks \cite{cnn} contained interleaved convolutional linear layers, an activation function (either tanh or ReLU), and max-pooling layers \cite{alexnet,vgg}.
In order to reduce the dimensionality of the latent space, the representations were downsampled at various points within the network using max-pooling layers with a stride of 2 (typically with 2x2 kernels).
This has the effect of downsampling each spatial dimension by a factor of two --- reducing the representation of a 2d grid to a quarter as many datapoints.
However, despite halving the spatial dimensions multiple times, spatial dimensions were not fully collapsed, thus preserving information about where in the visual field high-level features occurred.
After the final convolutional layer, networks featured a fully-connected layer (FC), in which the majority of the network's overall parameters resided, even though it corresponded to a small fraction of the computational operations.

In contrast to this, modern architectures perform a global pooling step after the final convolutional layer \cite{resnext,densenet,squeeze-and-excitation,pyramidnet,mobilenetv1,mobilenetv2}.
With this operation, features are averaged across all of the spatial dimensions.
Here, global pooling serves two key purposes.
Firstly, collapsing down the spatial components reduces the latent space from 8x8 (for instance) to 1x1, reducing the number of inputs to the subsequent fully-connected layer (and hence the number of parameters) by a factor of 64.
In particular, with this architecture we can do as much of the computation as possible with the much more parameter-efficient convolutional layers instead of fully connected layers.
In some cases, there is no fully connected layer at all, with the final convolutional layer containing one channel per class, and a global pooling layer on top of this sufficient to output logits indicating the overall image label. % citation please! I can't find one.
Secondly, as we are collapsing down all the spatial domain into a vector, the same network architecture can be used for inputs of differing sizes (the pooling operation is adaptive and the kernel expands to fill the space).
This methodology was popularized by the Inception network of \cite{inception}.

While a global pooling layer immediately before the final output of the network is prevalent among state of the art networks, other pooling layers (in the middle of the network) are finding diminishing application \cite{resnet,resnext,squeeze-and-excitation,pyramidnet,shakedrop,autoaugment}.
Instead, mid-network spatial downsampling is accomplished by doubling the stride.
This results in a subsampling of the spatial field --- the convolutional filter is applied to only 25\% of the possible spatial tilings.
Intuitively, one might think that subsampling space in this way would have a large negative impact on performance, since it is equivalent to throwing away 75\% of the outputs of the same convolutional layer with a stride of 1.
However, the activations of nodes corresponding to adjacent points in space are highly correlated.
Consequently, this step does not result in much loss of information, nor does the removal of max-pooling layers appear to hinder the network's ability to generalize.

We anticipate there is little utility to be gained in improving the mid-network max-pooling operation for the aforementioned reasons.
But the global pooling, increasingly prevalent in modern architectures, is a potential point for further improvement.
When we perform global pooling, the kernel is larger and hence its contents are more diverse.
How we pool over space matters both for the forward step (we want to integrate spatial information optimally), and for the backward step (we want to assign credit appropriately to update operands efficiently).

Previous work has looked at improving the pooling function, but not with a focus on global pooling.
In particular, \cite{generalizing-pooling} considered interpolating between max pooling and average pooling with a trainable parameter, $\alpha$, such that $\opn{MixedPool}(\VEC{z}) = \alpha \max(\VEC{z}) + (1-\alpha) \opn{avg}(\VEC{z})$.
They also introduced a Gated pooling, in which the mixing parameter is given by a linear mapping of the pooling kernel passed through a sigmoid, $\alpha = \sigma(\VEC{w}^T\VEC{z})$. 

Other methods that have been proposed to generalize max and average pooling include \cite{kolesnikov2016seed,radenovic2018fine,fractional-maxpool}.
\cite{saeedan2018detail} proposed a detail-preserving pooling (DPP) function based on inverse bilateral filters, with learnable parameters that control the amount of detail that is preserved.
Consequently DPP is very capable technique for downscaling space, but is not suitable for global pooling as it collapses space entirely and all spatial information is lost.

In the present work, we derive a global pooling operator from first principles and converge upon a LogSumExp-based pooling operation.
LogSumExp-pooling has been introduced previously for pixel-level semantic segmentation \cite{Pinheiro2015}, and has seen a small number of related applications \cite{Wang2017,Wang2018}.
LogAvgExp pooling appears to have been introduced for multiple-instance learning by \cite{ramon2000multi}, and then analysed as part of a general set of pooling functions by \cite{boureau2010theoretical}, where it appeared to be inferior to other methods.
\cite{cohen2016deep} use LAE pooling as one component of an interesting architecture called \textit{Simnets} that generalizes convolutional neural networks.
This architecture is more expressive with fewer parameters, but computationally expensive, hence it is well suited for tasks with small datasets.

In this work, we illustrate the theoretical underpinnings of LSE and LAE for global pooling, demonstrate its capabilities on CNNs, and extend LAE through the addition of a trainable temperature parameter.
In so doing, we hope to vitalize the usage of this more powerful pooling operator.

\section{Theoretical motivation}

In this section, we derive a more principled pooling operation.
In order to do so, we must first make some well-principled assumptions about what it is that neurons encode (the inputs to the pooling operation) and what we aim to accomplish as we perform the pooling (the output).

Let us first consider the case where our neural network terminates with Conv-Pool-Softargmax.%
\footnote{Our arguments also hold if the network is not convolutional.}
In this case, there is no terminal fully connected layer.
The softargmax (often referred to as softmax),
\begin{align}
\opn{softargmax}(\VEC{z})_i &= \frac{\exp(z_i)}{\sum_{\forall j} \exp(z_j)} \label{eq:softargmax}
,\end{align}
rescales a vector of unnormalized scores or logits, $\VEC{z}$, into probabilities, $\VEC{p}$; as such it is a soft approximation of argmax.
We will assume the Pool step operates independently for each channel with no inter-channel interactions.
Consequently, the final convolution must output the same number of features/channels as the number of target classes in the dataset; each of its output channels will correspond to precisely one class label.

The activations passed from the Conv layer to Pool each correspond to a large subregion of space, their receptive fields overlapping one-another.
Across the spatial domain, each activation indicates a score for the presence of one of the classes for the corresponding region of space.
We could perform the softargmax before the pooling and extract the probability of each class being at each corresponding subregion of space.
The goal of this network is to output the label of the entire image, not a subregion of it, and so we would need to integrate these probabilities across space to complete the objective.
To complete the task, we don't care \textit{where} in the image an object is located.
The image is one of a cat if there is a cat situated in the top-left region%
\footnote{Assuming the cat dominates the image.}, or the top-right region, etc, of the image.
As any one of the options is sufficient, we need an \boolop{or}-like operator.

Tangentially, consider a classification problem with 5 classes \{cat, dog, car, truck, tree\}, denoted $u_i, \, i \in \{1, \ldots, 5\}$ respectively, and a model $\mathpzc{M}$ whose final layer provides logits $z_i$ corresponding to each class.
Applying softargmax to $\VEC{z}$ provides us with estimated probabilities for each class, $\VEC{p} = \opn{softargmax}(\VEC{z})$.
Let us now suppose we wish to group our outputs together into coarser labels, \{animal, vehicle, plant\} denoted $u'_j, \, j \in \{1, 2, 3\}$ (with the intuitive hierarchical class composition $u'_1 = \{u_1, u_2\}, u'_2 = \{u_3, u_4\}, u'_3 = u_5$).
We can estimate the probabilities of each superclass $u'_j$ by summing the probabilities of each of its members, for instance $p'_1 = p_1 + p_2$.

Alternatively, we could use the fine-grained logits $z_i$ to create coarse-grained logits $z'_j$, such that they will yield these very same $p'_j$ values when a softargmax is applied over all $z'_j$.
From \autoref{eq:softargmax}, we can see that $p'_1 = p_1 + p_2 = (\exp(z_1) + \exp(z_2)) / \sum_i \exp(z_i)$.
If we choose to define coarse grain logits of $z'_1 = \log(\exp(z_1) + \exp(z_2))$, and so on for other $z'_j$ terms, applying softmax to these yields the target probabilities, $\opn{softargmax}(z')_j = p'_j$.
The operation we have discovered is none other than LogSumExp (LSE), defined as
\begin{equation}
\operatorname{LogSumExp}(\VEC{z}) = \opn{LSE}(\VEC{z}) = \log\left( \sum_{i=1}^n \exp(z_i) \right)
.\end{equation}
From this we can conclude that LSE is an ``or-operator'' for unnormalized logits.
%\footnote{Provided that normalization is consistent.}
And, in the same vein, we can create an unnormalized logit indicating whether a class-level object occurs anywhere in space by pooling with the LSE operator.

Now let us consider the case where our global pooling layer does not operate on class-level features.
Provided we do not have an activation function between the linear layer and the pooling, we can consider these input activations to be the \textit{logits} of features occurring within space.
Such an interpretation has long standing, as early neural networks used sigmoid activations to convert intermediate logits into the probability of presence of a feature.
Moreover, modern architectures use batch normalization for their activations, the output of which is a $Z$-score indicating the \textit{significance} of the activation with respect to the average activation seen across the (history of) batch(es) and not the \textit{absolute} preponderance of a feature.

Assuming that the input to our pooling layer is a logit (having not passed through an activation function), we consider how to integrate these feature logits across space.
An intuitive option is to convert the spatial logits --- the log-odds that a feature is locally present --- into a logit indicating the log-odds that a feature is globally present.
Again, this is accomplished with LSE pooling.

Note though, that the output of LSE is guaranteed to be larger than any of the logits within its pooling window.
Intuitively, this corresponds to the fact that increasing the number of options can only increase the probability that one of the options is true.
To correct against this, we can instead use
\begin{align}
\opn{LogAvgExp}(\VEC{z})
    = \opn{LAE}(\VEC{z})
    :=& \log\left( \frac{1}{n} \sum_{i=1}^n \exp(z_{i}) \right) \\
     =& \opn{LSE}(\VEC{z}) - \log(n), \label{eq:lae}
\end{align}
where $n$ is the number of elements over which we are pooling. By subtracting $\log(n)$, we introduce a bias that corrects for the size of the pooling kernel.
This prevents the output from growing without bound as our global pooling kernel changes in size (recall that we want the pooling to be adaptive, and so return consistent outputs no matter the size of the kernel it is applied across).
When we perform spatial pooling across all the channels individually, using LAE instead of LSE does not change our interpretation of the procedure as an \boolop{or}-operation for logits, because the kernel, and hence $\log(n)$, is the same for all channels; these terms cancel out in the normalization step of softargmax.

Note that both LSE and LAE are soft approximations to the maximum operator, but LSE is bounded below by the max and LAE is bounded above.
For example, given a vector containing a repeated single value, $\VEC{a} = [a, a, \ldots, a]$, we find that $\opn{LAE}(\VEC{a}) = a$.
Meanwhile, $\opn{LSE}(\VEC{a}) = a + \log(\opn{len}(\VEC{a}))$.

Another advantage of LSE and LAE over max-pooling is that gradients flow back to more than one input.
For instance, if two operands are at or near the maximum value, the gradient of max-pooling will pass back to only one of them.
In contrast, the derivative of LSE is similar for both operands.
As exemplified in \autoref{fig:derivative-examples}, the derivatives of max and average pooling functions are indifferent to changes in the values within the pooling kernel, except for the (undesirable) discontinuity for max when the maximum value jumps from one element to another.

\begin{figure*}
  \centering
  \footnotesize

%\fbox{ %\rule{0pt}{2in} 
       % \rule{.9\linewidth}{0pt}
%%%%%%% MATRICES  COLUMN 1
  \begin{subfigure}[c]{.15\linewidth}
    \centering
    \hfill
  \end{subfigure}
%  \begin{subfigure}[c]{.14\linewidth}
%    \centering
%    $X = \mtx{-1 & 0 \\ 1 & 2}$
%  \end{subfigure}
  \begin{subfigure}[c]{.14\linewidth}
    \centering
    \scalebox{0.8}{$X = \mtx{-1 & 0 \\ 1.4 & 1.6}$}
  \end{subfigure}
  \begin{subfigure}[c]{.14\linewidth}
    \centering
    \scalebox{0.8}{$X = \mtx{-1 & 0 \\ 1.6 & 1.4}$}
  \end{subfigure}
  \begin{subfigure}[c]{.06\linewidth}
    \centering
    \hfill
  \end{subfigure}
  \begin{subfigure}[c]{.17\linewidth}
    \centering
    \hfill
  \end{subfigure}
%  \begin{subfigure}[c]{.14\linewidth}
%    \centering
%    \scalebox{0.8}{$X = \mtx{-1 & 0 \\ 1 & 2}$}
%  \end{subfigure}
  \begin{subfigure}[c]{.14\linewidth}
    \centering
    \scalebox{0.8}{$X = \mtx{-1 & 0 \\ 1.4 & 1.6}$}
  \end{subfigure}
  \begin{subfigure}[c]{.14\linewidth}
    \centering
    \scalebox{0.8}{$X = \mtx{-1 & 0 \\ 1.6 & 1.4}$}
  \end{subfigure}

  %\vspace*{.10cm}

%%%%%% MAX X  
  \begin{subfigure}[c]{.14\linewidth}
    \centering
    \scalebox{0.9}{$\opn{max}(X)$}
  \end{subfigure}
%  \begin{subfigure}[c]{.14\linewidth}
%    \centering\includegraphics[width=.95\linewidth]{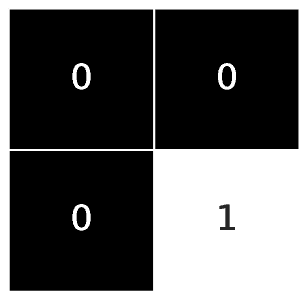}
%    \caption{\label{fig:max_A}$\opn{max}(X)=2$}
%  \end{subfigure}
  \begin{subfigure}[c]{.14\linewidth}
    \centering\includegraphics[width=.95\linewidth]{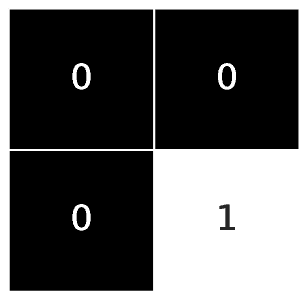}
    \caption{\label{fig:max_B}$1.6$}
  \end{subfigure}
  \begin{subfigure}[c]{.14\linewidth}
    \centering\includegraphics[width=.95\linewidth]{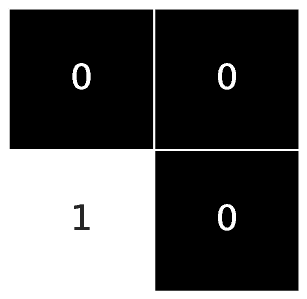}
    \caption{\label{fig:max_D}$1.6$}
  \end{subfigure}
  \begin{subfigure}[c]{.06\linewidth}
    \centering
    \hfill
  \end{subfigure}
%%%%%% LAE X t=0.5
  \begin{subfigure}[c]{.17\linewidth}
    \centering
    \scalebox{0.9}{$\opn{LAE}(X;t=\nicefrac{1}{2})$}
  \end{subfigure}
%  \begin{subfigure}[c]{.14\linewidth}
%    \centering\includegraphics[width=.95\linewidth]{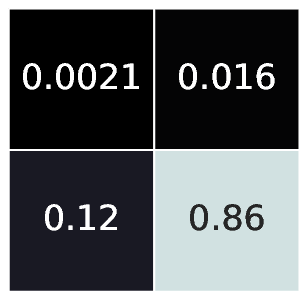}
%    \caption{\label{fig:lae_0.5_A}$\opn{LAE}=1.38$}
%  \end{subfigure}
  \begin{subfigure}[c]{.14\linewidth}
    \centering\includegraphics[width=.95\linewidth]{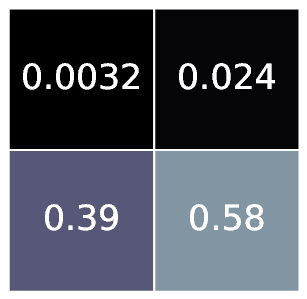}
    \caption{\label{fig:lae_0.5_B}$1.18$}
  \end{subfigure}
  \begin{subfigure}[c]{.14\linewidth}
    \centering\includegraphics[width=.95\linewidth]{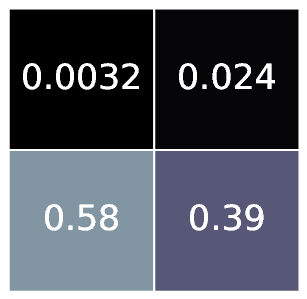}
    \caption{\label{fig:lae_0.5_D}$1.18$}
  \end{subfigure}

%%%%%% AVG X
\vspace*{-0.10cm}

  \begin{subfigure}[c]{.14\linewidth}
    \centering
    \scalebox{0.9}{$\opn{avg}(X)$}
  \end{subfigure}
%  \begin{subfigure}[c]{.14\linewidth}
%    \centering\includegraphics[width=.95\linewidth]{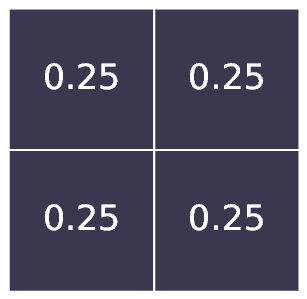}
%    \caption{\label{fig:avg_A}$\opn{avg}(X)=0.5$}
%  \end{subfigure}
  \begin{subfigure}[c]{.14\linewidth}
    \centering\includegraphics[width=.95\linewidth]{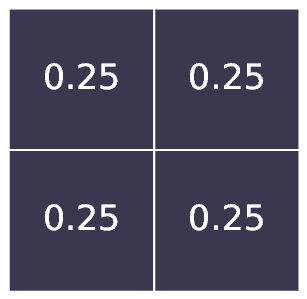}
    \caption{\label{fig:avg_B}$0.5$}
  \end{subfigure}
  \begin{subfigure}[c]{.14\linewidth}
    \centering\includegraphics[width=.95\linewidth]{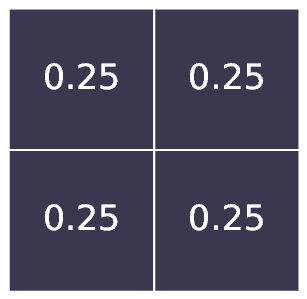}
    \caption{\label{fig:avg_D}$0.5$}
  \end{subfigure}
  \begin{subfigure}[c]{.06\linewidth}
    \centering
    \hfill
  \end{subfigure}
%%%%%%%%%%%%% LAE X t=1    COLUMN 2
 \begin{subfigure}[c]{.17\linewidth}
    \centering
    \scalebox{0.9}{$\opn{LAE}(X;t=1)$}
    %LAE $t=1$
  \end{subfigure}
%  \begin{subfigure}[c]{.14\linewidth}
%    \centering\includegraphics[width=.95\linewidth]{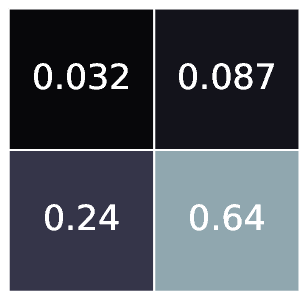}
%    \caption{\label{fig:lae_A}$\opn{LAE}=1.05$}
%  \end{subfigure}
  \begin{subfigure}[c]{.14\linewidth}
    \centering\includegraphics[width=.95\linewidth]{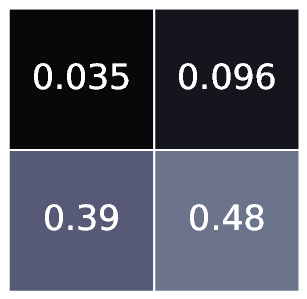}
    \caption{\label{fig:lae_B}$0.95$}
  \end{subfigure}
  \begin{subfigure}[c]{.14\linewidth}
    \centering\includegraphics[width=.95\linewidth]{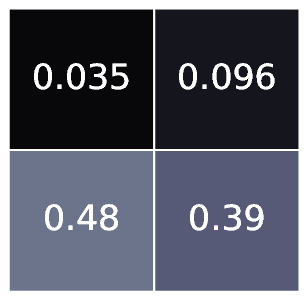}
    \caption{\label{fig:lae_D}$0.95$}
  \end{subfigure}
%%%%%% (MAX + AVG)/2 X
\vspace*{-0.10cm}

  \begin{subfigure}[c]{.13\linewidth}
    \centering
    %$\frac{\opn{max}(X) + \opn{avg}(X)}{2}$
    $\frac{\opn{max} + \opn{avg}}{2}$
  \end{subfigure}
%  \begin{subfigure}[c]{.14\linewidth}
%    \centering\includegraphics[width=.95\linewidth]{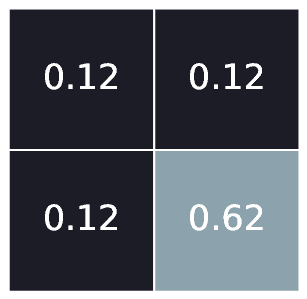}
%    \caption{\label{fig:mix_A}$\opn{pool}(X)=1.25$}
%  \end{subfigure}
  \begin{subfigure}[c]{.14\linewidth}
    \centering\includegraphics[width=.95\linewidth]{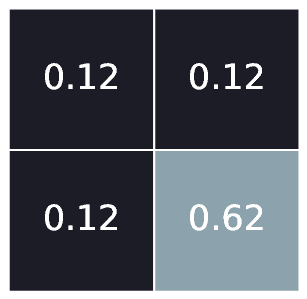}
    \caption{\label{fig:mix_B}$1.05$}
  \end{subfigure}
  \begin{subfigure}[c]{.14\linewidth}
    \centering\includegraphics[width=.95\linewidth]{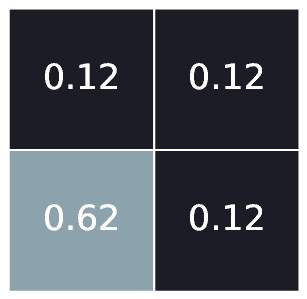}
    \caption{\label{fig:mix_D}$1.05$}
  \end{subfigure}
  \begin{subfigure}[c]{.06\linewidth}
    \centering
    \hfill
  \end{subfigure}
  %%%%%%%%%%%%%% LAE t=2 COLUMN 2
  \begin{subfigure}[c]{.17\linewidth}
    \centering
    %LAE $t=2$
    \scalebox{0.9}{$\opn{LAE}(X;t=2)$}
  \end{subfigure}
%  \begin{subfigure}[c]{.14\linewidth}
%    \centering\includegraphics[width=.95\linewidth]{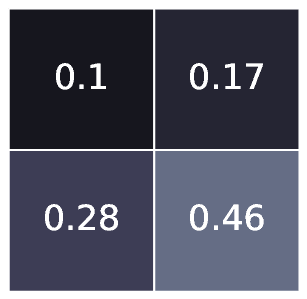}
%    \caption{\label{fig:lae_2_A}$\opn{LAE}=0.80$}
%  \end{subfigure}
  \begin{subfigure}[c]{.14\linewidth}
    \centering\includegraphics[width=.95\linewidth]{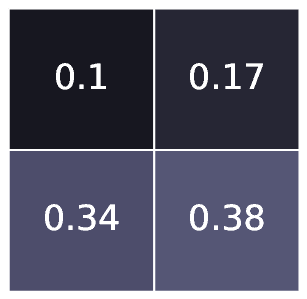}
    \caption{\label{fig:lae_2_B}$0.76$}
  \end{subfigure}
  \begin{subfigure}[c]{.14\linewidth}
    \centering\includegraphics[width=.95\linewidth]{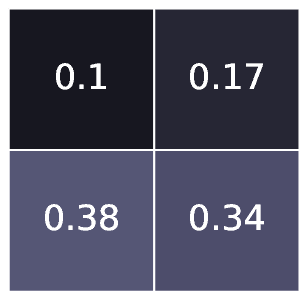}
    \caption{\label{fig:lae_2_D}$0.76$}
  \end{subfigure}

\vspace*{-0.1cm}
\caption{
The top row of the figure shows two distinct values of matrix $X$. Below each matrix $X$ are three derivatives, each a 2x2 matrix, and each corresponding to a different pooling operator $y = \opn{pool}(X)$. For example, subfigures \protect\subref{fig:max_B} and \protect\subref{fig:max_D} show that the derivatives of the max pool are completely different for the two matrices due to the swapped elements in its lower row (1.4 and 1.6), while subfigures \protect\subref{fig:avg_B} and \protect\subref{fig:avg_D} show that the derivatives of the average pool operator are identical. In contrast, the derivatives for all three of the LAE operators exemplify the continuous and distributed behaviour that we are aiming for.
The caption to each subfigure indicates the value returned by $\opn{pool}(X)$.
% %Note that the derivative of $\opn{LSE}(X)$ (not shown) is equal to that of $\opn{LAE}(X; t=1)$.
}
\label{fig:derivative-examples}

\end{figure*}

We may also add a temperature parameter to LAE, which acts to rescale the logits before applying the regular LAE operation.
This temperature is equivalent to the temperature used when sampling from a generative model (such as an LSTM); in such a case the logits returned by the network are rescaled by the temperature before performing the softargmax operation. %In both cases, a temperature $t>1$ decreases the confidence and moves the output closer to the
We define the temperature controlled variant of $\opn{LogAvgExp}$ as
\begin{align}
\opn{LogAvgExp}(\VEC{z}; t)
    =& \, t \cdot \opn{LogAvgExp}\left(\frac{\VEC{z}}{t}\right) \\
    =& \, t \, \left( \opn{LSE}\left(\frac{\VEC{z}}{t}\right) - \log(n) \right) \label{eq:lae-temp2}
.\end{align}
Henceforth, when we refer to $\opn{LogAvgExp}$ without a temperature parameter it can be assumed to be the temperature-free variant defined in \autoref{eq:lae}, which is equivalent to letting $t=1$ in \autoref{eq:lae-temp2}.

We note that the limiting cases for the temperature, $t \to 0^+$ and $t \to +\infty$,  are
\begin{align}
\lim_{t \to 0^+} \opn{LogAvgExp}(\VEC{z}; t) =& \max(\VEC{z}) = \max_{\forall i} z_i, \\
\lim_{t \to +\infty} \opn{LogAvgExp}(\VEC{z}; t) =& \opn{mean}(\VEC{z}) = \frac{1}{n} \sum_{i=1}^n z_i
.\end{align}
For a proof of these limits, see our supplementary materials.
This means that the temperature allows us to smoothly interpolate between max pooling and average pooling.

The temperature could be a predefined, fixed hyperparameter; but this does not necessarily have to be the case; it can also be a learnable parameter trained with backpropagation.%
\footnote{Derivation of $\frac{\partial}{\partial t}\opn{LogAvgExp}(\VEC{z}; t)$ is provided in the supplementary material.}
In this paper, we consider three temperature parameter variations:
(1) Omitted (fixed at $t=1$).
(2) One trainable temperature parameter per pooling layer.
(3) One trainable temperature parameter per channel.
% (4) Dynamically determined temperature as a learnable log-linear function of its operands, $\log(t) = \sigma(\VEC{w}^T\VEC{z})$, which we call Context-Aware temperature --- this variant is analogous to Gated Max-Avg pooling~\citep{generalizing-pooling}.
% (5) Dynamically determined temperature as a log-linear function of its operands, with a unique, learnable transformation per channel.

\section{Implementation}

We note that temperature is bounded below, $t \in (0, \infty)$. As temperature is multiplicative, a null hypothesis is $t=1$; \textit{a priori}, we expect temperatures of $\hat{t}$ and $\nicefrac{1}{\hat{t}}$ to be equally probable.
As our prior distribution for the temperature parameter is log-normally distributed around 1, we work with the log-temperature instead.
Our prior for $\log(t)$ is normally distributed around 0, and $\log(t)$ can take any value in the range $(-\infty, +\infty)$, making $\log(t)$ a much more well behaved parameter than $t$ during training.

We implemented $\opn{LAE}$ pooling in PyTorch v0.4.1 \citep{pytorch-paper}, with temperature parameter options as above, and implemented Mixed and Gated Max-Avg pooling \citep{generalizing-pooling} as additional benchmarks.
%Note that we did not consider the Tree pooling method from \citep{generalizing-pooling}, as this is only defined for pooling with a fixed kernel

%Our code is freely available online at REDACTED.

\section{Experimental Results}

We experimentally verified the utility of LAE pooling on the CIFAR-10 and CIFAR-100 datasets%
\footnote{Available from \url{https://www.cs.toronto.edu/~kriz/cifar.html}.} \citep{cifar-report},
and on version 1 of the Imagenette and Imagewoof datasets\footnote{Available from \url{https://github.com/fastai/imagenette}.} \citep{imagenette}.
Experiments were performed on NVIDIA GPUs: a Titan V and a number of Telsa P100s, with CUDA 9.0.

\subsubsection{Initial Results}

Initially, we trained state-of-the-art PyramidNet(depth=110, alpha=200) networks with ShakeDrop, with pre-trained Autoaugment augmentation policies, according to the training paradigm described in \citep{pyramidnet,shakedrop}: 300 epochs, SGD, batch size 128, momentum 0.9, weight decay 1e-4, initial learning rate 0.1 falling by a factor of 10 at 150 and 225 epochs.

\begin{figure}[htbp]
\centering
\begin{subfigure}[b]{0.49\textwidth}
    \centering
    \includegraphics[width=\textwidth]{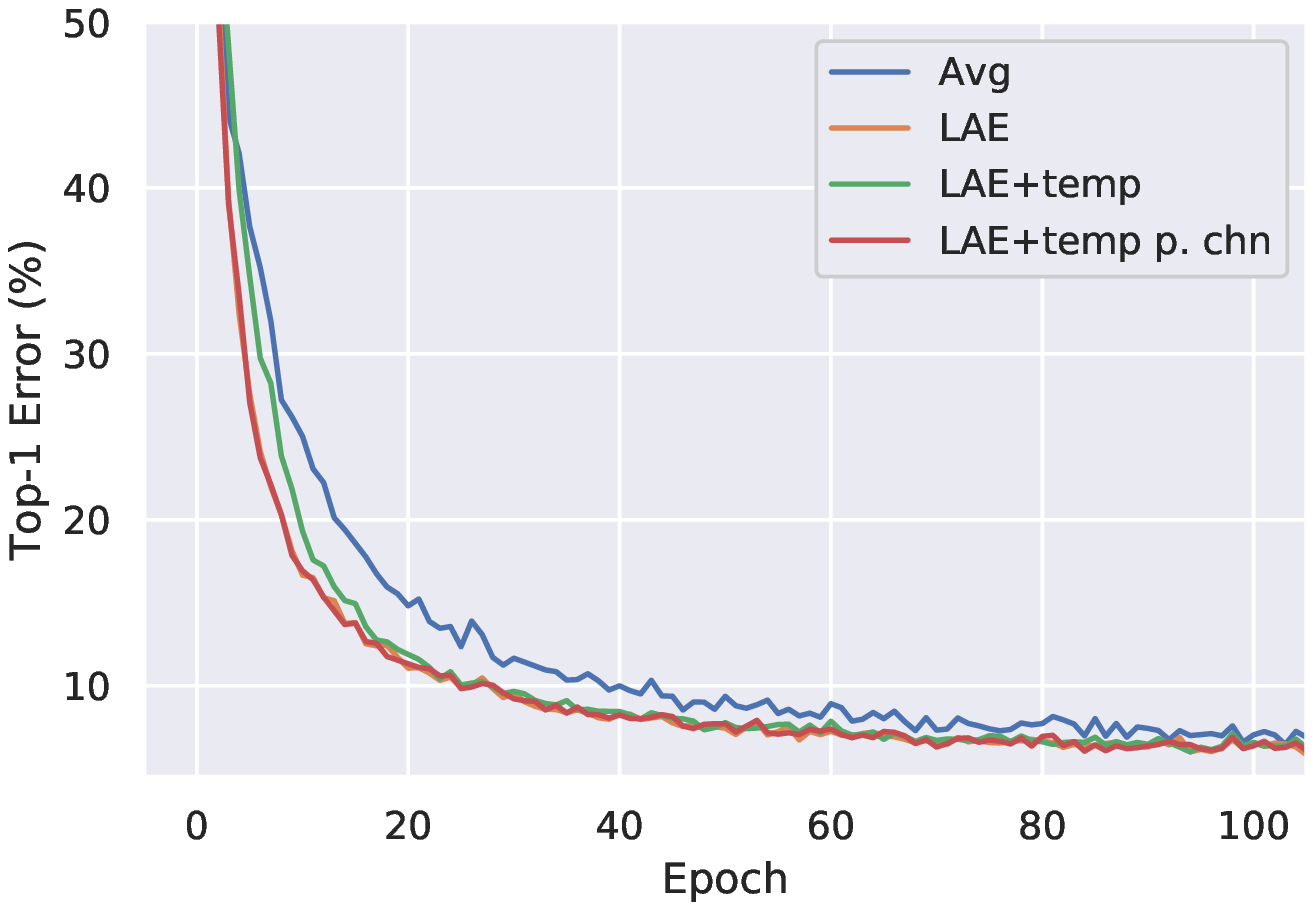}
    \caption{\label{fig:training-cifar10}CIFAR-10}
\end{subfigure}
\begin{subfigure}[b]{0.49\textwidth}
    \centering
    \includegraphics[width=\textwidth]{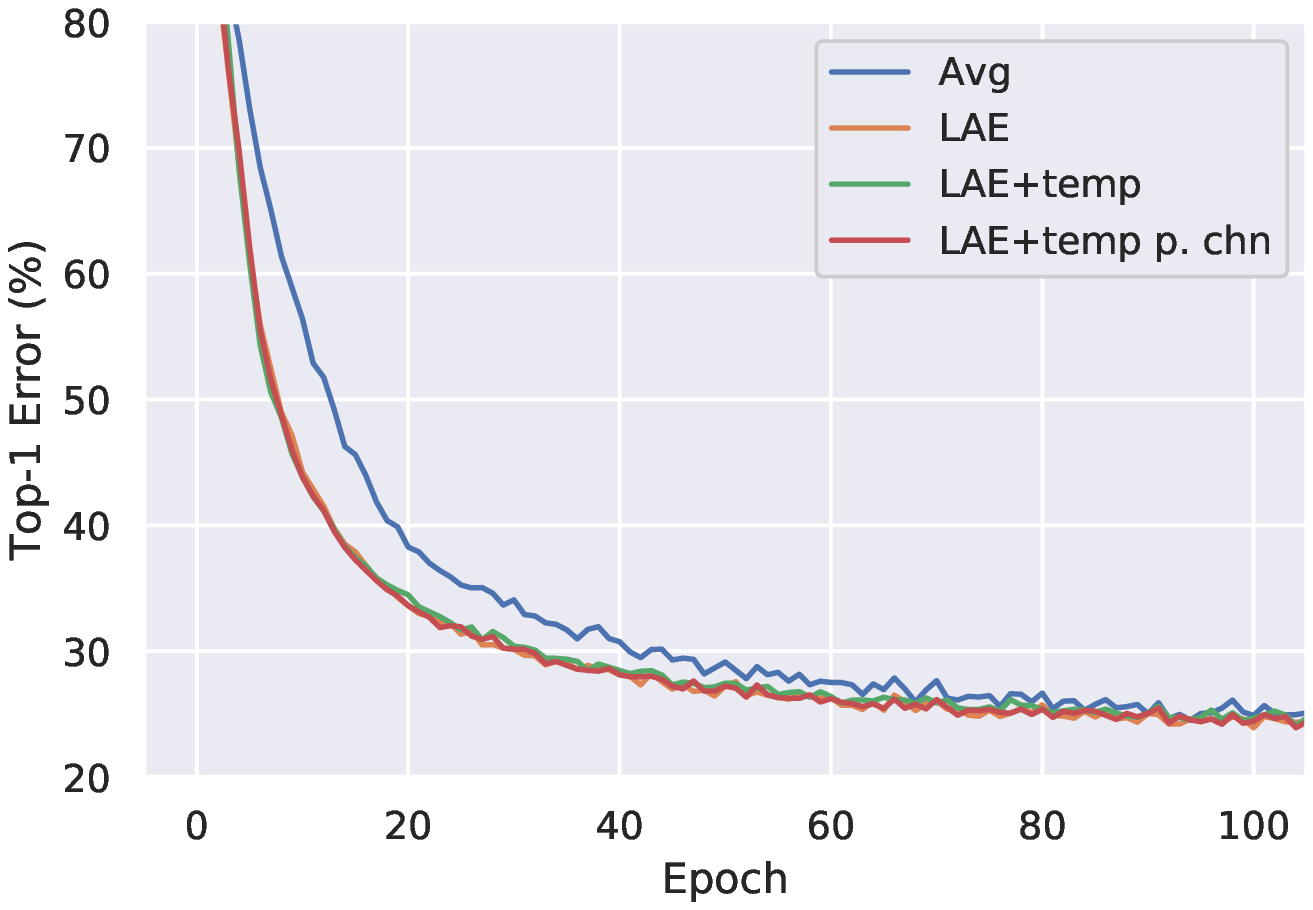}
    \caption{\label{fig:training-cifar100}CIFAR-100}
\end{subfigure}
\caption{Validation error during training of PyramidNet+ShakeDrop with initial LR 0.1. Average training curve over $n=5$ repetitions. The potential benefit of the LAE error signal over the average pooling is visible in all of the training curves, a behavior typical of all our experiments.}
\label{fig:training-curve}
\end{figure}

Despite the learning rate being the same, the effective learning rate was higher for networks using LAE for global pooling.
As shown in \autoref{fig:training-curve}, we found networks using LogAvgExp pooling learn much faster at the start of training.

Although for CIFAR-10, the top-1 accuracy was immediately higher using LAE than average pooling, for CIFAR-100 we had to re-tune the hyperparameters (away from those previously optimized for average pooling) in order to get results for LAE which were equivalent to our baseline with average pooling.

Training such a large model for so many epochs was very computationally demanding, and it was not feasible for us to do a large scale hyperparameter search on it.
Consequently, we progressed to considering smaller networks, trained for fewer epochs, with which we could more extensively explore the hyperparameter space.

%------------------------- This section *might* be dropped depending on readability
%%% dropped content
\drop{
\subsection{PyramidNet+ShakeDrop on CIFAR-10/100.}

Building on one of the leading model architectures for CIFAR-10/100, we trained a PyramidNet+ShakeDrop model \citep{pyramidnet,shakedrop}, adapting the implementation of \citep{shakedrop-code}. % implementation Masaki Yano \url{https://github.com/owruby/shake-drop_pytorch}

%For a pre-activation ResNet (PyramidNet among them), the order of operations at the end of the network is Conv+Residual-BatchNorm-ReLU-Pool-FC (we denote this ordering BRP).
%Applying the ReLU activation immediately before LAE pooling will truncate any negative elements away, changing the output of the pooling operation, and changing the derivative.

%Since PyramidNet is a pre-activation ResNet, its final operations not necessarily in the best order for pooling.
%We changed the BatchNorm-ReLU-Pool order to BatchNorm-Pool-BatchNorm-ReLU for these experiments.

\begin{table*}[htb]
\caption{
Top-1 error rates (\%) for PyramidNet+ShakeDrop.
``p. chn'' denotes a parameter or gate \textit{per channel}.
The smallest value in each column, and those deemed not significantly larger than the smallest value, are shown in bold (Welch's $t$-test, one-sided; $p<0.05$).
Each experiment was repeated with $n=5$ RNG seeds for the initialization state, and we indicate both the mean and standard deviation.
All experiments were run with the augmentation policies learnt by Autoaugment.
}
\label{tab:shakepyramid}
\centering
% \begin{tabular}{lrrrrrr}
%% sem
%\toprule
%Global Pooling Operation    & CIFAR-10              & CIFAR-100             \\
%\midrule
%Average                     & $\mbi{ 3.03}\pm 0.04$ & $\mbi{16.29}\pm 0.07$ \\
%LAE                         & $    {10.45}\pm 6.23$ & $    {17.96}\pm 0.09$ \\
%LAE + train $t$            & $    { 3.25}\pm 0.06$ & $    {19.00}\pm 0.13$ \\
%LAE + train $t$ p. chn     & $    { 3.34}\pm 0.07$ & $    {17.67}\pm 0.13$ \\
%LAE + Context temp          & $    {90.00}\pm 0.00$ & $    {92.98}\pm 2.25$ \\
%\bottomrule
%% stdev
% \begin{tabular}{lrrrrrr}
% \toprule
%                             & \multicolumn{2}{c}{CIFAR-10}                  & \multicolumn{2}{c}{CIFAR-100}                 \\
% \cmidrule(r){2-3} \cmidrule{4-5}
% Global Pooling Operation    & Initial LR=0.5        & Initial LR=0.1        & Initial LR=0.5        & Initial LR=0.1        \\
% \midrule
% Average                     & $    { 3.03}\pm 0.09$ & $    { 2.59}\pm 0.06$ & $    {16.29}\pm 0.15$ & $\mbf{14.83}\pm 0.14$ \\
% LAE                         & $    {10.45}\pm13.92$ & $\mbf{ 2.49}\pm 0.03$ & $    {17.96}\pm 0.20$ & $    {15.38}\pm 0.17$ \\
% LAE + train $t$            & $    { 3.25}\pm 0.14$ & $\mbf{ 2.56}\pm 0.12$ & $    {19.00}\pm 0.30$ & $    {15.51}\pm 0.14$ \\
% LAE + train $t$ p. chn     & $    { 3.34}\pm 0.15$ & $\mbf{ 2.39}\pm 0.11$ & $    {17.67}\pm 0.28$ & $    {15.42}\pm 0.18$ \\
% LAE + Context temp          & $    {90.00}\pm 0.00$ & $    {20.05}\pm34.98$ & $    {92.98}\pm 5.04$ & $    {15.24}\pm 0.29$ \\
\begin{tabular}{lrrr}
\toprule
                            & {CIFAR-10}            & \multicolumn{2}{c}{CIFAR-100}                 \\
\cmidrule(r){2-2} \cmidrule{3-4}
Global Pooling Operation    & Initial               & Initial               & Hyper optim.          \\
\midrule
Average                     & $    { 2.59}\pm 0.06$ & $    {14.83}\pm 0.14$ & $\mbf{14.42}\pm 0.10$ \\
LAE                         & $\mbf{ 2.49}\pm 0.03$ & $    {15.38}\pm 0.17$ & $\mbf{14.28}\pm 0.13$ \\
LAE + train $t$            & $\mbf{ 2.56}\pm 0.12$ & $    {15.51}\pm 0.14$ & \\
LAE + train $t$ p. chn     & $\mbf{ 2.39}\pm 0.11$ & $    {15.42}\pm 0.18$ & $\mbf{14.30}\pm 0.14$ \\
% LAE + Context temp          & $    {20.05}\pm34.98$ & $    {15.24}\pm 0.29$ \\
\bottomrule
\end{tabular}
\end{table*}

\drop{
\begin{figure}[htbp]
\centering
\begin{subfigure}[b]{0.49\textwidth}
    \centering
    \includegraphics[width=\textwidth]{epoch_fig/epoch_curve_cifar10_shake_pyramidnet_bpbr_lr1.eps}
    \caption{\label{fig:training-cifar10}CIFAR-10}
\end{subfigure}
\begin{subfigure}[b]{0.49\textwidth}
    \centering
    \includegraphics[width=\textwidth]{epoch_fig/epoch_curve_cifar100_shake_pyramidnet_bpbr_lr1.eps}
    \caption{\label{fig:training-cifar100}CIFAR-100}
\end{subfigure}
\caption{Validation error during training of PyramidNet+ShakeDrop with initial LR 0.1. Average training curve over $n=5$ repetitions. The potential benefit of the LAE error signal over the average pooling is visible in all of the training curves, a behavior typical of all our experiments.}
\label{fig:training-curve}
\end{figure}
}

We trained a PyramidNet(depth=110, alpha=200) with ShakeDrop, with pre-trained Autoaugment augmentation policies, according to the training paradigm described in \citep{shakedrop}: 300 epochs, SGD, batch size 128, momentum 0.9, weight decay 1e-4, initial learning rate 0.5 falling by a factor of 10 at 150 and 225 epochs.
We found this worked for the average pooling, but the learning rate was too high for LAE pooling, with many runs diverging.
Consequently, we repeated the experiment with reduced initial learning rate of 0.1 (also previously used in \citep{pyramidnet}).
The reduced learning rate was better across the board, yielding an increase in performance for all models on both datasets.

For CIFAR-10, we achieved a top-1 error of $2.39\pm 0.11$\% with the \textit{LAE + temp per chn} pooling.
% However, the reduced learning rate was still too high to guarantee convergence of \textit{LAE + context-aware temp}; further improvements could be made with a hyperparameter search.
For CIFAR-100, lower top-1 error was attained with average pooling than with LAE pooling.

The top-1 validation error for the first 100 epochs of training on CIFAR-10/100 is shown in \autoref{fig:training-curve}.
This illustrates a finding common across all our experiments: when using any variant of LAE pooling, the network learns faster, for the same learning rate, than when using average pooling.

Since training networks with LAE pooling appeared to induce a higher effective learning rate, and the hyperparameters we were using were optimized for a network with average global pooling, we performed a small amount of manual fine-tuning of the hyperparameters: learning rate, learning rate decay factors, momentum, and weight decay.
As LAE is itself non-linear, the ReLU at the tail of the network may not be necessary for networks with LAE pooling, so we also replaced it with a Parameterized-ReLU (PReLU) \cite{prelu}, with which the network can learn to apply ReLU or the identity function if appropriate.
Changing the tail activation function to PReLU improved performance of average pooling as well as LAE pooling.
After performing these small optimizations to the network and training paradigm, we observed that both LAE pooling models we optimized now beat average pooling by a small margin (not significant).
(For fair comparison, we used the same techniques to try to improve on the baseline average pooling model.)

%During the course of this hyperparameter optimization, the global pooling model benefited from halving the weight decay, and LAE benefited from increasing the weight decay by a factor of 8.

%The 16x difference in optimal weight decay means that {\bf{LAE models have a smaller generalization margin} --- the gap between training and validation accuracy is {reduced by 22\%}}.

Training such a large model for so many epochs was very computationally demanding, and it was not feasible for us to do a large scale hyperparameter search on it.
Consequently, we progressed to considering smaller networks, trained for fewer epochs, with which we could more extensively explore the hyperparameter space.
}
%%% end dropped content

\subsection{WRN-18-6 on CIFAR-10/100.}
\label{sec:resnet}

We trained an 18-layer Wide ResNet, WRN-18-6 \citep{resnet,wrn} network, whose final ReLU activation function at the end of each block was replaced with a Parametrised-ReLU on CIFAR-10 and CIFAR-100.
Our implementation was based on that of \citep{hoffer-code,hoffer-fix}.
We substituted the global pooling operator from average pooling (original) to our LAE methods as described above.

\begin{table*}[htb]
\caption{
Top-1 error rates (\%) for WRN-18-6.
``p. chn'' denotes a parameter or gate \textit{per channel}.
The smallest value in each column, and those deemed not significantly larger than the smallest value, are shown in bold (Mann-Whitney rank $U$ test, two-sided; $p<0.05$).
Each experiment was repeated with $n=30$ seeds for the initialization state, and we indicate both the mean and standard deviation.
%All experiments were run with the augmentation policies learnt by Autoaugment.
}
\label{tab:wrn-18-6}
\centering
%
% n=10
%\begin{tabular}{lrr}
%\toprule
%Global Pooling Operation            & {CIFAR-10}            & CIFAR-100             \\
%\midrule
%Average                             & $    { 5.26}\pm 0.19$ & $    {22.19}\pm 0.25$ \\
%Mixed + trainable $\alpha$ p. chn   & $    { 5.19}\pm 0.13$ & $    {22.77}\pm 0.24$ \\
%Mixed + Gated $\alpha$ p. chn       & $\mbf{ 5.05}\pm 0.24$ & $    {22.07}\pm 0.24$ \\
%LAE                                 & $\mbf{ 5.08}\pm 0.12$ & $    {22.09}\pm 0.21$ \\
%LAE + train $t$                    & $\mbf{ 5.01}\pm 0.11$ & $\mbf{21.48}\pm 0.26$ \\
%LAE + train $t$ p. chn             & $\mbf{ 5.08}\pm 0.13$ & $    {22.56}\pm 0.32$ \\
%\bottomrule
%\end{tabular}
%
% n=30
\begin{tabular}{lrr}
\toprule
Global Pooling Operation            & {CIFAR-10}            & CIFAR-100             \\
\midrule
Average                             &\, $    { 5.26}\pm 0.15$ &\, $    {22.16}\pm 0.22$ \\
Mixed + trainable $\alpha$ p. chn   &\, $    { 5.15}\pm 0.11$ &\, $    {22.71}\pm 0.28$ \\
Mixed + Gated $\alpha$ p. chn       &\, $\mbi{ 5.00}\pm 0.16$ &\, $    {22.12}\pm 0.26$ \\
LAE                                 &\, $    { 5.06}\pm 0.16$ &\, $    {22.31}\pm 0.21$ \\
LAE + train $t$                    &\, $\mbf{ 4.98}\pm 0.12$ &\, $\mbf{21.55}\pm 0.26$ \\
LAE + train $t$ p. chn             &\, $    { 5.06}\pm 0.11$ &\, $    {22.54}\pm 0.29$ \\
\bottomrule
\end{tabular}
\end{table*}

To optimize the training hyperparameters of learning rate $\rho$, weight decay $\lambda$, and momentum $\mu$, we performed a hyperparameter search using a methodology similar to that described in \citep{howtotrainresnet}.
We performed our hyperparameter optimization routine with resolution $r=1.6$.
The process consisted of multiple rounds in which we change either:
(1) learning rate $\rho$ scaled up/down by factor $r$;
(2) weight decay scaled up/down by $r$ and learning rate simultaneously scaled inversely;
(3) momentum changed such that $(1 - \mu)$ is scaled up/down by $r$ and learning rate is simultaneously scaled similarly.
This process assumes that $[\rho,\, \rho\lambda,\, \nicefrac{\rho}{1-\mu}]$ form an independent basis along which the hyperparameters can be explored.
We initialised the hyperparameters with $\rho=0.4$, $\lambda=0.0004$, $\mu=0.9$, $r=1.6$, and chose initial search directions (increase/decrease) at random.
For each step, we compared the results of $n=5$ random cross-validation folds of the training data (80:20 split) for the current and candidate hyperparameters.

During cross-validation for hyperparameter optimization, we trained for 60 epochs on 80\% of the training data and evaluated on the remaining 20\%.
For final model evaluation, we trained the network for 48 epochs on the entire training data partition data, with a mini-batch size of 256.
The learning rate schedule was a linear ramp up from 0 to $\rho$ for 6 epochs, followed by a linear ramp down to 0 over the subsequent 42 epochs.
The data was augmented during training using the pre-trained augmentation policies learned by Autoaugment \citep{autoaugment}.

As shown in \autoref{tab:wrn-18-6}, we found that LAE with a single trainable temperature parameter was consistently the best global pooling method on both CIFAR-10 and 100, out-performing average pooling by a statistically significant margin.

\subsection{XResNet on Imagenette and Imagewoof}
\label{sec:imagenette}

We ran further experiments on Imagenette and Imagewoof\footnote{Available from \url{https://github.com/fastai/imagenette}.} \citep{imagenette}, using version 1 of the train/val partitions.
These two datasets are each a subset of the Imagenet dataset \citep{imagenet}, comprising 10 of its classes: Imagenette contains 10 dissimilar classes, whereas Imagewoof contains 10 breeds of dog.
They are intended to facilitate rapid development without reducing the complexity of the classification task compared with Imagenet (as they are only 1\% of the size of Imagenet, but full-scale input images).
Hence, we choose to run our experiments on Imagenette and Imagewoof so we could robustly optimize the training hyperparameters, which would not have been possible if training on the full Imagenet.

\begin{table*}[htb]
\caption{
Top-1 accuracy rates (\%) for MXResNet trained on imagenette and imagewoof.
``p. chn'' denotes a parameter or gate \textit{per channel}.
The largest value in each column, and those deemed not significantly smaller than the largest value, are shown in bold (Mann-Whitney rank $U$ test, two-sided; $p<0.05$).
}
\label{tab:imagenette}
\centering
\begin{tabular}{lrrrr}

%\toprule
%                            & \multicolumn{2}{c}{Imagenette}                 & \multicolumn{2}{c}{Imagewoof}                \\
%\cmidrule(r){2-3} \cmidrule{4-5}
%Global Pooling Operator     & 128\,px, 5 ep      & 256\,px, 5 ep    & 128\,px, 5 ep     & 256\,px, 5 ep     \\
%% Global Pool               & imagenette128         & imagenette256         & imagewoof128          & imagewoof256          \\
%\midrule
%% RangerMish \cite{rangermish} (SOTA) & $ {89.48}\qquad $ & $  {90.36}\qquad $ & $ 	 74.97\qquad   $ & $     74.92 \qquad  $ \\
%Average                      & $\mbf{89.64}\pm 0.82$ & $\mbf{90.86}\pm 0.73$ & $\mbf{75.20}\pm 1.08$ & $    {74.58}\pm 1.32$ \\
%Mixed + train $\alpha$ p. chn& $    {89.06}\pm 0.52$ & $    {90.22}\pm 0.59$ & $    {72.08}\pm 0.82$ & $    {73.80}\pm 1.63$ \\
%LAE                          & $\mbf{89.84}\pm 0.32$ & $\mbf{90.68}\pm 0.95$ & $\mbf{74.12}\pm 1.37$ & $\mbf{76.26}\pm 1.46$ \\
%LAE + train $t$             & $\mbf{89.30}\pm 0.94$ & $\mbf{90.64}\pm 0.81$ & $\mbf{74.34}\pm 1.08$ & $\mbf{76.02}\pm 0.86$ \\
%LAE + temp p. chn            & $\mbf{89.48}\pm 0.72$ & $\mbf{91.08}\pm 0.89$ & $\mbf{74.14}\pm 1.08$ & $\mbf{76.18}\pm 1.61$ \\
%LAE($t_0=8$) + temp p. chn   & $\mbf{89.88}\pm 0.64$ & $\mbf{91.10}\pm 0.60$ & $\mbf{75.24}\pm 0.80$ & $\mbf{75.97}\pm 0.66$ \\
%% Note: I accidentally ran the Average Imagenette 128 experiments with suboptimal hyperparameters. Results shouldn't change much, as they were only the second best instead of the best hyperparameters.
%\bottomrule
%
\toprule
                            & \multicolumn{2}{c}{Imagenette}                 & \multicolumn{2}{c}{Imagewoof}                \\
\cmidrule(r){2-3} \cmidrule{4-5}
Global Pooling Operator     & 128\,px, 5 ep      & 256\,px, 5 ep    & 128\,px, 5 ep     & 256\,px, 5 ep     \\
\midrule
Avg                             &\,$\mbf{89.45}\pm 0.84$&\,$    {90.44}\pm 0.71$&\,$\mbf{75.20}\pm 1.02$&\,$    {74.90}\pm 1.10$ \\
Mixed + train $\alpha$ p. chn   &\,$    {89.07}\pm 0.63$&\,$    {90.04}\pm 0.80$&\,$    {72.34}\pm 1.22$&\,$    {73.54}\pm 1.47$ \\
LAE($t_0=1$)                    &\,$\mbf{89.66}\pm 0.70$&\,$\mbf{90.83}\pm 0.69$&\,$    {73.87}\pm 1.44$&\,$\mbf{76.14}\pm 1.25$ \\
LAE($t_0=4$)                    &\,$\mbf{89.68}\pm 0.98$&\,$\mbf{90.62}\pm 0.69$&\,$\mbf{74.79}\pm 1.07$&\,$\mbf{76.27}\pm 1.27$ \\
LAE($t_0=1$) + train $t$        &\,$\mbf{89.61}\pm 0.90$&\,$\mbf{90.85}\pm 0.72$&\,$    {74.08}\pm 1.43$&\,$\mbf{76.06}\pm 1.32$ \\
LAE($t_0=4$) + train $t$        &\,$\mbf{89.62}\pm 0.64$&\,$\mbf{90.71}\pm 0.83$&\,$\mbf{74.97}\pm 1.15$&\,$    {75.78}\pm 1.11$ \\
LAE($t_0=1$) + train $t$ p. chn &\,$    {89.21}\pm 0.89$&\,$\mbf{90.71}\pm 0.78$&\,$    {73.96}\pm 1.41$&\,$\mbf{76.19}\pm 1.28$ \\
LAE($t_0=4$) + train $t$ p. chn &\,$\mbf{89.38}\pm 0.80$&\,$    {90.27}\pm 0.74$&\,$\mbf{75.27}\pm 1.18$&\,$\mbf{75.83}\pm 1.23$ \\
\bottomrule
\end{tabular}
\end{table*}

We tested LAE by adding it to the current state-of-the-art architecture on both datasets, \citep{rangermish}.
Following \citep{rangermish}, we used an XResNet network \citep{xresnet,bag-of-tricks} with Mish activation function \citep{mish}.
The network also had one layer of self-attention \citep{selfattention}, located at the start of the last residual block in the first group.
Using this base network, we compared the effect of changing the global pooling operator.

During training, the datasets were augmented with standard Imagenet image reflection/resizing/cropping.
The network was trained using the Ranger optimizer \citep{rangermish}, which combines RAdam \citep{RAdam} with LookAhead \citep{lookahead-opt}, using a mini-batch size of 64.
The learning rate was held constant at its initial value, $lr$, for some fraction, $a_0$, of the training epochs, after which the learning rate was annealed to zero using a cosine schedule.
The hyperparameters $lr$ and $a_0$ were optimized through random search, along with the weight decay coefficient $\lambda$, and the optimizer hyperparameters momentum $\mu$, alpha $\alpha$, and epsilon $\epsilon$.
The log-temperature parameter and mixing factor were excluded from the weight decay process.

We performed independent hyperparameter searches for Imagenette and Imagewoof, both at 128\,px and 256\,px input size.
To prevent overfitting to the test set, we partitioned the training set 80:20 to create a validation set for the hyperparameter search.
Our search was initially centered at $lr=0.004$, $mom=0.95$, $\alpha=0.99$, $\epsilon=10^{-6}$, $wd=0.02$, $a_0=0.72$, with $lr$, $(1-mom)$, $(1-\alpha)$, $\epsilon$, $\lambda$ sampled logarithmically and varying by x10 in each direction, and $a_0$ sampled linearly $0.5 < a_0 < 0.85$.
For the hyperpameter optimization routine, we trained the network for 6 epochs of its 80\% subset of the training data, so the total number of optimization steps was held approximately constant.
After some number of random samples (imagewoof128: 200, imagenette128: 100, 256\,px: 40), we refined our search to the span of the top-k hyperparameters by validation accuracy (128\,px: $k=10$, 256\,px: $k=5$), and resampled our train/val split.
After another set of random samples (128\,px: $n=100$, 256\,px: $n=50$), we refined our search once more, and generated another set of random samples (128\,px: $n=100$, 256\,px: $n=50$).
We selected the top-5 hyperparameter samples by accuracy, plus the (geometric) mean of the hyperparameters the top-5 and top-10 samples, and measured the performance of each with $n=5$ cross-validation folds (80:20) of the training set.
Finally, for each global pooling operator we selected the hyperparmeters with the highest cross-validation accuracy, and measured their performances on the test set. % for $n=40$ random initial states.
% Too detailed? Can we just say "random search" and not go into the step by step?

As shown in \autoref{tab:imagenette}, we found LAE pooling methods significantly outperformed average pooling on Imagewoof at 128\,px and 256\,px, and on Imagenette 128\,px.
There was no statistical difference between average pooling and LAE pooling options on Imagenette 256\,px.
We found there was generally no significant difference in performance between the LAE pooling options we tried.
% Our results set a new state-of-the-art across all four benchmark categories we attempted.

\subsubsection{Sensitivity to Input Resolution}

One of the advantages of using global pooling in a convolutional network is the same architecture can be used for a variety of different input sizes.
This is contrary to CNN architectures which use a fully-connected layer without global pooling (such as LeNet), which can only accept inputs of a fixed size.

For CNNs using global pooling, a change in the input size results in a change in the size of the latent space upon which the global pooling operator is applied.
Since the global pooling operator accepts an input of any size, and its output is the same spatial size, this can be handled by the network.
However, the choice of global pooling operator will change how the network behaves when the input changes size.

Consequently, we explored performance on a few size-related distortions of the input.
% Of course, the performance of the network will degrade as the input diverges from the training distribution.
Specifically we compared robustness of the pooling operators on zooming and cropping (each of which occurs to some degree in the training data augmentation), and zero-padding (which is not seen at all in training).

\begin{figure}[htbp]
\centering
\begin{subfigure}{0.45\textwidth}
    \includegraphics[width=\textwidth]{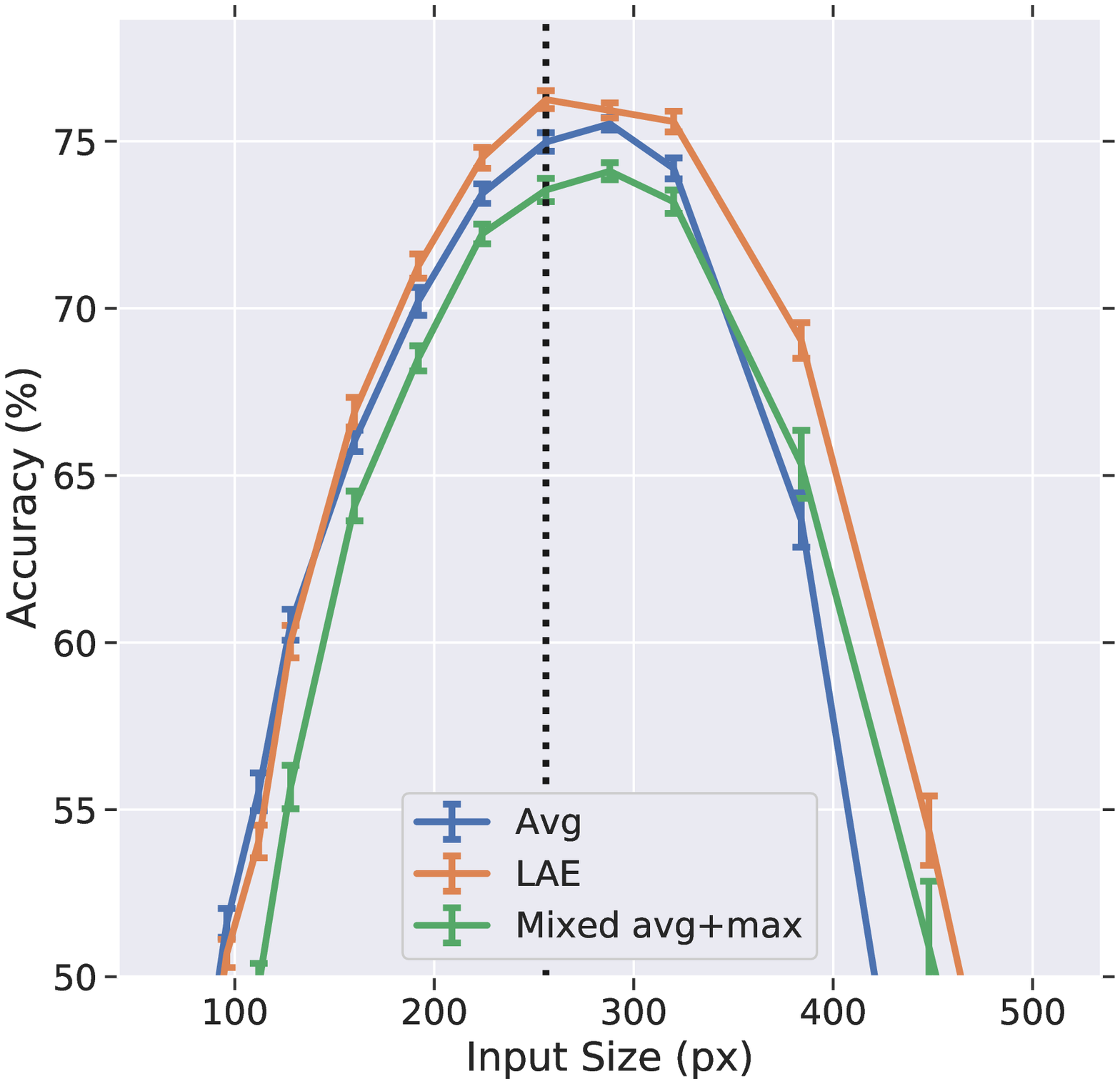}
    \caption{\label{fig:woof256-zoom}Stretch input to size}
\end{subfigure}
\begin{subfigure}{0.45\textwidth}
    \includegraphics[width=\textwidth]{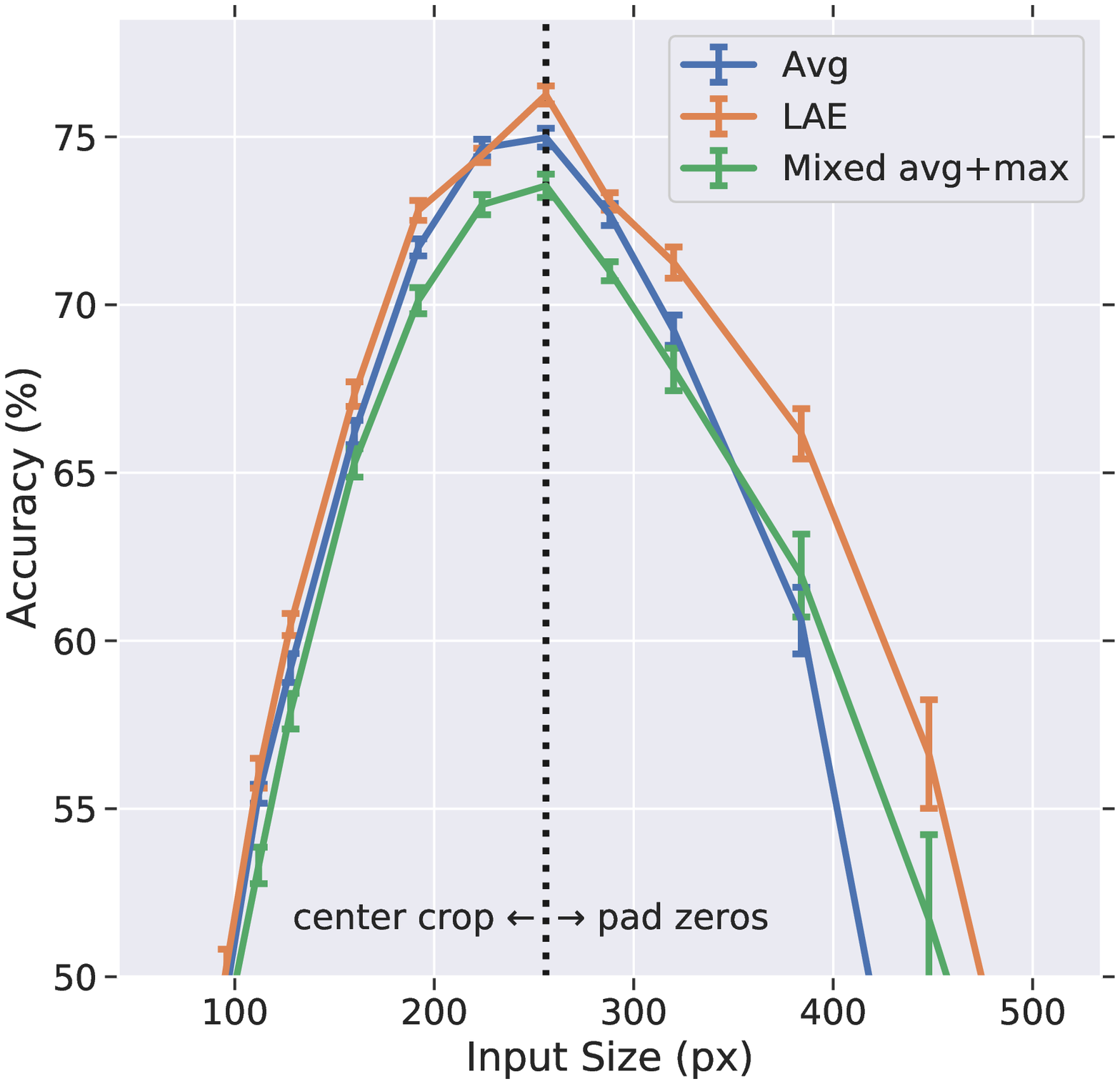}
    \caption{\label{fig:woof256-crop-pad}Crop or pad input to size}
\end{subfigure}
\caption{\label{fig:distort}The effect of changing the input resolution on the validation performance, with different global pooling operators.
Experiments were performed on Imagewoof, with networks trained on 256\,px inputs (black dotted line).
}
\end{figure}

In \autoref{fig:woof256-zoom} we change the size of validation images by zooming in and out on the Imagewoof 256\,px dataset, and evaluate the effect on the performance of the network.
We see that LAE pooling allows the network to generalise better to different input resolutions.

In \autoref{fig:woof256-crop-pad}, we use center cropping to reduce image size, and zero-padding to increase image size, and again observe a similar effect.

Similar behaviour was also observed in experiments on Imagenette, and for networks trained on 128\,px input sizes (see supplementary materials).

\subsubsection{Initial vs Final Temperature}

We inspected the distribution of LAE temperature values learned by the model, when using either a single common temperature across all channels, or an independent temperature per channel.
This was performed for a range of different initial temperatures.

\begin{figure}[htbp]
\centering
\includegraphics[width=0.95\textwidth]{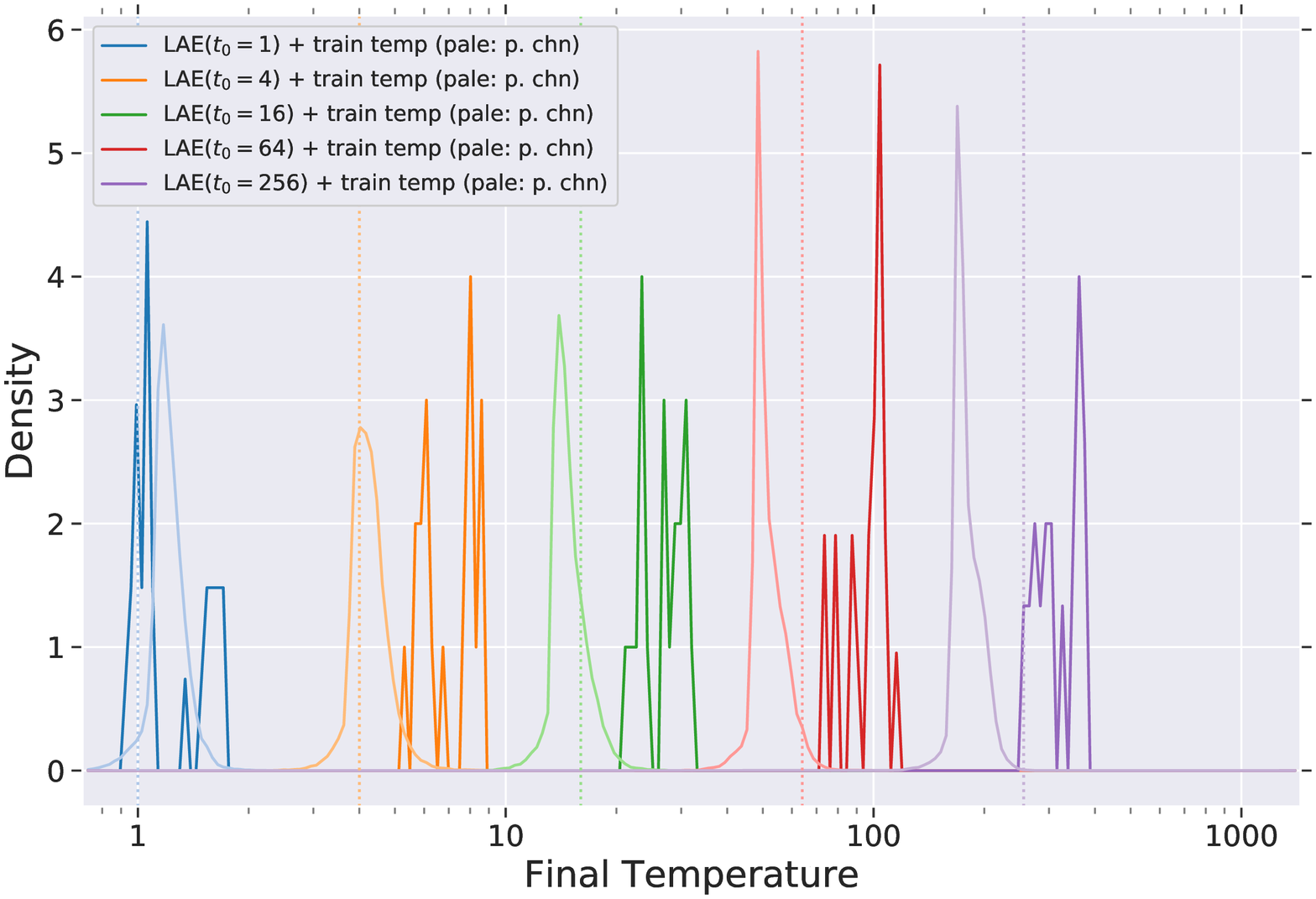}
\caption{\label{fig:final-temp}Distribution of final temperature after training with different initial temperatures.
In dark shades, LAE networks had a single trainable temperature parameter; in light shades a trainable temperature per channel.
Dotted lines indicate the initial temperature values.
Experiments were performed 10 times on each of imagenette and imagewoof, 128px, with networks trained for 25 epochs of random 80\% subsets of the training partition.
}
\end{figure}

We found that the temperatures did not converge to a value common to all initial temperatures.
This indicates the temperature parameter does not change very rapidly, and one may benefit from using a larger learning rate for the temperature parameter than for the network weights.
Additionally, this means there is some importance to the choice of initial temperature, even when the temperature is a trainable parameter.

When using a single temperature parameter, the temperature tended to increase, though not always when $t_0=1$ and not by much when $t_0=256$.
When using a temperature parameter per channel, the temperature tended to increase when $t_0\le4$ and decrease when $t_0\ge8$ (not shown), suggesting the optimal temperature lies in the range $4<t<8$.
Consequently, we ran experiments with $t_0=4$, as well as with $t_0=1$, the results of which are included in \autoref{tab:imagenette}.

\subsubsection{Floating Point Precision}

The LAE operation requires both logarithmic and exponentiation steps, which can be problematic due to the possibility of numeric instability with either overflow or underflow.
We used the log-sum-exp trick to compute LAE in a stable manner, limiting the impact of this problem.
To ensure that numeric instability was not still impinging on the performance of the network, we tried training LAE networks with half (FP16), single (FP32), and double (FP64) floating point precision for the LAE operation, with a range of temperature values.
Because the logits are divided by the temperature before computing LAE and multiplied by it afterwards, a large temperature value can exasperate any problems with stability.

We found there was discernable difference in performance between LAE with single or double precision across the whole range of temperatures we considered ($t\le1024$).
Using half precision did not negatively impact performance for $t<32$, but for $t>32$ performance was hindered due to numeric underflow in the derivatives.
The effect is illustrated in our supplementary materials.

\section{Squeeze and Excitation Global Pooling}
\label{sec:squeeze}

Another application where global pooling is commonly utilized is in Squeeze and Excitation (SE) blocks \citep{squeeze-and-excitation}.
With SE, feature activations are spatially integrated to create a bank of global activation features, which is then used to create an attention-like vector to modulate channels across all space.
In the original paper, the global squeezing of features is performed with an average pooling operation, though the authors speculated that this basic operation could be improved.

\begin{table*}[htb]
\caption{
Top-1 error rates (\%) for WRN-18-6 with Squeeze and Excitation.
``p. chn'' denotes a parameter or gate \textit{per channel}.
The smallest value in each column, and those deemed not significantly larger than the smallest value, are shown in bold (Mann-Whitney rank $U$ test, two-sided; $p<0.05$).
Each experiment was repeated with $n=30$ seeds for the initialization state, and we indicate both the mean and standard deviation.
% All experiments were run with the augmentation policies learnt by Autoaugment.
}
\label{tab:wrn-18-6_se}
\label{tab:se}
\centering
\begin{tabular}{llrr}
\toprule
Global Pooling Operation            &\, SE Pooling Operation              &\, {CIFAR-10}            &\, CIFAR-100             \\
\midrule
Average                             &\, Average                           &\, $    { 5.07}\pm 0.12$ &\, $    {21.92}\pm 0.20$ \\
Average                             &\, LAE                               &\, $\mbf{ 4.91}\pm 0.14$ &\, $    {21.94}\pm 0.30$ \\
Average                             &\, LAE + train $t$                  &\, $\mbf{ 4.98}\pm 0.15$ &\, $\mbf{21.71}\pm 0.28$ \\
Average                             &\, LAE + train $t$ p. chn           &\, $\mbf{ 4.89}\pm 0.19$ &\, $\mbf{21.79}\pm 0.31$ \\
LAE                                 &\, LAE                               &\, $    { 5.01}\pm 0.12$ &\, $    {21.99}\pm 0.24$ \\
LAE + train $t$                    &\, LAE + train $t$                  &\, $\mbf{ 4.95}\pm 0.18$ &\, $    {22.32}\pm 0.28$ \\
LAE + train $t$ p. chn             &\, LAE + train $t$ p. chn           &\, $    { 5.04}\pm 0.15$ &\, $    {22.63}\pm 0.22$ \\
\bottomrule
\end{tabular}
\end{table*}

We ran experiments with SE blocks added to the WRN-18-6 network described in \autoref{sec:resnet}, and with hyperparameters also optimized in the same manner.
As detailed in \autoref{tab:se}, we find that using LAE for SE pooling gives better performance than using average pooling --- provided one is using average pooling for the global layer pooling at the penultimate layer of the network.
When the global pooling is LAE, we typically do not find any gain in adding SE (comparing to \autoref{tab:wrn-18-6}).

\section{Discussion}

We described a new pooling operation, LogAvgExp (LAE), which is theoretically grounded as an \boolop{or}-operator for logits, including a correction for the number of operands.
By introducing a temperature parameter, LAE can smoothly interpolate between max and mean pooling.
This temperature can be a static model parameter, or a trainable component of the network.
If it is trainable, we can consider a single temperature parameter across all channels, a temperature parameter per channel, or a context-aware temperature determined from the activations within the pooling kernel.

Having tested against a range of models, we found that LAE pooling generally outperforms simpler pooling methods, such as average pooling. %For example, as visible in \autoref{tab:shakepyramid}, the results on CIFAR-10 are essentially SOTA relative to other networks of comparable size...
We expect the better credit assignment of LAE pooling (\autoref{fig:derivative-examples}) over both average and maximum pooling to lead to a stronger learning signal.
Indeed, we find that LAE pooling consistently learns faster at the early stages of training (see~\autoref{fig:training-curve}).
However, since the effective learning rate is higher when using LAE pooling compared with average pooling, the optimal hyperparameters will be different for models with the different global pooling operators.
If you have already optimized the hyperparameters for a network using average pooling, you are unlikely to see a benefit simply by swapping out the pooling operator, certainly for LAE with the default temperature of 1.
But if you choose LAE for the global pooling before optimizing the hyperparameters of the network, you are likely to benefit from doing so.
% And even without this, you may still be able to see benefits if you use LAE with a higher initial temperature of at least 8 or 16.
We recommend using LAE with an initial temperature of $t_0=4$, and letting the temperature be a trainable model parameter (either a single temperature value common to all channels, or a temperature parameter per channel).
% Furthermore, this suggests that LAE has potential to benefit significantly from better per-architecture tuning of hyperparameters.
% For fair comparison we left these untouched so far --- corresponding to conservative estimates --- but initial results with smaller learning rates for LAE do indicate even better performance. 
%As we have not optimized the hyperparameters such as learning rate on a per architecture basis, it is likely that most of the difference in performance between networks with LAE pooling versus average pooling comes down to the difference in effective learning rate between them.
%During training, we found that while final convergence times are comparable, LAE pooling initially learnt faster than average pooling.

Additionally, we investigated the utility of LAE as a pooling operator for Squeeze and Excitation blocks.
We found it performed better than average pooling, though the benefits did not appear to stack when using LAE pooling for both the network's penultimate global pooling operation and the SE pooling.
We recommend using LAE for SE blocks.

\subsubsection*{Acknowledgments}

Resources used in preparing this research were provided, in part, by the Province of Ontario, the Government of Canada through CIFAR, and companies sponsoring the Vector Institute \url{https://vectorinstitute.ai/partners/}, and in part by DeepSense \url{https://deepsense.ca/}.
Additionally, we gratefully acknowledge the support of NVIDIA Corporation with the donation of the Titan Xp GPU used for this research.

\FloatBarrier

\bibliographystyle{splncs04}
\bibliography{main}

In this supplementary material, we provide the following:
\begin{itemize}
\item validation results for LogAvgExp with varying initial temperature;
\item effect on performance of using different floating point precision for the LogAvgExp operation;
\item results with input resolutions different to that of training, for Imagenette and Imagewoof datasets;
\item the final hyperparameters used in the experiments, as discovered by our hyperparameter search;
\item proofs for bounds and limits of LogAvgExp parameterised by temperature as claimed in the main body of the text;
\item derivation of partial derivatives of LogAvgExp, with respect to $z_i$ and $t$.
\end{itemize}

% ====================================================================
\section{Impact of initial temperature on performance}
\label{sec:initial-temp}

We measured the impact of changing the initial LAE temperature on the final performance of the network, for the Imagenette and Imagewoof, 128 and 256 pixels resolution (5 epochs) benchmarks.
Experiments were performed on random 80/20 cross-validation folds of the training set, and trained for 6 epochs on the training subpartition.

\begin{figure}[htbp]
\centering
\begin{subfigure}[b]{0.48\textwidth}
    \centering
    \includegraphics[width=\textwidth]{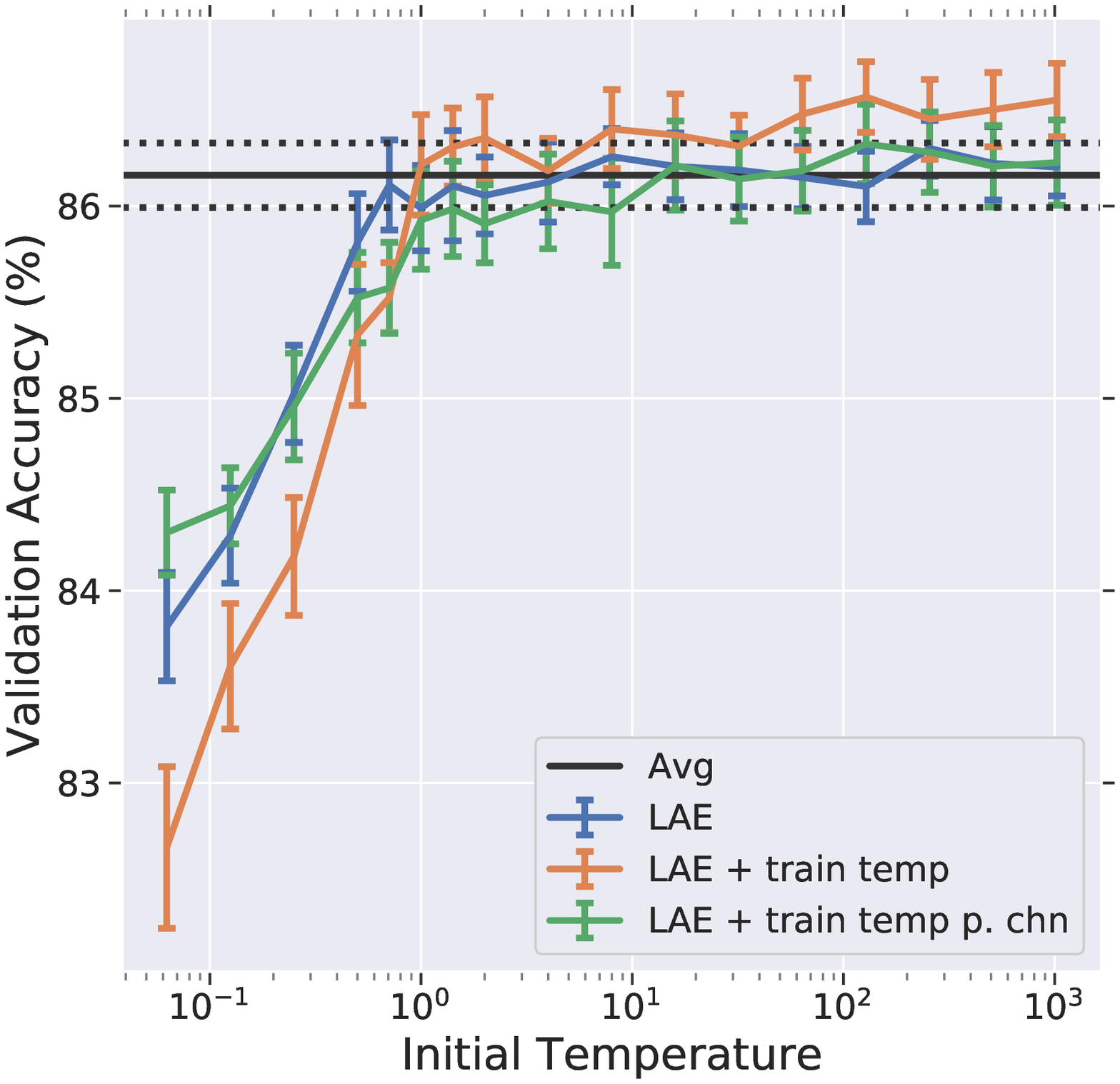}
    \caption{\label{fig:temp-val-ours-imagenette128}Imagenette 128px}
\end{subfigure}
~
\begin{subfigure}[b]{0.48\textwidth}
    \centering
    \includegraphics[width=\textwidth]{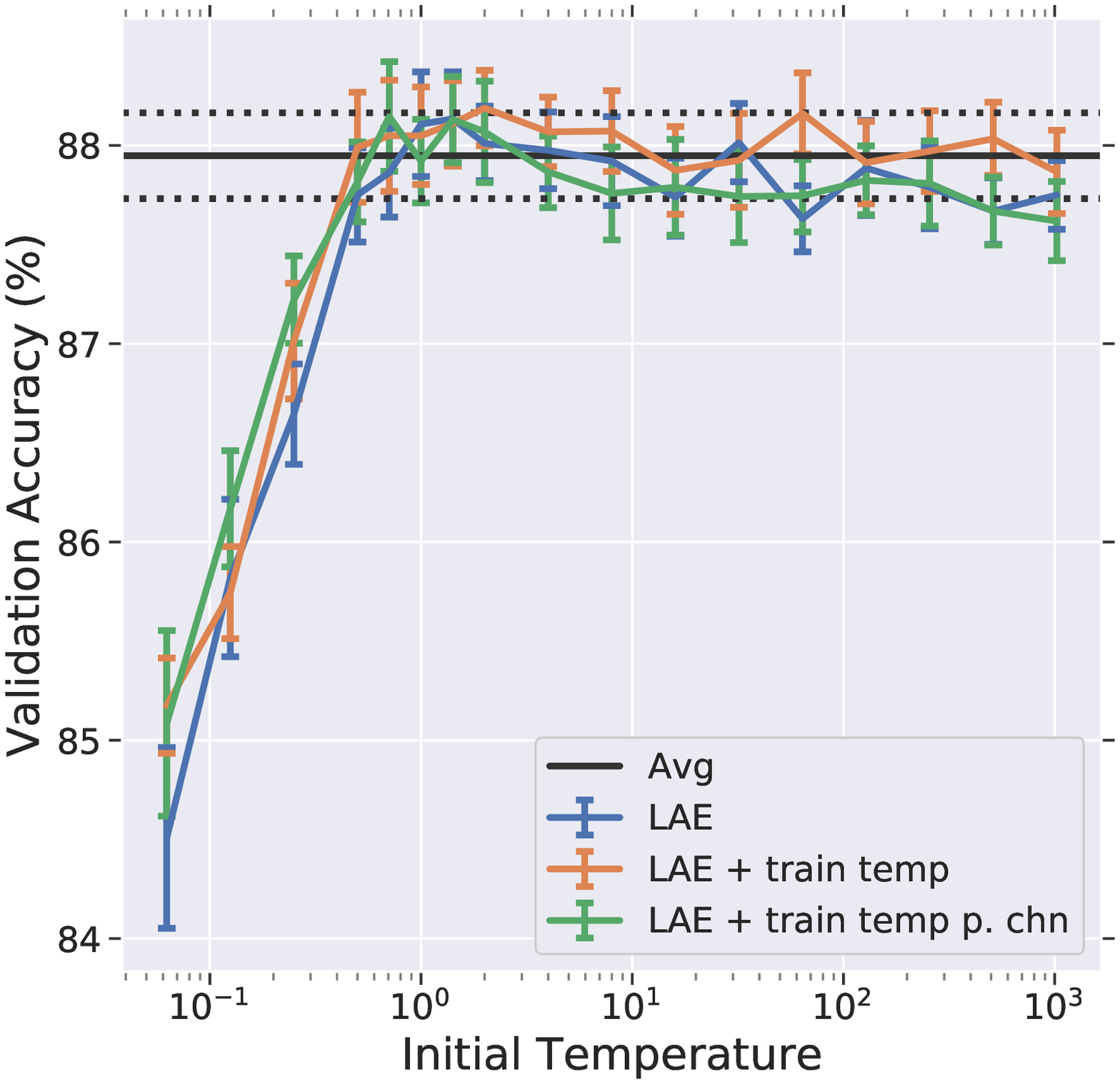}
    \caption{\label{fig:temp-val-ours-imagenette256}Imagenette 256px}
\end{subfigure}
\\
\begin{subfigure}[b]{0.48\textwidth}
    \centering
    \includegraphics[width=\textwidth]{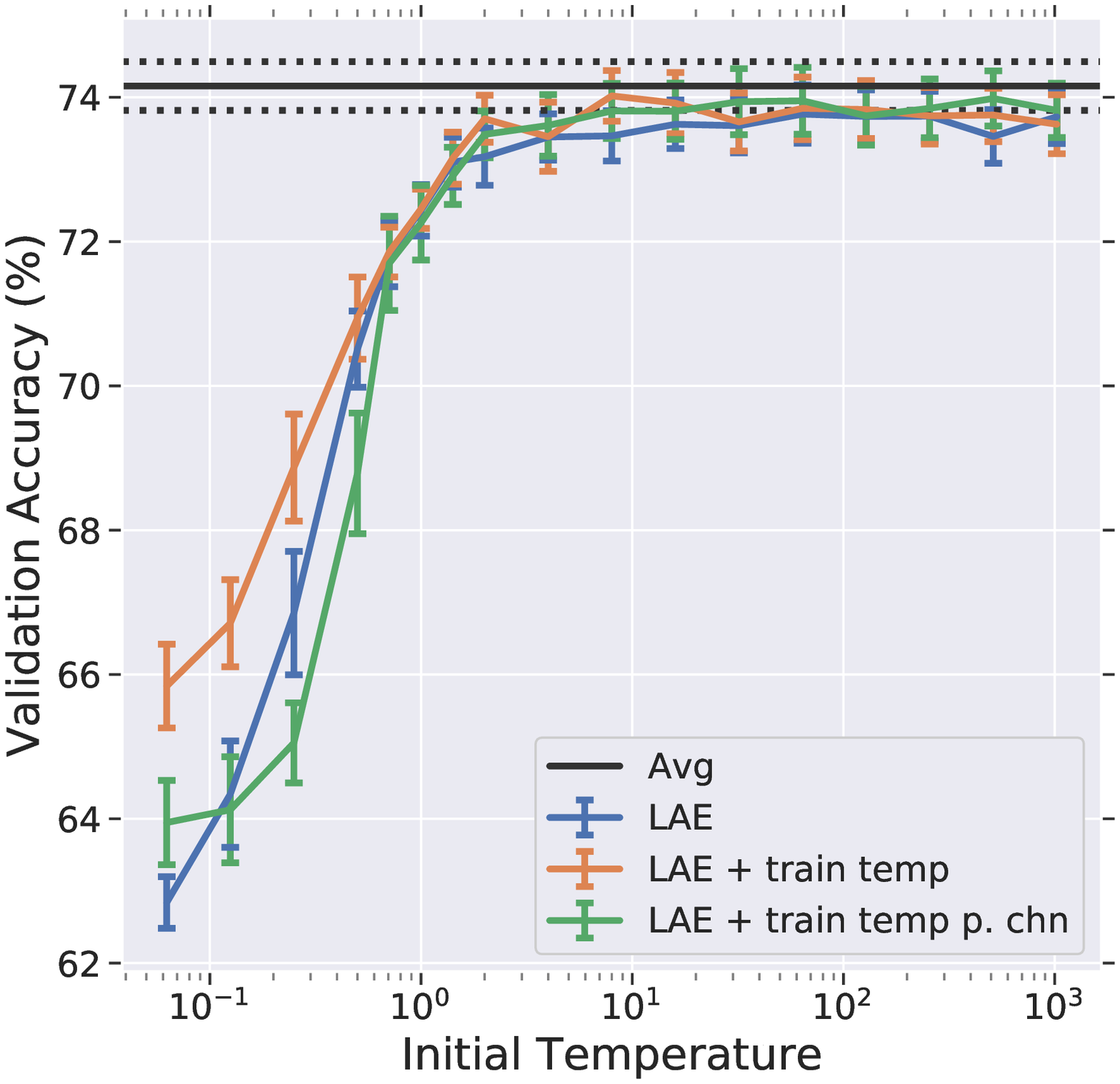}
    \caption{\label{fig:temp-val-ours-imagewoof128}Imagewoof 128px}
\end{subfigure}
~
\begin{subfigure}[b]{0.48\textwidth}
    \centering
    \includegraphics[width=\textwidth]{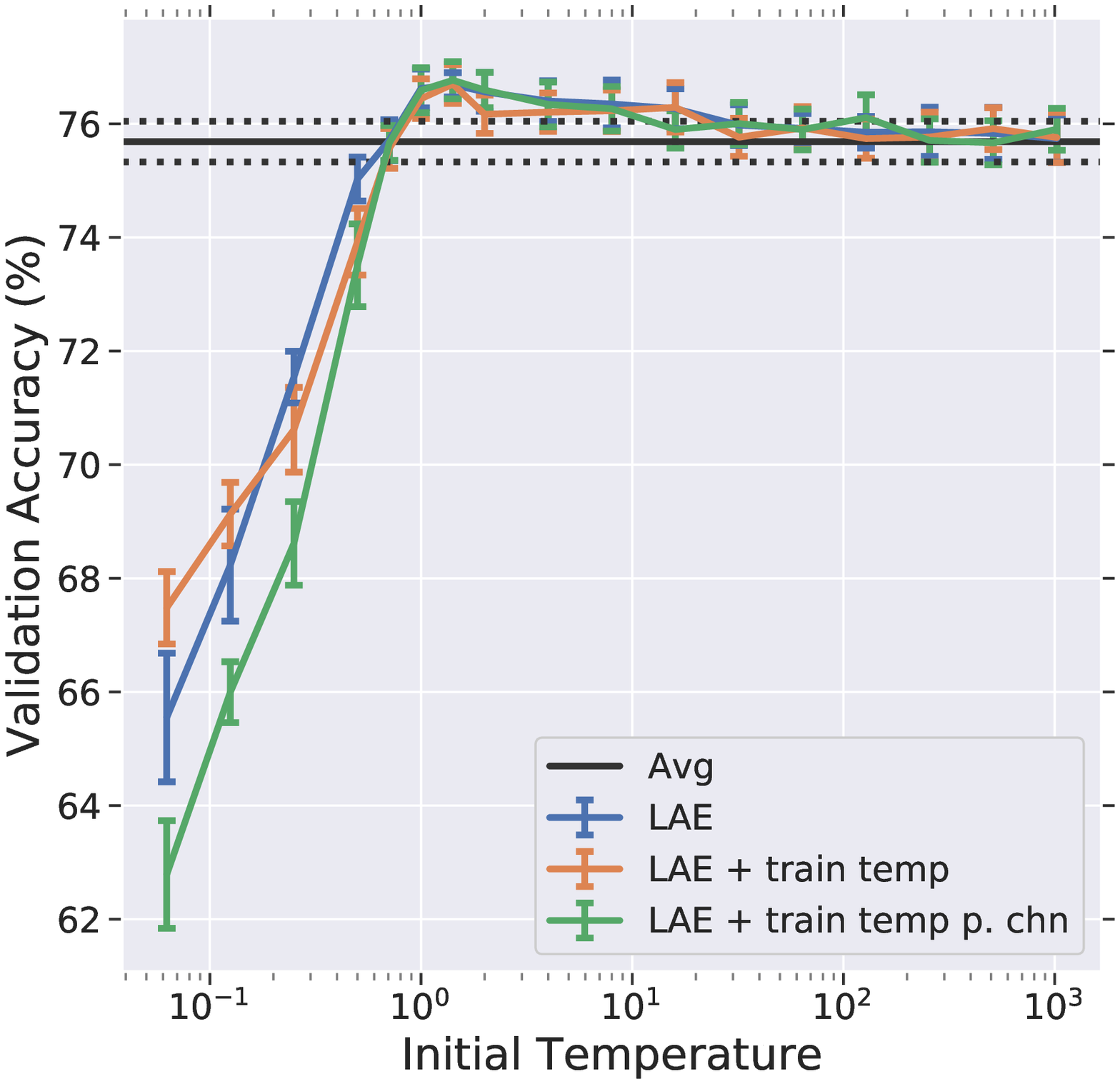}
    \caption{\label{fig:temp-val-ours-imagewoof256}Imagewoof 256px}
\end{subfigure}
\caption{
Initial temperatures versus final accuracy, using different hyperparameters for networks with each pooling operation (as discovered in hyperparameter search).
The LAE operations were performed using FP64 precision.
Mean and standard error (SEM) over $n=10$ random 80/20 cross-validation folds of the training set, trained for 6 epochs.}
\label{fig:temp-val-ours}
\end{figure}

As shown in \autoref{fig:temp-val-ours}, we find that the performance of the LAE pooling methods exceeds that of average pooling across the range $1 < t_0 < 32$ for imagenette 128, imagenette 256, and imagewoof 256.

These results do not necessarily converge to the same performance as the average pooling baseline (black), since the networks are trained using different hyperparameters.
To draw more exact comparisons across the models, we also ran the analysis using the same hyperparameters to train each model (the hyperparameters discovered for average pooling).
As shown in \autoref{fig:temp-val-avg}, the performance of the networks converge to match that of average pooling if the initial temperature is sufficiently large.

\begin{figure}[htbp]
\centering
\begin{subfigure}[b]{0.48\textwidth}
    \centering
    \includegraphics[width=\textwidth]{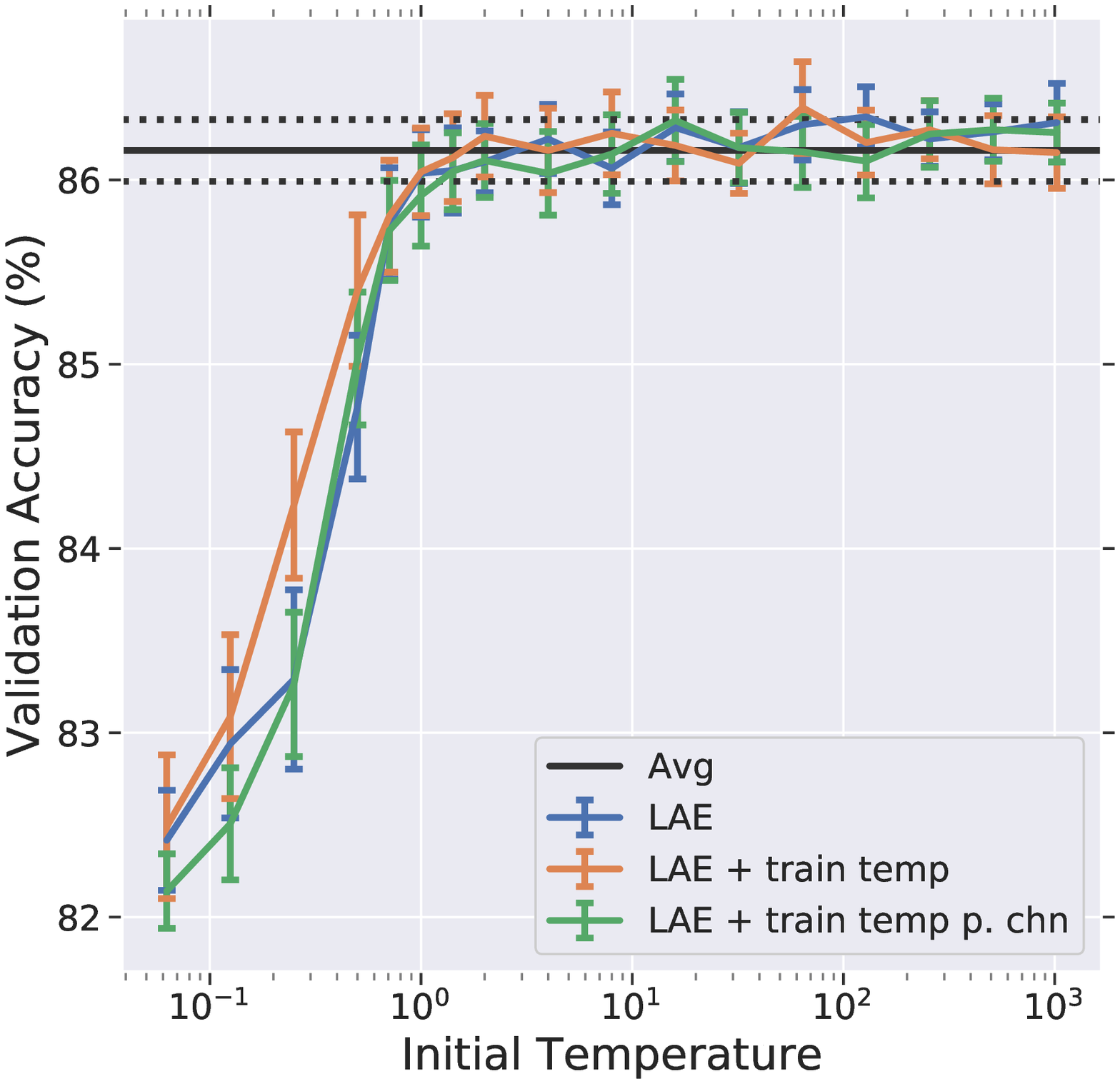}
    \caption{\label{fig:temp-val-avg-imagenette128}Imagenette 128px}
\end{subfigure}
~
\begin{subfigure}[b]{0.48\textwidth}
    \centering
    \includegraphics[width=\textwidth]{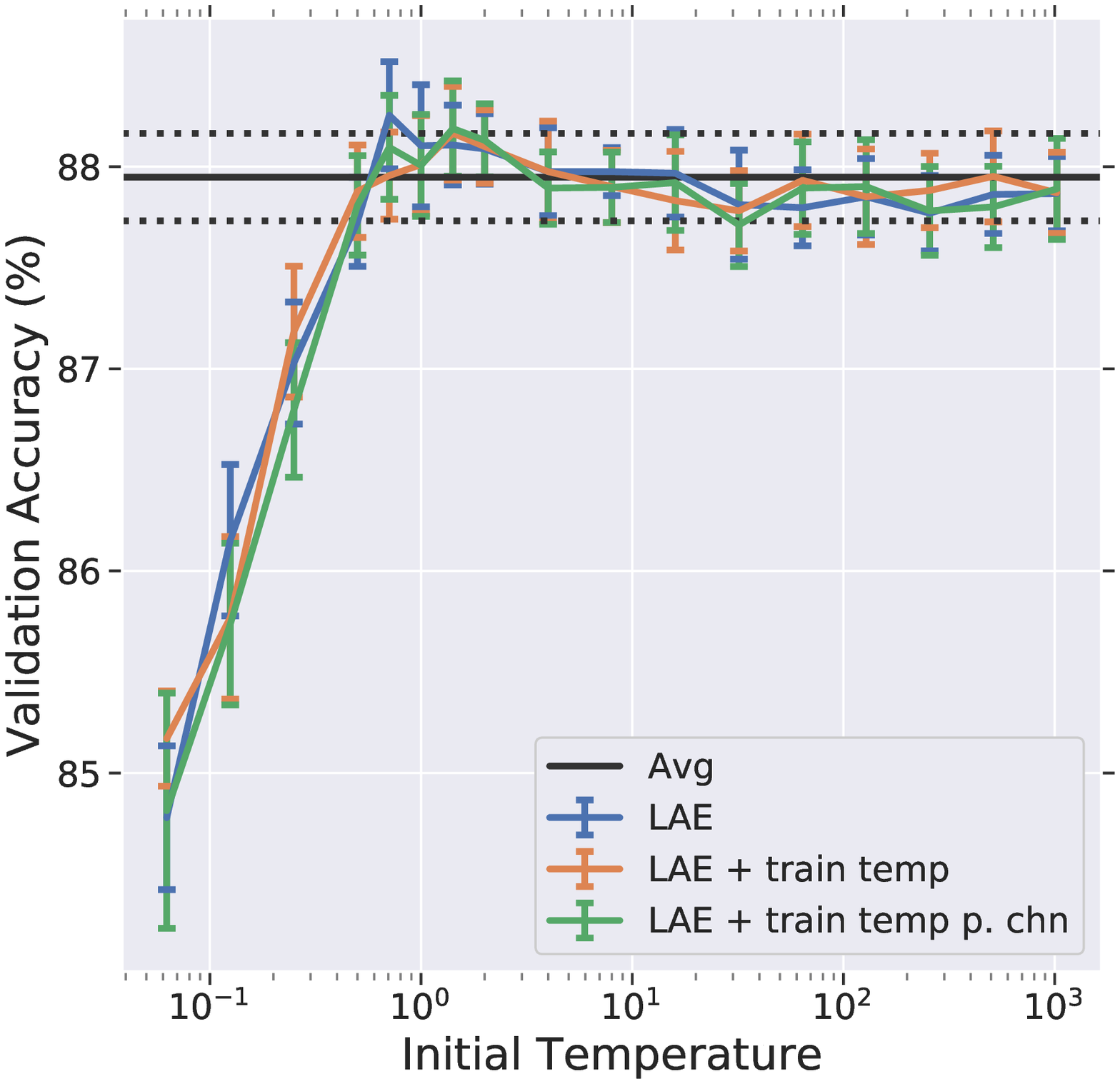}
    \caption{\label{fig:temp-val-avg-imagenette256}Imagenette 256px}
\end{subfigure}
\\
\begin{subfigure}[b]{0.48\textwidth}
    \centering
    \includegraphics[width=\textwidth]{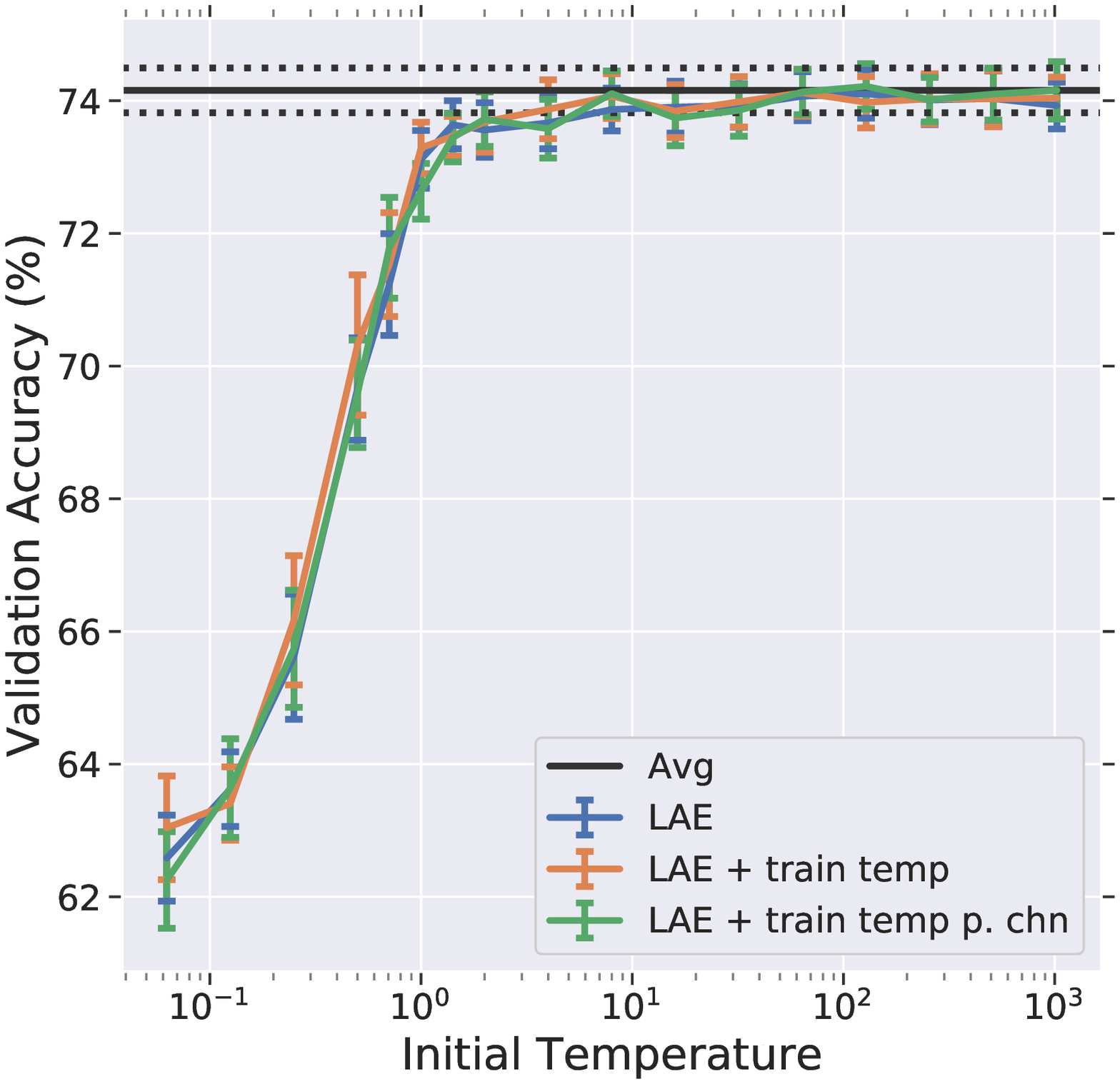}
    \caption{\label{fig:temp-val-avg-imagewoof128}Imagewoof 128px}
\end{subfigure}
~
\begin{subfigure}[b]{0.48\textwidth}
    \centering
    \includegraphics[width=\textwidth]{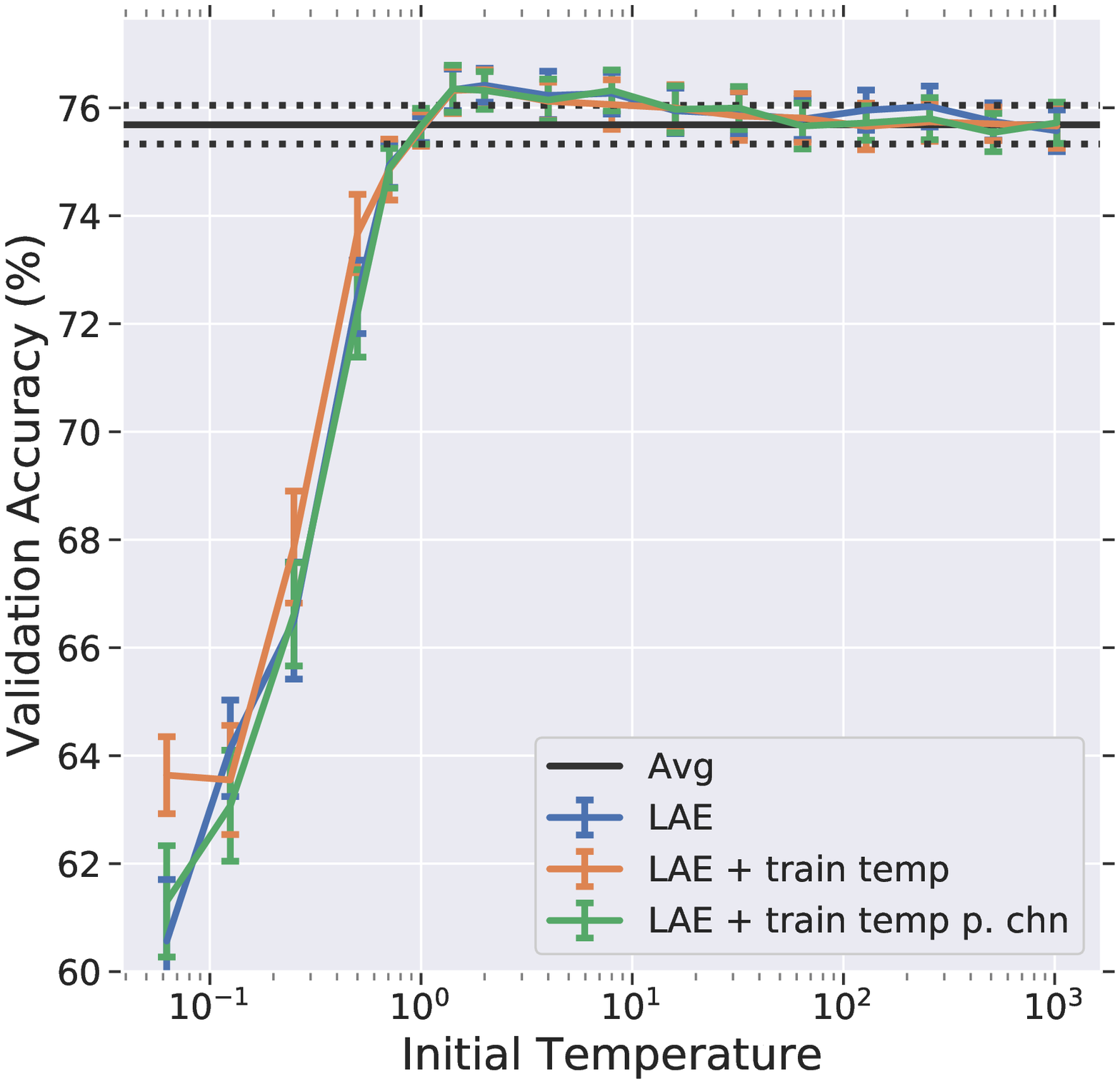}
    \caption{\label{fig:temp-val-avg-imagewoof256}Imagewoof 256px}
\end{subfigure}
\caption{
Initial temperatures versus final accuracy, using the same hyperparameters for networks with each pooling operation (as discovered in for average pooling).
The LAE operations were performed using FP64 precision.
Mean and SEM over $n=10$ random 80/20 cross-validation folds of the training set, trained for 6 epochs.
}
\label{fig:temp-val-avg}
\end{figure}

We note that the differences between LAE with fixed temperature, a single trainable temperature, and a trainable temperature per channel, are partially mitigated when we use the same hyperparameters for each network, as shown in \autoref{fig:temp-val-avg} compared against \autoref{fig:temp-val-ours}.

We also note that the performance of LAE pooling appears to be better on Imagewoof 128px when using the hyperparameters discovered for average pooling (\autoref{fig:temp-val-avg-imagewoof128} verus \autoref{fig:temp-val-ours-imagewoof128}).

% ====================================================================
\section{Impact of Floating Point precision on performance}

In this section, we consider the impact of the floating point precision used for the LogAvgExp operation.
We trained networks using LogAvgExp global pooling on Imagenette and Imagewoof using MXResNet, as per Section~4.2 of the main paper, using random 80/20 cross-validation folds of the training set.

\begin{figure}[htbp]
\centering
\begin{subfigure}[b]{0.48\textwidth}
    \centering
    \includegraphics[width=\textwidth]{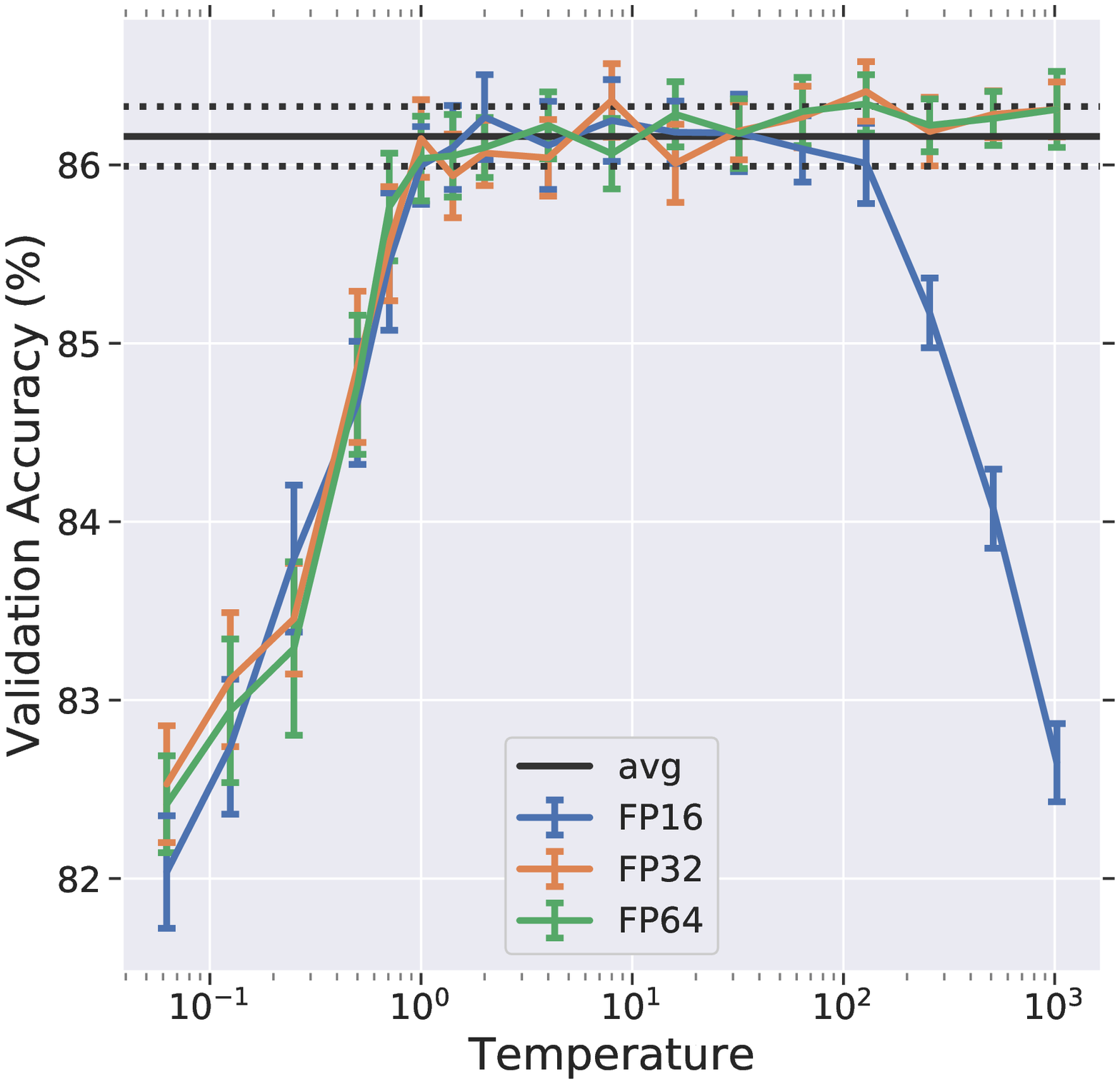}
    \caption{\label{fig:fp-cmp-imagenette128}Imagenette 128px}
\end{subfigure}
~
\begin{subfigure}[b]{0.48\textwidth}
    \centering
    \includegraphics[width=\textwidth]{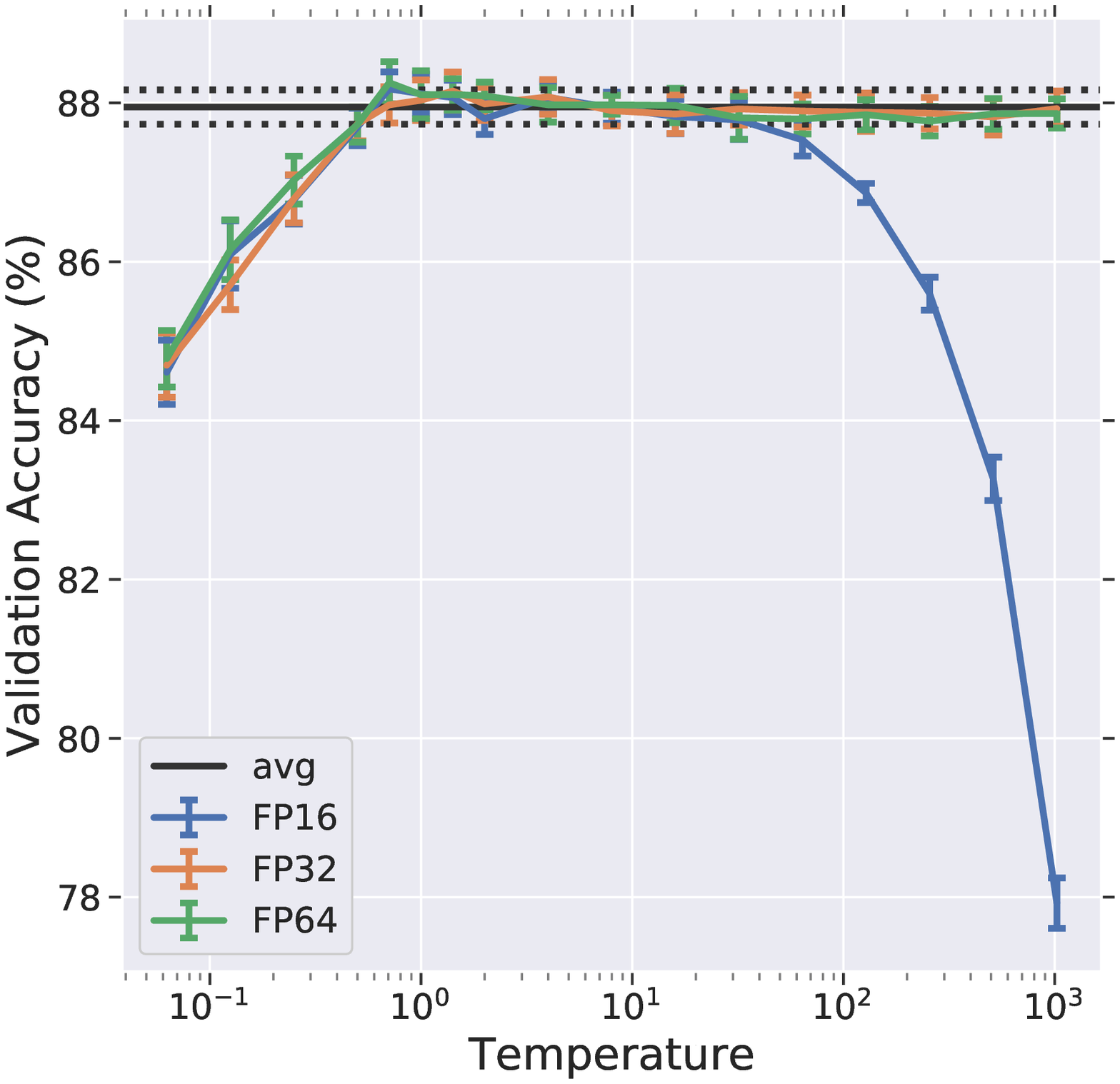}
    \caption{\label{fig:fp-cmp-imagenette256}Imagenette 256px}
\end{subfigure}
\\
\begin{subfigure}[b]{0.48\textwidth}
    \centering
    \includegraphics[width=\textwidth]{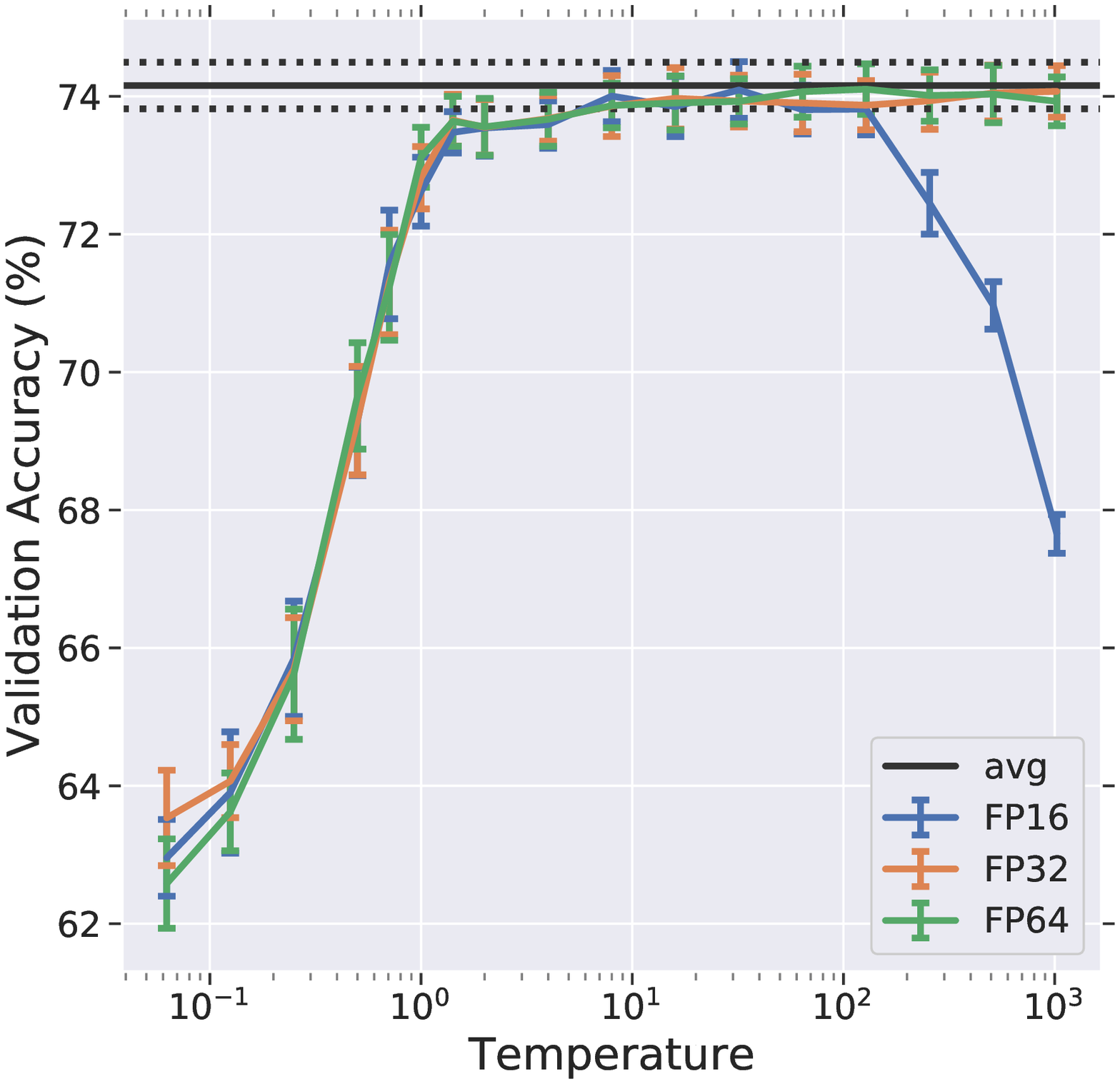}
    \caption{\label{fig:fp-cmp-imagewoof128}Imagewoof 128px}
\end{subfigure}
~
\begin{subfigure}[b]{0.48\textwidth}
    \centering
    \includegraphics[width=\textwidth]{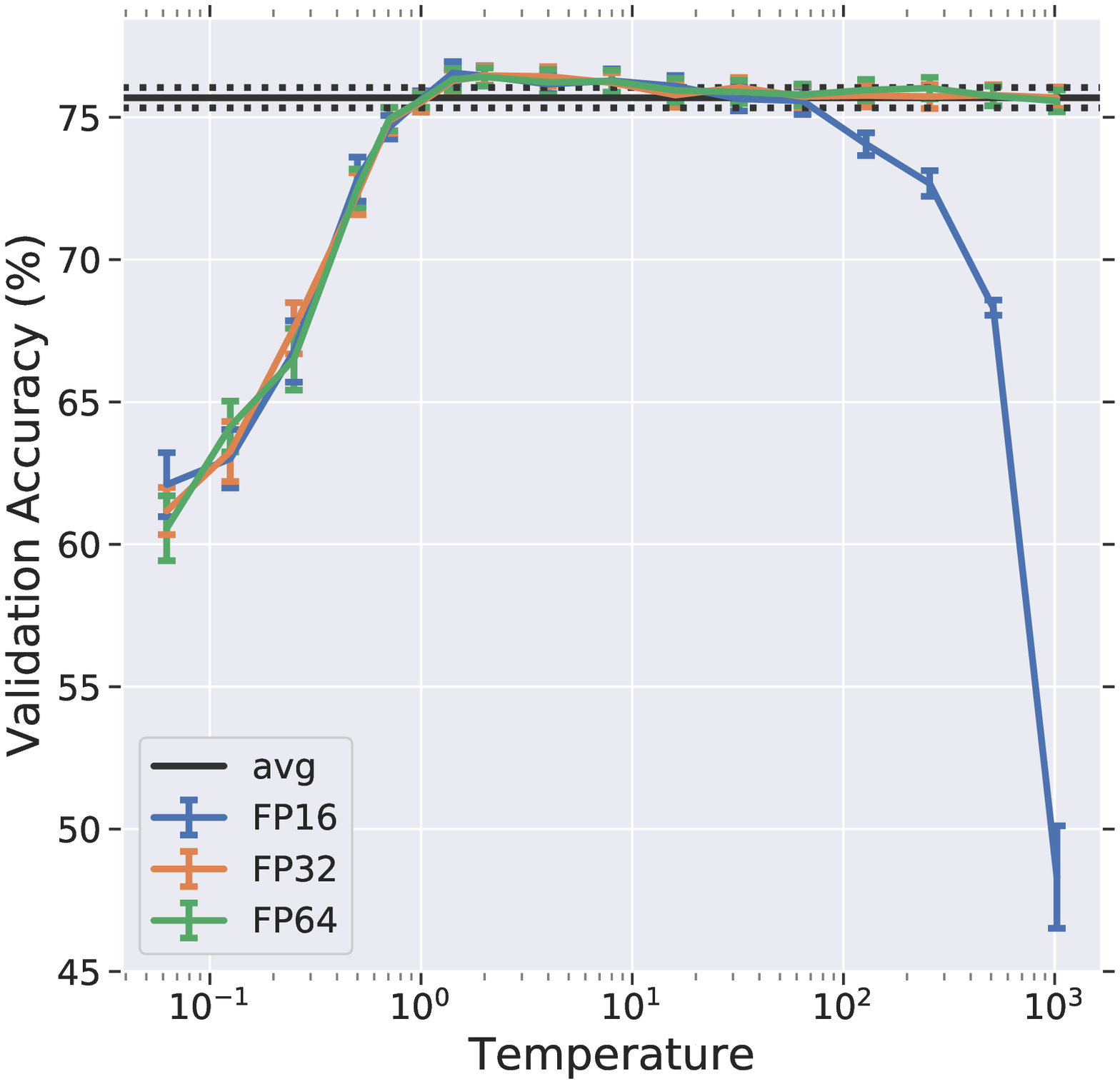}
    \caption{\label{fig:fp-cmp-imagewoof256}Imagewoof 256px}
\end{subfigure}
\caption{
Accuracy of networks with LAE global pooling, using 16-, 32-, or 64-bit floating point precision for the LAE operation.
Networks were trained for 6 epochs, with a range of different temperature values.
In all cases, the temperature was fixed and not a trainable parameter.
We used the optimizer parameters in each case, as discovered for average pooling in our hyperparameter search.
The performance of the network when using average pooling is indicated in black.
The mean and SEM over $n=10$ random 80/20 cross-validation folds of the training set are shown.
}
\label{fig:fp-cmp}
\end{figure}

We show results for LAE pool networks trained using the hyperparameters discovered for average pooling.
In accordance with \autoref{thm:lae-mean-conv}, the behaviour of the network with LAE pooling should converge toward the network with average pooling as $t\to\infty$.
This was observed to be true, provided the LAE pooling was performed with at least 32-bit floating point precision.

As shown in \autoref{fig:fp-cmp}, we found no discernable difference in the performance of the network when using 32-bit versus 64-bit floating point precision.
When using 16-bit precision, the performance of the network dropped considerably if the temperature exceeded $t=32$.
This is due to lost signal and increased noise in the gradient as it is propagated back through LAE, which is amplified due to the multiplicative nature of the temperature parameter.

We also found that networks with a trainable temperature parameter were better able to handle LAE using FP16 (not shown).
This is because the network learnt to use a lower temperature during training.
The effect was more apparent when using a single temperature, rather than a temperature parameter per channel, since in the former case all temperature updates are accumulated into the same parameter and it is hence able to adapt faster.

As there was no discernable difference in performance across the range of temperature parameters considered, even up to $t=1024$, we recommend using single precision (FP32) when using LAE pooling.

% ====================================================================
\section{Sensitivity to Input Resolution}

We explored the sensitivity of the trained network to the resolution of the input image, as it contracts or expands to be smaller or larger than the size of image on which the network was originally trained.
In the main paper, we only demonstrated results for networks trained on Imagewoof at 256px resolution.
Here we show the effect with Imagenette and Imagewoof at both 128px and 256px training resolution.
In all cases, the network is trained on the full training partition and evalutated on resized versions of images in the validation partition.

\begin{figure}[htbp]
\centering
\begin{subfigure}[b]{0.48\textwidth}
    \centering
    \includegraphics[width=\textwidth]{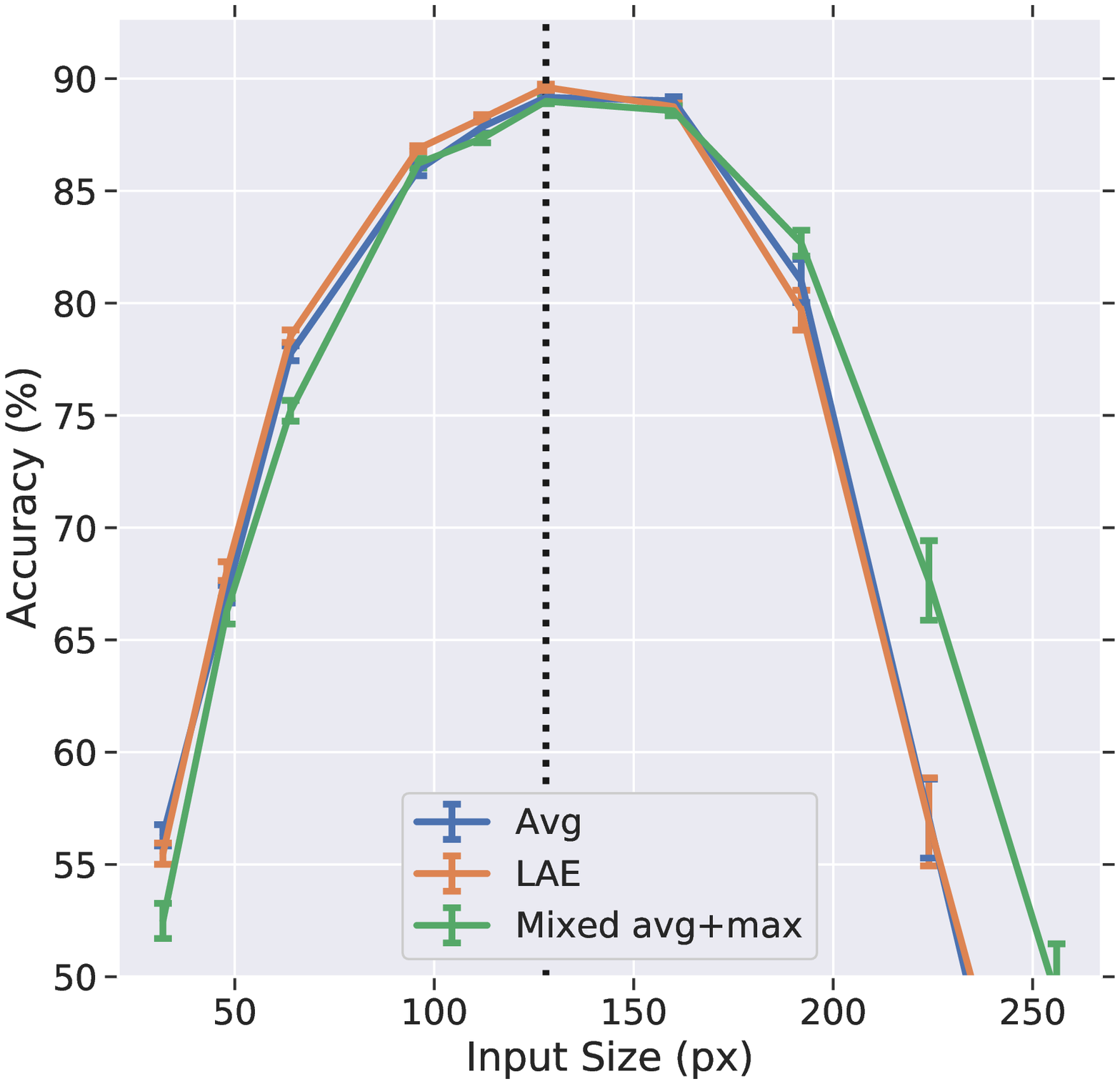}
    \caption{\label{fig:zoom-imagenette128}Imagenette 128px}
\end{subfigure}
~
\begin{subfigure}[b]{0.48\textwidth}
    \centering
    \includegraphics[width=\textwidth]{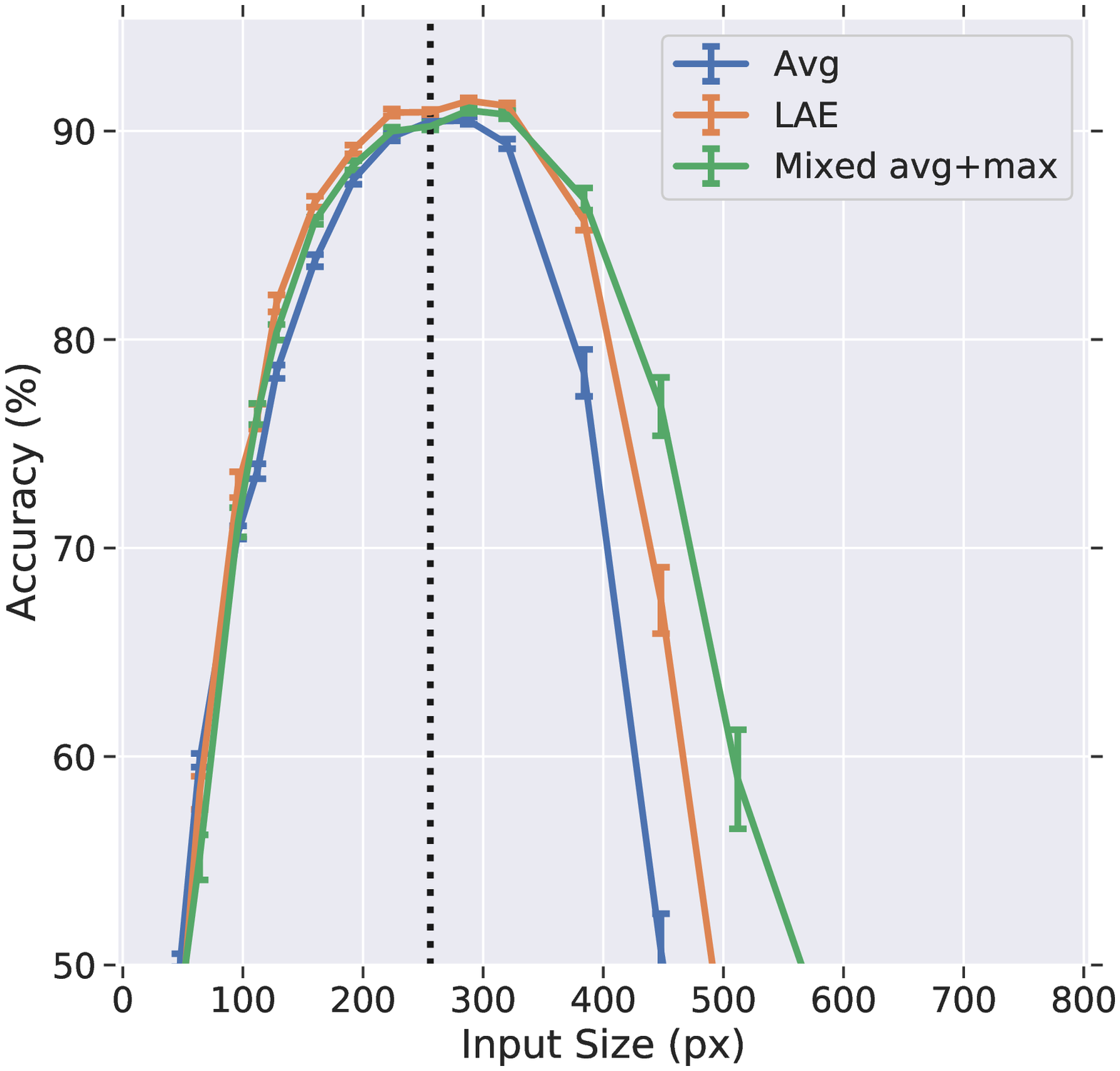}
    \caption{\label{fig:zoom-imagenette256}Imagenette 256px}
\end{subfigure}
\\
\begin{subfigure}[b]{0.48\textwidth}
    \centering
    \includegraphics[width=\textwidth]{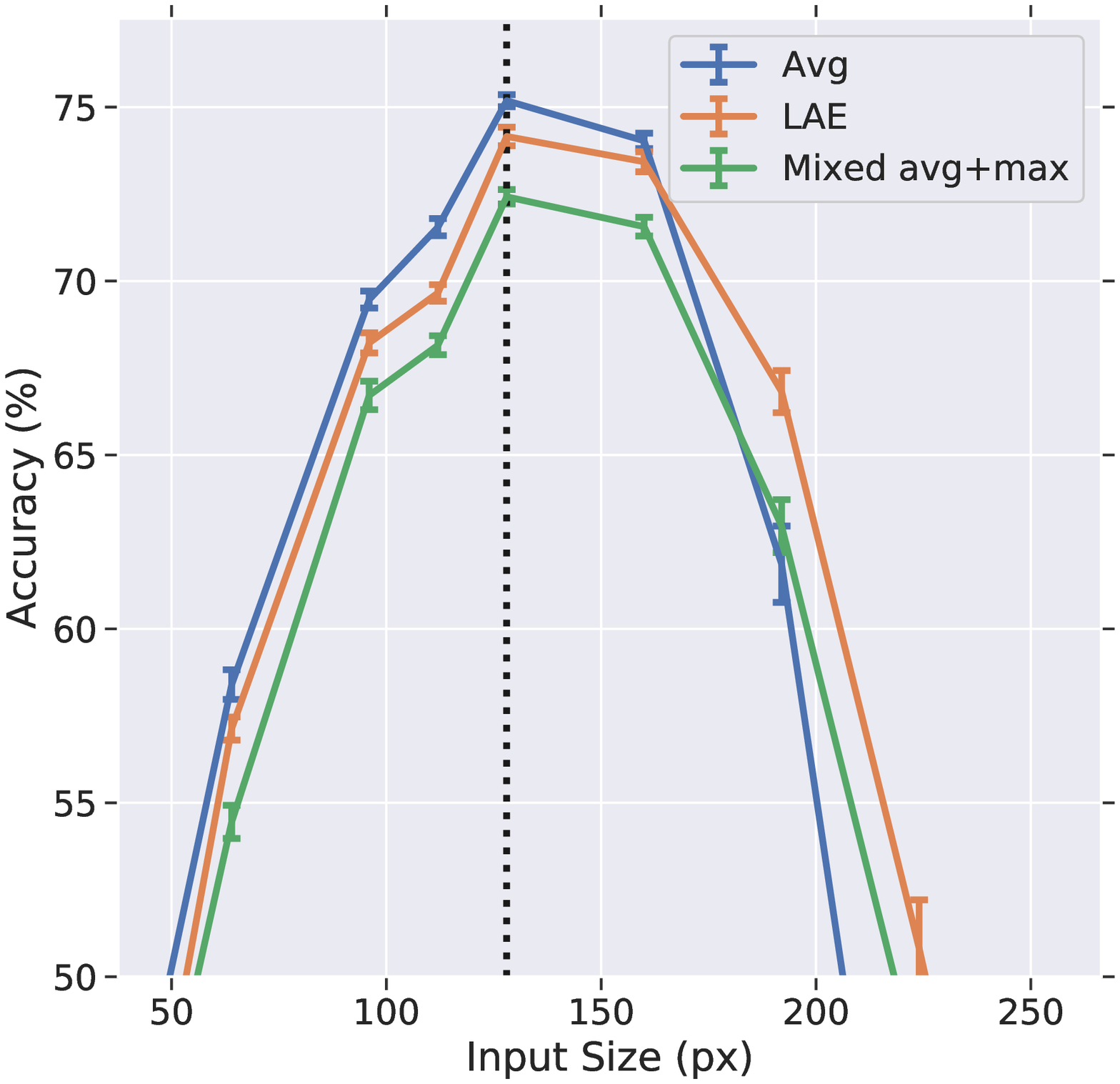}
    \caption{\label{fig:zoom-imagewoof128}Imagewoof 128px}
\end{subfigure}
~
\begin{subfigure}[b]{0.48\textwidth}
    \centering
    \includegraphics[width=\textwidth]{distort_figs/dis_zoom_results_sng_imagewoof_256_5ep.eps}
    \caption{\label{fig:zoom-imagewoof256}Imagewoof 256px}
\end{subfigure}
\caption{
The effect of changing the input resolution (by shrinking/stretching) on the validation performance of networks trained with different global pooling operations.
The mean and SEM over $18 \le n \le 28$ random seeds are shown.
}
\label{fig:zoom}
\end{figure}

\begin{figure}[htbp]
\centering
\begin{subfigure}[b]{0.48\textwidth}
    \centering
    \includegraphics[width=\textwidth]{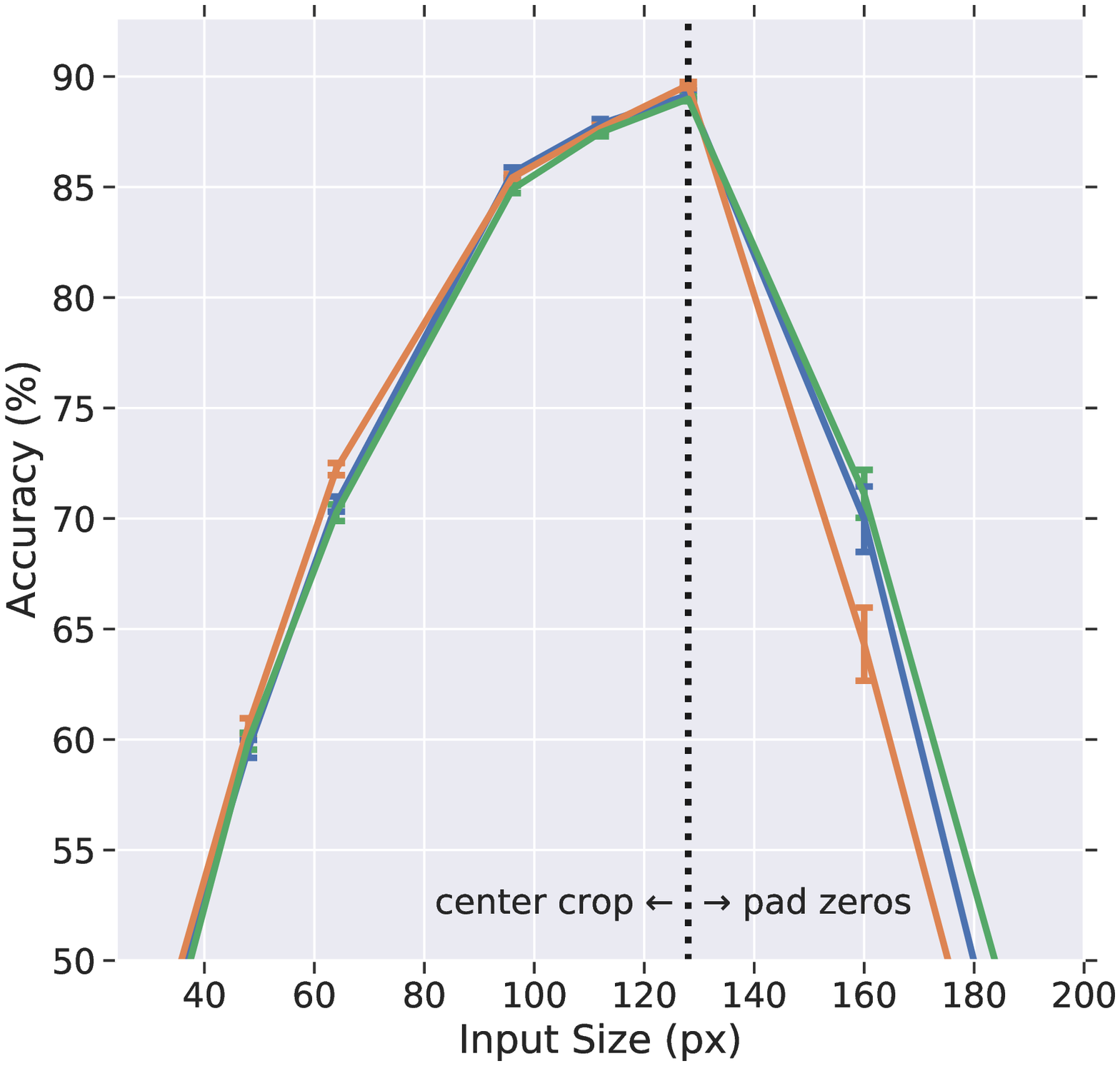}
    \caption{\label{fig:pad-zeros-imagenette128}Imagenette 128px}
\end{subfigure}
~
\begin{subfigure}[b]{0.48\textwidth}
    \centering
    \includegraphics[width=\textwidth]{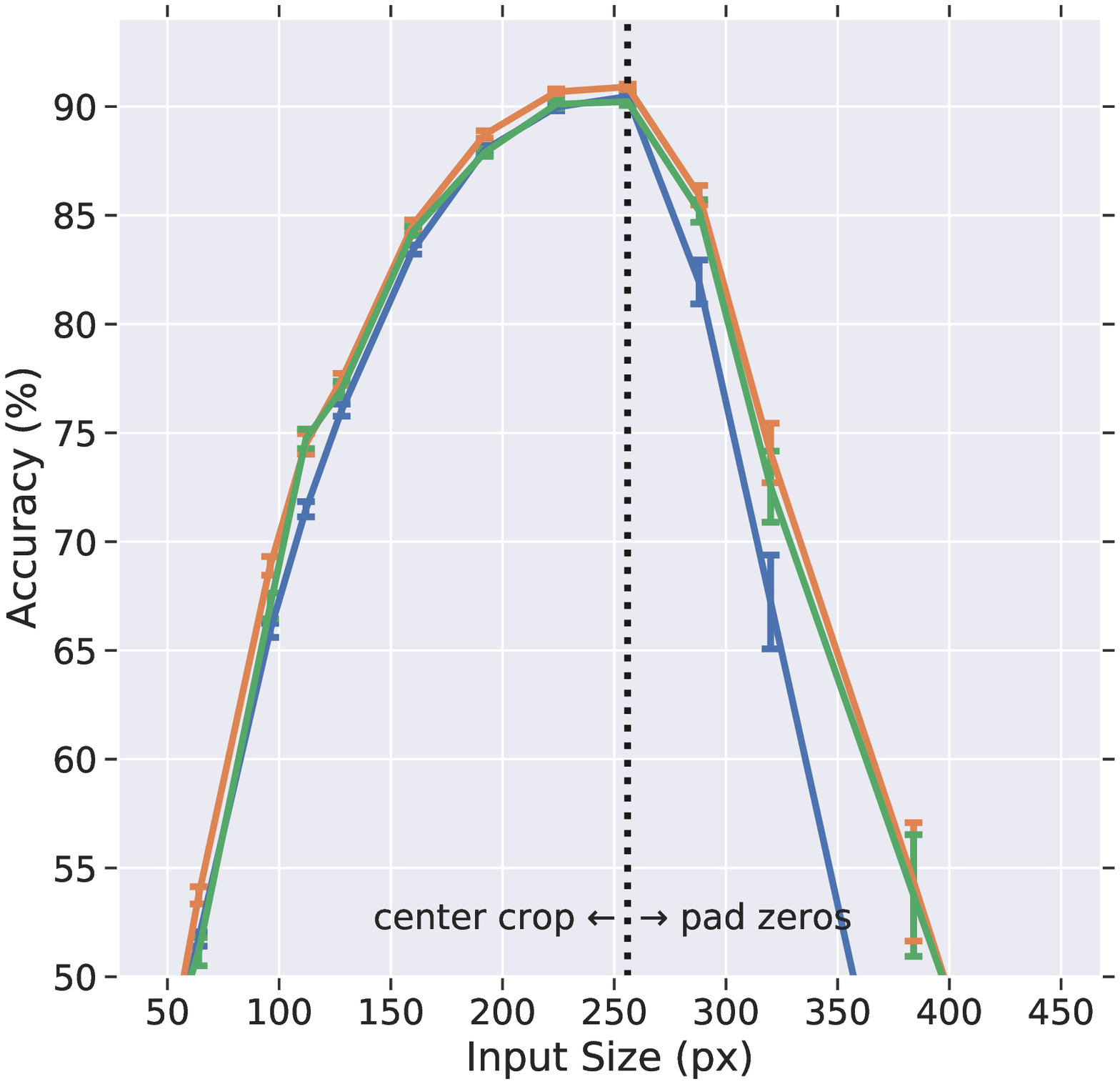}
    \caption{\label{fig:pad-zeros-imagenette256}Imagenette 256px}
\end{subfigure}
\\
\begin{subfigure}[b]{0.48\textwidth}
    \centering
    \includegraphics[width=\textwidth]{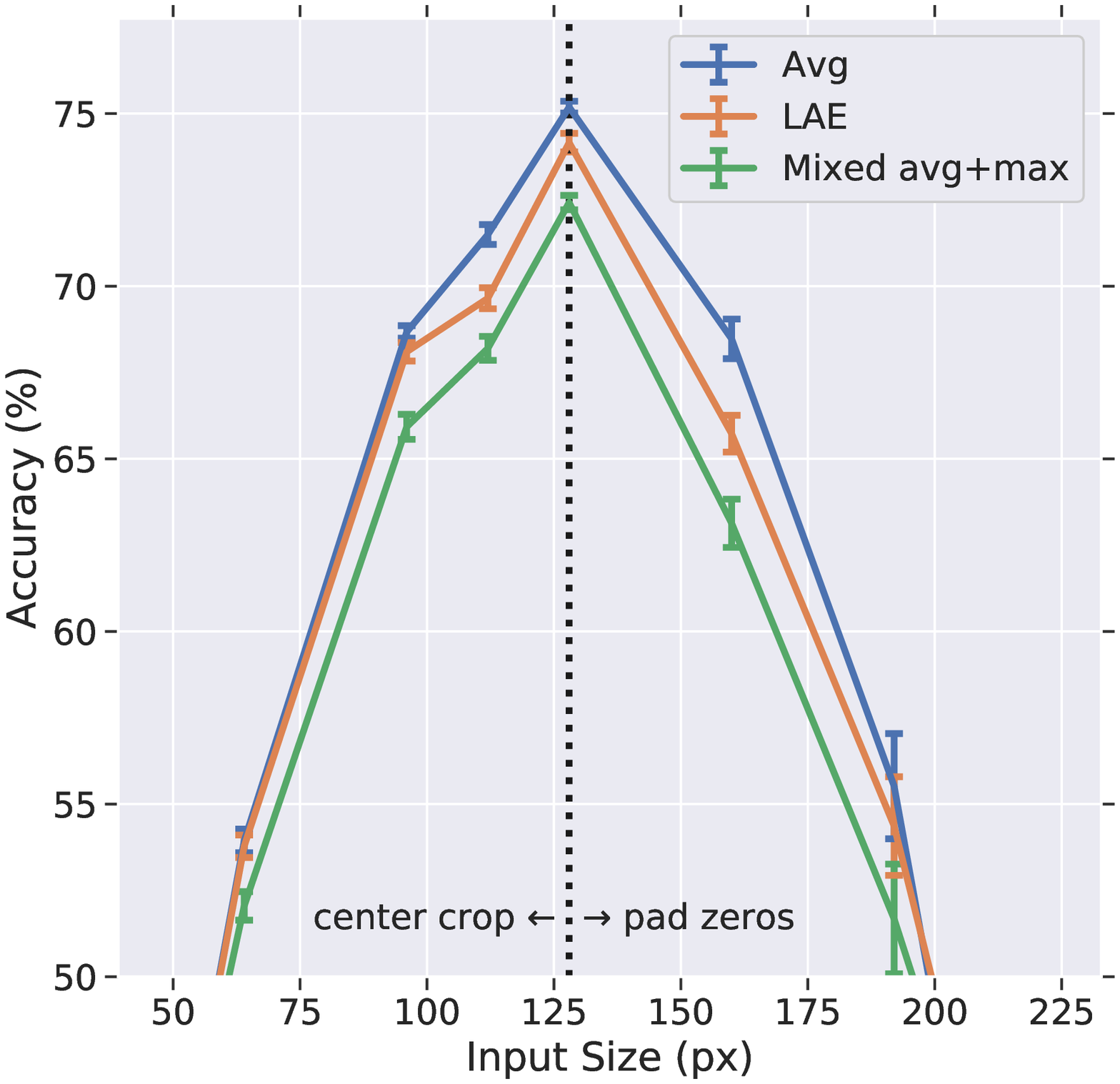}
    \caption{\label{fig:pad-zeros-imagewoof128}Imagewoof 128px}
\end{subfigure}
~
\begin{subfigure}[b]{0.48\textwidth}
    \centering
    \includegraphics[width=\textwidth]{distort_figs/dis_crop-pad-zeros_results_sng_imagewoof_256_5ep.eps}
    \caption{\label{fig:pad-zeros-imagewoof256}Imagewoof 256px}
\end{subfigure}
\caption{
The effect of changing the input resolution on the validation performance, by cropping the image or padding with zeros.
The mean and SEM over $18 \le n \le 28$ random seeds are shown.
Networks were trained for 5 epochs on the training set.
}
\label{fig:crop-pad-zeros}
\end{figure}

\begin{figure}[htbp]
\centering
\begin{subfigure}[b]{0.48\textwidth}
    \centering
    \includegraphics[width=\textwidth]{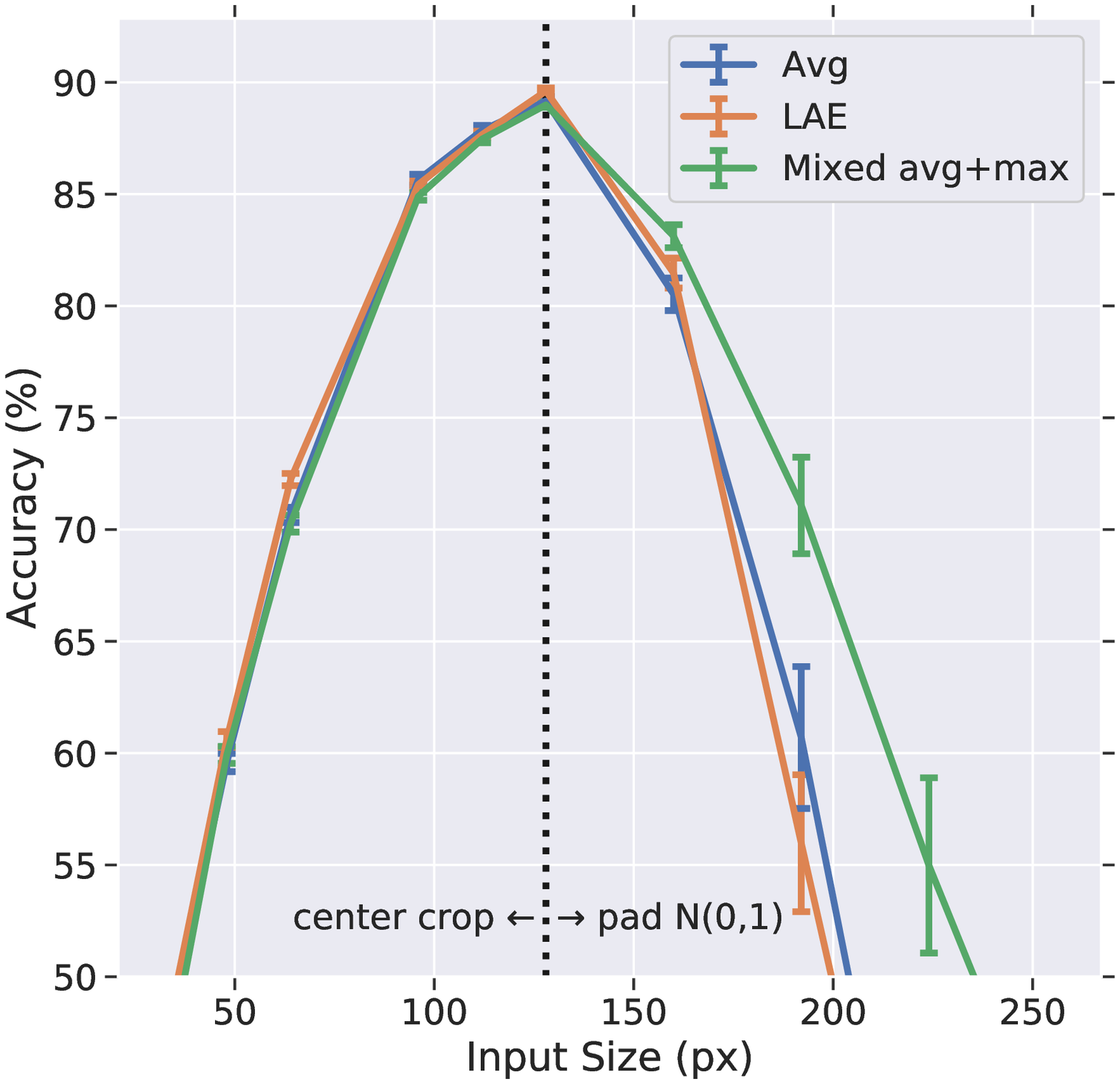}
    \caption{\label{fig:pad-norm-imagenette128}Imagenette 128px}
\end{subfigure}
~
\begin{subfigure}[b]{0.48\textwidth}
    \centering
    \includegraphics[width=\textwidth]{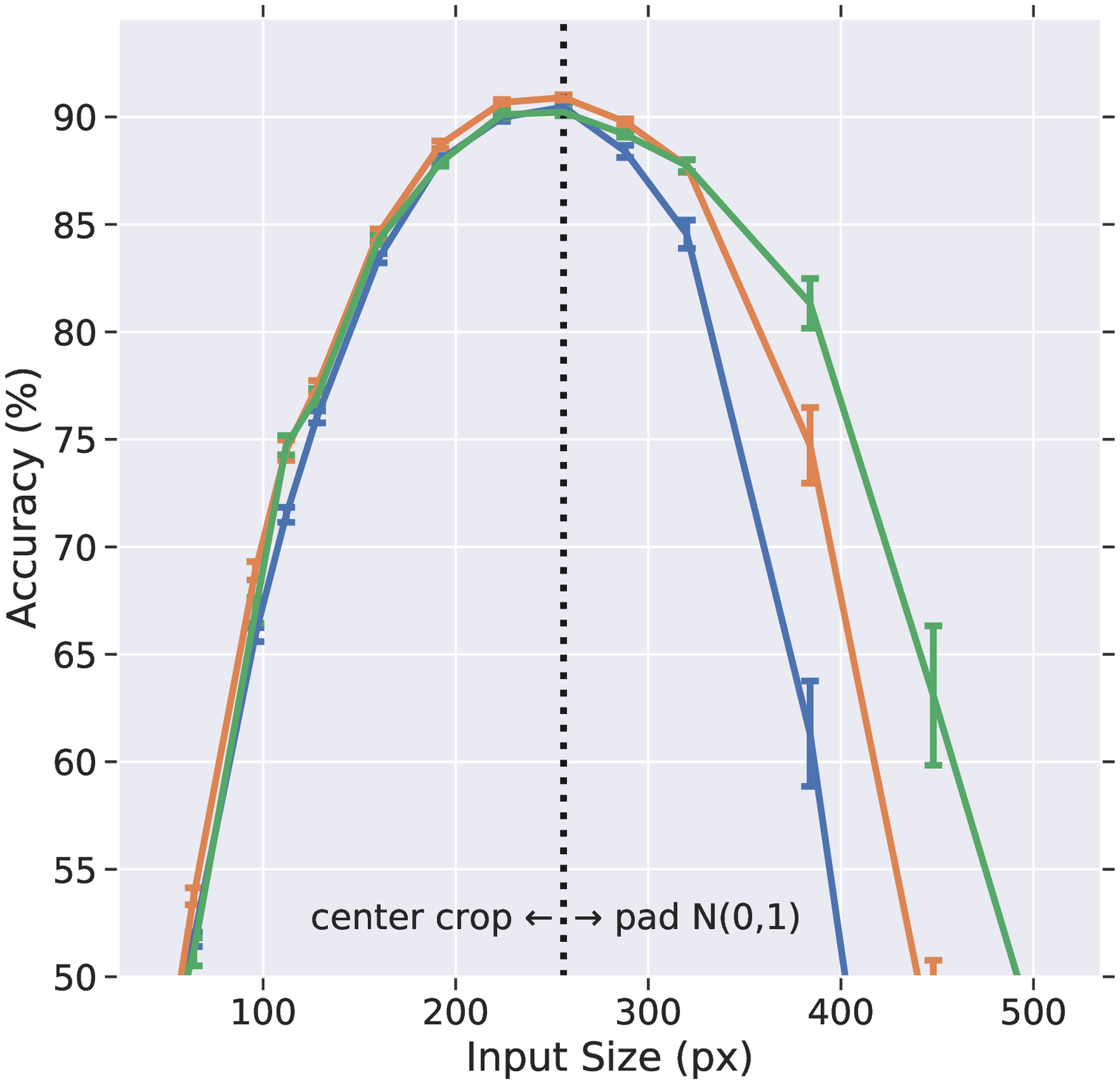}
    \caption{\label{fig:pad-norm-imagenette256}Imagenette 256px}
\end{subfigure}
\\
\begin{subfigure}[b]{0.48\textwidth}
    \centering
    \includegraphics[width=\textwidth]{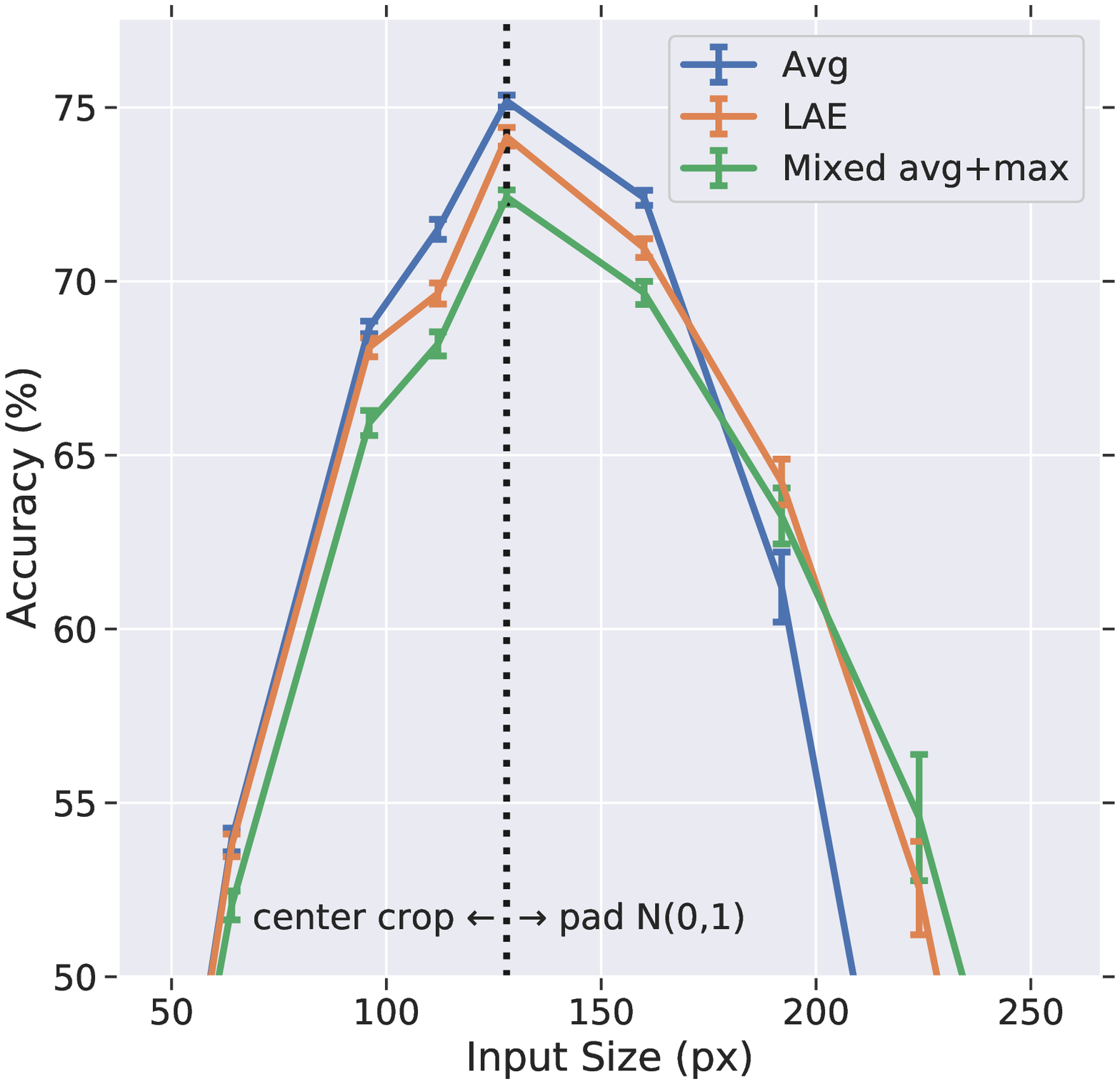}
    \caption{\label{fig:pad-norm-imagewoof128}Imagewoof 128px}
\end{subfigure}
~
\begin{subfigure}[b]{0.48\textwidth}
    \centering
    \includegraphics[width=\textwidth]{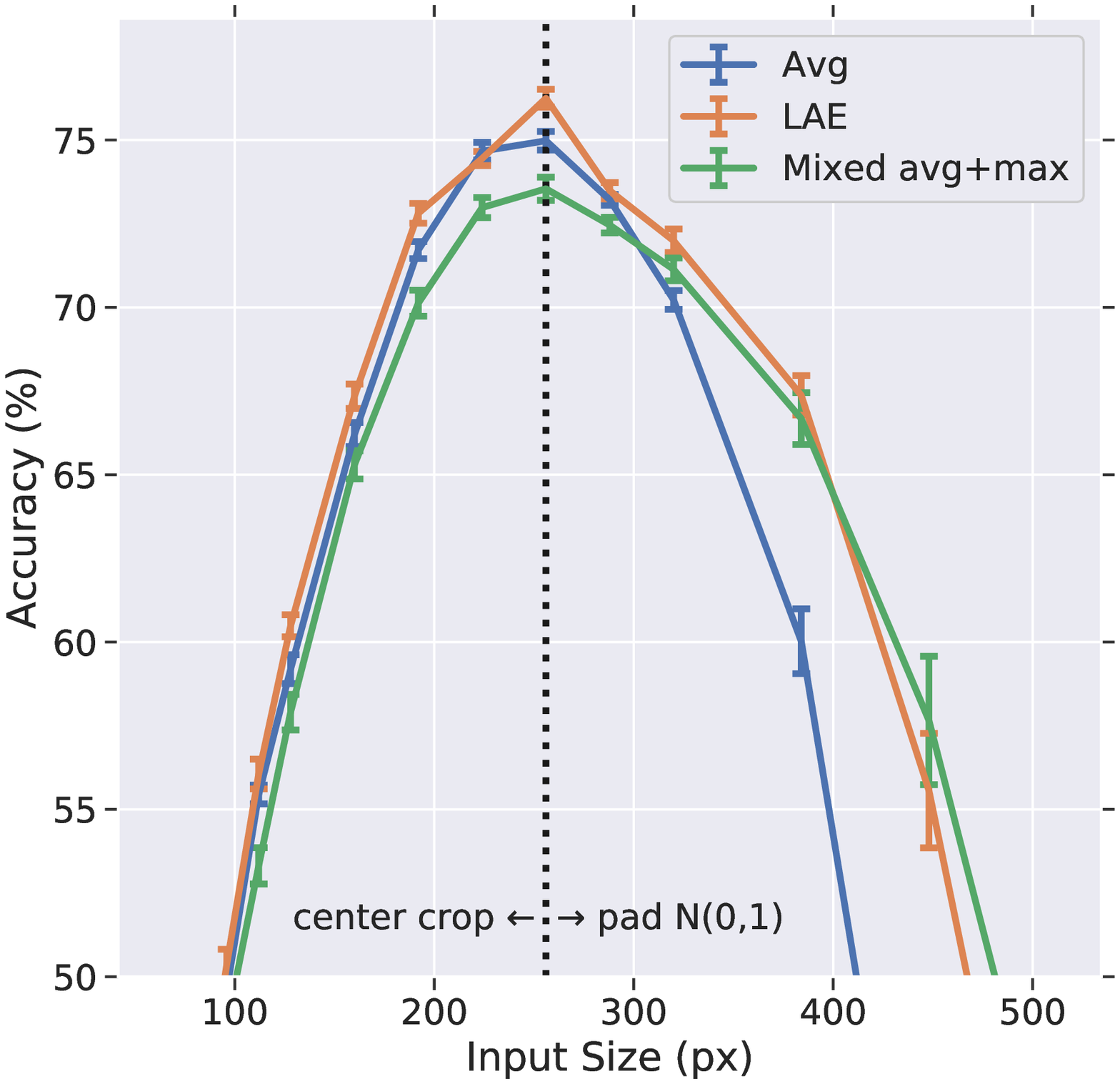}
    \caption{\label{fig:pad-norm-imagewoof256}Imagewoof 256px}
\end{subfigure}
\caption{
The effect of changing the input resolution on the validation performance, by cropping the image or padding with random values drawn independently from a standard normal distribution.
The mean and SEM over $18 \le n \le 28$ random seeds are shown.
Networks were trained for 5 epochs on the training set.
}
\label{fig:crop-pad-norm}
\end{figure}

For the datasets with larger image sizes, Imagnette 256px and Imagewoof 256px, we find that LAE pooling consistently provides high accuracy for a broader range of input sizes (Figures \ref{fig:zoom}, \ref{fig:crop-pad-zeros}, \ref{fig:crop-pad-norm}; right-hand panels).  % \autoref{fig:zoom}, \autoref{fig:crop-pad-zeros}, \autoref{fig:crop-pad-norm}
For Imagenette 256px, a trainable mixture of average and max pooling is performant for a broader range of inputs than LAE; for Imagewoof their relationship is reversed.
In both cases, each exceeds the performance of average pooling by a significant margin.

For Imagenette 128px (Figures \ref{fig:zoom-imagenette128}, \ref{fig:pad-zeros-imagenette128}, \ref{fig:pad-norm-imagenette128}) performance on untrained input sizes was similar for all three global pooling methods.

For Imagewoof 128px (Figures \ref{fig:zoom-imagewoof128}, \ref{fig:pad-zeros-imagewoof128}, \ref{fig:pad-norm-imagewoof128}), LAE with a fixed temperature of $t=1$ is less performant on the training image size than average pooling.
As noted in Table~2 of the main paper, using an initial temperature of $t_0=4$ is sufficient to mitigate differences between LAE and average pooling.
%And as noted in \autoref{sec:initial-temp}, the hyperparameters for average pooling are better than those discovered for LAE pooling.
Despite the performance at 128px being worse with LAE pooling, the network with LAE pooling outperforms average pooling when the size of the image differs from the training size by around a third.

% ====================================================================
\section{Discovered Hyperparameters}

In this section, we detail the final hyperparameters used for our analysis, as discovered during our hyperparameter optimization search.

\begin{table*}[htb]
\caption{
Discovered hyperparameters for PyramidNet+ShakeDrop on CIFAR-100.
}
\label{tab:hyper_shakepyramid}
\centering
\begin{tabular}{llrrrr}
\toprule
Global Pooling Op       &\, Dataset   &\, lr     &\, lr decay &\, wd      &\, mom     \\
\midrule
Average                 &\, CIFAR-100 &\, 0.200000 &\, 0.1316 &\, 0.00005 &\, 0.9 \\
LAE                     &\, CIFAR-100 &\, 0.021650 &\, 0.1316 &\, 0.00080 &\, 0.9 \\
LAE + train $t$ p. chn  &\, CIFAR-100 &\, 0.010825 &\, 0.1000 &\, 0.00160 &\, 0.9 \\ % 0.02165 & 0.1000 & 0.00080 & 0.9 \\
\bottomrule
\end{tabular}
\end{table*}

\begin{table*}[htb]
\caption{
Discovered hyperparameters for WRN-18-6 on CIFAR-10/100.
}
\label{tab:hyper_wrn-18-6}
\centering
\begin{tabular}{llrrr}
\toprule
Global Pooling Op                   &\, Dataset  &\, lr      &\, wd        &\, mom     \\
\midrule
Average                             &\, CIFAR-10  &\, 0.640 &\, 0.001024 &\, 0.3446 \\
                                    &\, CIFAR-100 &\, 6.711 &\, 0.000061 &\, 0.7440 \\
Mixed + train $\alpha$ p. chn       &\, CIFAR-10  &\, 0.250 &\, 0.000250 &\, 0.9375 \\
                                    &\, CIFAR-100 &\, 1.024 &\, 0.000640 &\, 0.7440 \\
Mixed + Gated $\alpha$ p. chn       &\, CIFAR-10  &\, 0.156 &\, 0.001638 &\, 0.7440 \\
                                    &\, CIFAR-100 &\, 0.400 &\, 0.001024 &\, 0.7440 \\
LAE                                 &\, CIFAR-10  &\, 0.640 &\, 0.001024 &\, 0.5904 \\
                                    &\, CIFAR-100 &\, 2.621 &\, 0.000640 &\, 0.3446 \\
LAE + train $t$                     &\, CIFAR-10  &\, 0.250 &\, 0.000640 &\, 0.8400 \\
                                    &\, CIFAR-100 &\, 0.156 &\, 0.002621 &\, 0.7440 \\
LAE + train $t$ p. chn              &\, CIFAR-10  &\, 0.640 &\, 0.000250 &\, 0.8400 \\
                                    &\, CIFAR-100 &\, 0.640 &\, 0.000400 &\, 0.8400 \\
\bottomrule
\end{tabular}
\end{table*}

\begin{table*}[htb]
\caption{
Discovered hyperparameters for WRN-18-6+SE on CIFAR-10/100.
}
\label{tab:hyper_wrn-18-6}
\centering
\begin{tabular}{lllrrr}
\toprule
Global Pooling Op       &\, SE Pooling Op             &\, Dataset   &\, lr       &\, wd        &\, mom     \\
\midrule
Average                 &\, Average                   &\, CIFAR-10  &\,  1.638 &\, 0.000038 &\, 0.9375 \\
                        &\,                           &\, CIFAR-100 &\,  0.640 &\, 0.001024 &\, 0.5904 \\
Average                 &\, LAE                       &\, CIFAR-10  &\,  0.640 &\, 0.000156 &\, 0.9000 \\
                        &\,                           &\, CIFAR-100 &\,  9.378 &\, 0.000156 &\, 0.0841 \\
Average                 &\, LAE + train $t$           &\, CIFAR-10  &\,  1.024 &\, 0.000156 &\, 0.9000 \\
                        &\,                           &\, CIFAR-100 &\,  6.711 &\, 0.000156 &\, 0.3446 \\
Average                 &\, LAE + train $t$ p. chn    &\, CIFAR-10  &\,  1.024 &\, 0.000098 &\, 0.8400 \\
                        &\,                           &\, CIFAR-100 &\, 12.855 &\, 0.000156 &\, 0.2154 \\
LAE                     &\, LAE                       &\, CIFAR-10  &\,  0.250 &\, 0.000400 &\, 0.9000 \\
                        &\,                           &\, CIFAR-100 &\,  2.621 &\, 0.000400 &\, 0.5904 \\
LAE + train $t$         &\, LAE + train $t$           &\, CIFAR-10  &\,  0.640 &\, 0.000400 &\, 0.7440 \\
                        &\,                           &\, CIFAR-100 &\,  0.640 &\, 0.001638 &\, 0.3446 \\
LAE + train $t$ p. chn  &\, LAE + train $t$ p. chn    &\, CIFAR-10  &\,  0.156 &\, 0.000250 &\, 0.9375 \\
                        &\,                           &\, CIFAR-100 &\,  2.621 &\, 0.000156 &\, 0.8400 \\
\bottomrule
\end{tabular}
\end{table*}

% \subsection{MXResNet on Imagenette and Imagewoof}

\begin{table*}[htb]
\caption{
Discovered hyperparameters for MXResNet on Imagenette and Imagewoof, for the 5 epochs 128 and 256 pixel resolution benchmarks.
}
\label{tab:hyper_imagenette}
\centering
\begin{tabular}{lllrrrrrr}
\toprule
Global Pooling Op            &\, Dataset& Size &\, lr      &\, ann     &\, wd      &\, mom     &\, alpha   &\, eps     \\
\midrule
Average                      &\, -nette & 128  &\, 0.00581 &\, 0.680   &\, 0.0086 &\, 0.892  &\, 0.9816  &\, 1.23e-07 \\
                             &\,        & 256  &\,{0.00718}&\,{0.636}  &\, 0.0052 &\, 0.931  &\, 0.9930  &\, 1.64e-07 \\
                             &\, -woof  & 128  &\, 0.00436 &\, 0.702   &\, 0.0104 &\, 0.906  &\, 0.9921  &\, 6.23e-07 \\
                             &\,        & 256  &\, 0.00551 &\, 0.678   &\, 0.0070 &\, 0.912  &\, 0.9929  &\, 7.10e-07 \\
Mixed+train $\alpha$ p. chn  &\, -nette & 128  &\, 0.00669 &\, 0.691   &\, 0.0264 &\, 0.949  &\, 0.9898  &\,{8.75e-08}\\
                             &\,        & 256  &\, 0.00386 &\, 0.679   &\, 0.0037 &\, 0.941  &\, 0.9787  &\, 9.06e-07 \\
                             &\, -woof  & 128  &\,{0.00282}&\, 0.661   &\, 0.0203 &\, 0.949  &\, 0.9856  &\, 1.44e-06 \\
                             &\,        & 256  &\, 0.00324 &\, 0.743   &\, 0.0116 &\, 0.931  &\, 0.9916  &\, 5.56e-07 \\
LAE                          &\, -nette & 128  &\, 0.00348 &\, 0.667   &\,{0.0026}&\,{0.869} &\, 0.9800  &\, 3.30e-06 \\
                             &\,        & 256  &\, 0.00660 &\, 0.665   &\, 0.0047 &\, 0.899  &\, 0.9908  &\, 2.04e-06 \\
                             &\, -woof  & 128  &\, 0.00365 &\, 0.741   &\,{0.0350}&\, 0.874  &\, 0.9939  &\, 2.84e-06 \\
                             &\,        & 256  &\, 0.00460 &\, 0.786   &\, 0.0062 &\, 0.929  &\, 0.9937  &\, 1.31e-06 \\
LAE + train $t$              &\, -nette & 128  &\, 0.00464 &\, 0.699   &\, 0.0119 &\, 0.897  &\,{0.9687} &\, 6.89e-07 \\
                             &\,        & 256  &\, 0.00486 &\, 0.664   &\, 0.0038 &\, 0.909  &\, 0.9782  &\, 2.92e-06 \\
                             &\, -woof  & 128  &\, 0.00501 &\, 0.689   &\, 0.0140 &\, 0.933  &\,{0.9957} &\, 6.19e-06 \\
                             &\,        & 256  &\, 0.00357 &\, 0.642   &\, 0.0049 &\, 0.919  &\, 0.9899  &\, 9.84e-07 \\
LAE + temp p. chn            &\, -nette & 128  &\, 0.00319 &\,{0.823}  &\, 0.0205 &\,{0.952} &\, 0.9708  &\, 1.38e-07 \\
                             &\,        & 256  &\,{0.00718}&\, 0.676   &\, 0.0143 &\, 0.909  &\, 0.9955  &\,{1.29e-06}\\
                             &\, -woof  & 128  &\, 0.00357 &\, 0.720   &\, 0.0212 &\, 0.910  &\, 0.9848  &\, 8.50e-07 \\
                             &\,        & 256  &\, 0.00467 &\, 0.750   &\, 0.0053 &\, 0.905  &\, 0.9940  &\, 1.84e-06 \\
\bottomrule
\end{tabular}
\end{table*}

% ====================================================================
\FloatBarrier
\section{Proofs of limits of temperature-mediated LogAvgExp}

In the main body of the paper, we assert that
\begin{align}
\lim_{t \to 0^+} \opn{LogAvgExp}(\VEC{z}; t) =& \max(\VEC{z}) = \max_{\forall i} z_i \\
\lim_{t \to +\infty} \opn{LogAvgExp}(\VEC{z}; t) =& \opn{mean}(\VEC{z}) = \frac{1}{n} \sum_{i=1}^n z_i
.\end{align}
Here, we provide proofs for each of these statements.

\begin{theorem}
$\opn{LogAvgExp}(\VEC{z}; t)$, where $t > 0$, $\VEC{z} \in \R^n$, is bounded above by $\max(\VEC{z})$.
\end{theorem}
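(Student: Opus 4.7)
The plan is to start from the explicit formula $\opn{LogAvgExp}(\VEC{z}; t) = t \log\!\left(\tfrac{1}{n}\sum_{i=1}^n \exp(z_i/t)\right)$ (i.e.\ \autoref{eq:lae-temp2} rewritten by folding $\log(n)$ back into the sum), let $M := \max(\VEC{z})$, and then chase a chain of elementary monotonicity inequalities down to the bound $\opn{LogAvgExp}(\VEC{z}; t) \le M$.

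Concretely, first I would observe that $z_i \le M$ for every $i$, and use $t > 0$ to divide without flipping the inequality, giving $z_i/t \le M/t$. Since $\exp$ is monotonically increasing, $\exp(z_i/t) \le \exp(M/t)$ for all $i$, hence the average satisfies $\tfrac{1}{n}\sum_i \exp(z_i/t) \le \exp(M/t)$. Applying $\log$ (also monotonic) preserves the inequality, and then multiplying by the positive scalar $t$ again preserves direction, yielding
\begin{equation*}
\opn{LogAvgExp}(\VEC{z}; t) = t \log\!\left(\tfrac{1}{n}\sum_{i=1}^n \exp(z_i/t)\right) \le t \cdot \tfrac{M}{t} = M = \max(\VEC{z}).
\end{equation*}

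There is essentially no technical obstacle: the whole argument is three applications of monotonicity plus a cancellation of $t$ with $1/t$. The only two places where one needs to be careful are (i) explicitly invoking the hypothesis $t > 0$ when dividing by $t$ and when multiplying the final inequality through, since the sign of $t$ controls whether the inequality direction is preserved, and (ii) noting that the average over $n$ non-negative terms each at most $\exp(M/t)$ is itself at most $\exp(M/t)$, which is where the correction term $-\log(n)$ implicit in LAE (as opposed to LSE) becomes essential — without it, one would only obtain the weaker bound $\opn{LSE}(\VEC{z}/t) \cdot t \le M + t\log(n)$. I would mention this last point briefly to contrast with the analogous lower bound for LSE already noted in the text.
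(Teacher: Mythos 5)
Your proof is correct: the chain $z_i \le M \Rightarrow \exp(z_i/t) \le \exp(M/t) \Rightarrow \tfrac{1}{n}\sum_i \exp(z_i/t) \le \exp(M/t) \Rightarrow t\log(\cdot) \le M$ is valid, and you correctly flag the two places where $t>0$ is needed and where the $\tfrac{1}{n}$ normalization (equivalently the $-\log n$ correction) is essential. The paper proves the same bound with the same elementary monotonicity ingredients, but organizes them differently: it first applies the LogSumExp trick to factor out $z_* := \max_i z_i$, obtaining the identity $\opn{LogAvgExp}(\VEC{z};t) = z_* + t\bigl(\log\sum_{i=1}^n \exp\bigl(\tfrac{z_i - z_*}{t}\bigr) - \log n\bigr)$, and then shows the bracketed term is nonpositive because each $\exp\bigl(\tfrac{z_i-z_*}{t}\bigr) \le 1$. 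Your direct route is slightly shorter for the theorem in isolation; the paper's decomposition buys a little more, since the same identity, together with the two-sided bound $0 < \log\sum_i \exp\bigl(\tfrac{z_i-z_*}{t}\bigr) \le \log n$, is immediately reused in the corollary to show $\opn{LogAvgExp}(\VEC{z};t) \to \max(\VEC{z})$ as $t \to 0^+$, which your chain of one-sided inequalities does not yield without further work.
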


\begin{proof}
Recall that
\begin{align}
\opn{LogAvgExp}(\VEC{z}; t)
=& \, t \cdot \opn{LogAvgExp}\left(\frac{\VEC{z}}{t}\right) \\
=& \, t \cdot \log \left( \frac{1}{n} \sum_{i=1}^n \exp\left(\frac{z_i}{t}\right) \right) \\
=& \, t \left( \log \left( \sum_{i=1}^n \exp\left(\frac{z_i}{t}\right) \right) - \log(n) \right) \label{eq:logavgexpt-a}
.\end{align}

Let $z_* := \max_i z_i$.
Noting that $t>0$, we can use the ``LogSumExp-trick'' as follows
\begin{align}
\opn{LogAvgExp}(\VEC{z}; t)
=& \, t \left( \log \left( \sum_{i=1}^n \exp\left(\frac{z_i}{t}\right) \right) - \log(n) \right) \\
=& \, t \left( \log \left( \sum_{i=1}^n \exp\left(\frac{z_i}{t} - \frac{z_*}{t} + \frac{z_*}{t} \right) \right) - \log(n) \right) \\
=& \, t \left( \log \left( \exp\left(\frac{z_*}{t}\right) \sum_{i=1}^n \exp\left(\frac{z_i - z_*}{t} \right) \right) - \log(n) \right) \\
=& \, t \left( \frac{z_*}{t} + \log \left( \sum_{i=1}^n \exp\left(\frac{z_i - z_*}{t}\right) \right) - \log(n) \right) \\
=& z_* + \, t \left( \log \left( \sum_{i=1}^n \exp\left(\frac{z_i - z_*}{t}\right) \right) - \log(n) \right) \label{eq:lse-trick1}
.\end{align}

Note that as $z_i \leq z_* \,\forall\, i$, and $t > 0$, so
\begin{align}
\implies \frac{z_i - z_*}{t} \leq& 0 \quad\quad \forall\,i
.\end{align}
Using the fact that the exponential function is monotonic and strictly increasing,
\begin{align}
\iff \exp\left(\frac{z_i - z_*}{t}\right) \leq& 1 \quad\quad \forall\,i \\
\implies \sum_{i=1}^n \exp\left(\frac{z_i - z_*}{t}\right) \leq& n
.\end{align}
Using the fact that the log function is monotonic and strictly increasing,
\begin{align}
\iff \log \left( \sum_{i=1}^n \exp\left(\frac{z_i - z_*}{t}\right) \right) \leq& \log(n) \\
\iff \log \left( \sum_{i=1}^n \exp\left(\frac{z_i - z_*}{t}\right) \right) - \log(n) \leq& 0 \label{eq:bounded-logn}
.\end{align}
Since $t > 0$,
\begin{align}
\implies t \left( \log \left( \sum_{i=1}^n \exp\left(\frac{z_i - z_*}{t}\right) \right) - \log(n) \right) \leq& 0 \\
\iff z_* + \, t \left( \log \left( \sum_{i=1}^n \exp\left(\frac{z_i - z_*}{t}\right) \right) - \log(n) \right) \leq& z_* \\
\iff \opn{LogAvgExp}(\VEC{z}; t) \leq& z_*
,\end{align}
where we have made use of \autoref{eq:lse-trick1} in the final step.

\end{proof}

\begin{corollary}
$\opn{LogAvgExp}(\VEC{z}; t)$ converges to its upper bound of $\max(\VEC{z})$ in the limiting case of $t \to 0^+$.
\end{corollary}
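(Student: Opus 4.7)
The plan is to leverage the reformulation already established in the preceding theorem, namely
\[
\opn{LogAvgExp}(\VEC{z}; t) = z_* + t \left( \log \left( \sum_{i=1}^n \exp\left(\frac{z_i - z_*}{t}\right) \right) - \log(n) \right),
\]
where $z_* := \max_i z_i$. From this identity, establishing the corollary reduces to showing that the second (additive) term vanishes in the limit $t \to 0^+$.

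First, I would sandwich the bracketed quantity between two explicit bounds. The upper bound $\leq 0$ is already supplied by \autoref{eq:bounded-logn} of the preceding theorem. For the matching lower bound, I would observe that at least one index $i$ attains $z_i = z_*$, so the corresponding summand equals $\exp(0) = 1$; since the other summands are nonnegative, the total sum is at least $1$, and hence
\[
\log \left( \sum_{i=1}^n \exp\left(\frac{z_i - z_*}{t}\right) \right) - \log(n) \geq -\log(n).
\]
Multiplying through by $t > 0$ yields
\[
-t\log(n) \;\leq\; t \left( \log \left( \sum_{i=1}^n \exp\left(\frac{z_i - z_*}{t}\right) \right) - \log(n) \right) \;\leq\; 0.
\]

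Finally, since both outer bounds tend to $0$ as $t \to 0^+$ (the left-hand bound because $\log(n)$ is a constant independent of $t$), the squeeze theorem forces the bracketed remainder term to vanish, and therefore $\opn{LogAvgExp}(\VEC{z}; t) \to z_* = \max(\VEC{z})$. I do not anticipate a significant obstacle here: the heavy lifting (the LogSumExp-trick rewrite and the upper bound) was already done in the preceding theorem, so the corollary is essentially a one-line squeeze argument once the trivial lower bound $\sum_i \exp((z_i-z_*)/t) \geq 1$ is noted. The only minor subtlety is ensuring $z_*$ is actually attained (so that at least one summand equals $1$), which holds because $\VEC{z} \in \R^n$ is a finite vector.
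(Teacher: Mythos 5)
Your proposal is correct and follows essentially the same route as the paper: both start from the LogSumExp-trick identity $\opn{LogAvgExp}(\VEC{z};t) = z_* + t(\log\sum_i \exp((z_i-z_*)/t) - \log n)$, bound the bracketed term independently of $t$, and conclude the remainder vanishes as $t \to 0^+$. If anything, your explicit observation that the maximum is attained (so the sum is at least $1$ and the bracket is at least $-\log n$) makes the lower bound cleaner than the paper's terse appeal to $\exp(x) > 0$.
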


\begin{proof}
Using the fact that $\exp(x) > 0 \,\forall\, x$, and \autoref{eq:bounded-logn},
\begin{equation}
0 < \log \left( \sum_{i=1}^n \exp\left(\frac{z_i - z_*}{t}\right) \right) \leq \log(n)
,\end{equation}
we observe the middle term is bounded above and below, and hence must be finite $\forall t$.

Thus from \autoref{eq:lse-trick1},
\begin{align}
\lim_{t \to 0^+} \opn{LogAvgExp}(\VEC{z}; t)
&= \lim_{t \to 0^+} z_* + \, t \left( \log \left( \sum_{i=1}^n \exp\left(\frac{z_i - z_*}{t}\right) \right) - \log(n) \right) \\
&= z_*
.\end{align}
\end{proof}

\begin{theorem}
\label{thm:lae-mean-conv}
$\opn{LogAvgExp}(\VEC{z}; t)$, where $t > 0$, $\VEC{z} \in \R^n$, converges to $\opn{mean}(\VEC{z})$ in the limiting case of $t \to +\infty$.
\end{theorem}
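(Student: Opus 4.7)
The plan is to make the substitution $u = 1/t$ so that the limit $t \to +\infty$ becomes $u \to 0^+$, and then recognize the resulting expression as a standard indeterminate form that yields to L'H\^opital's rule (or, equivalently, a first-order Taylor expansion). Starting from the definition,
\begin{equation}
\opn{LogAvgExp}(\VEC{z}; t) = t \log\!\left(\tfrac{1}{n}\sum_{i=1}^n \exp(z_i/t)\right),
\end{equation}
the substitution yields
\begin{equation}
\opn{LogAvgExp}(\VEC{z}; 1/u) = \frac{\log\!\left(\tfrac{1}{n}\sum_{i=1}^n \exp(u z_i)\right)}{u}.
\end{equation}

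Next, I would observe that as $u \to 0^+$ both numerator and denominator tend to $0$: the inner sum approaches $\tfrac{1}{n}\sum_i 1 = 1$, whose logarithm is $0$. Both functions are smooth in $u$ on a neighborhood of $0$ (the inner sum is a finite sum of strictly positive terms, so the logarithm is analytic there), so the hypotheses of L'H\^opital's rule are satisfied. Differentiating numerator and denominator with respect to $u$ and evaluating at $u = 0$ gives
\begin{equation}
\lim_{u\to 0^+}\frac{\tfrac{1}{n}\sum_{i=1}^n z_i \exp(u z_i)}{\tfrac{1}{n}\sum_{i=1}^n \exp(u z_i)} = \frac{\tfrac{1}{n}\sum_{i=1}^n z_i}{1} = \opn{mean}(\VEC{z}),
\end{equation}
which is exactly the claimed limit.

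As a sanity check, the same conclusion drops out of a first-order Taylor expansion: $\exp(u z_i) = 1 + u z_i + O(u^2)$ yields $\tfrac{1}{n}\sum_i \exp(u z_i) = 1 + u\,\opn{mean}(\VEC{z}) + O(u^2)$, and then $u^{-1}\log\bigl(1 + u\,\opn{mean}(\VEC{z}) + O(u^2)\bigr) = \opn{mean}(\VEC{z}) + O(u) \to \opn{mean}(\VEC{z})$. There is no real obstacle here: the only points to verify are that the argument of the logarithm stays strictly positive (immediate since it is a sum of positive exponentials) and that the derivative ratio has a limit at $0$ (immediate by continuity), so the classical statement of L'H\^opital's rule suffices. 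Either route could be presented as the formal proof, with the L'H\^opital version being slightly shorter.
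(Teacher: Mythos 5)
Your proof is correct, but it takes a different route from the paper. The paper substitutes the Taylor series of $\exp$ into the definition to write $\opn{LogAvgExp}(\VEC{z};t) = t\log\bigl(1 + \frac{1}{n}\sum_i\sum_{k\geq 1} z_i^k/(k!\,t^k)\bigr)$, then expands the logarithm via its own Taylor series about $1$; this forces it to verify the convergence condition $-1 < x \leq 1$ for the log series, which it does by showing the argument exceeds $-1$ (positivity of $\exp$) and choosing $t \geq z_*/\log(2)$ so that it is at most $1$, before reading off $\opn{mean}(\VEC{z}) + \mathcal{O}(1/t)$. Your substitution $u = 1/t$ followed by L'H\^opital (or, even more directly, the observation that the limit is by definition the derivative at $u=0$ of the smooth function $u \mapsto \log\bigl(\frac{1}{n}\sum_i \exp(u z_i)\bigr)$, which vanishes at $u=0$) avoids all of that bookkeeping: you only need positivity of the argument of the logarithm and continuity of the derivative ratio, both immediate. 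Your approach is shorter and needs less machinery; the paper's series manipulation has the side benefit of exhibiting an explicit $\mathcal{O}(1/t)$ rate of convergence, which your first-order Taylor sanity check also recovers, so nothing essential is lost either way.
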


\begin{proof}
Recall that
\begin{align}
\opn{LogAvgExp}(\VEC{z}; t)
    =& \, t \cdot \opn{LogAvgExp}\left(\frac{\VEC{z}}{t}\right) \\
    =& \, t \cdot \log \left( \frac{1}{n} \sum_{i=1}^n \exp\left(\frac{z_i}{t}\right) \right), \label{eq:logavgexpt}
\end{align}
by definition of $\opn{LogAvgExp}$.

Also, recall that the Taylor series expansion for $\exp(x)$ is given by
\begin{align}
\exp(x) &= \sum_{k=0}^\infty \frac{x^k}{k!} \label{eq:exp-taylor} \\
        &= 1 + x + \frac{x^2}{2} + \frac{x^3}{3!} + \cdots
.\end{align}

Substituting \autoref{eq:exp-taylor} into \autoref{eq:logavgexpt},
\begin{align}
\opn{LogAvgExp}(\VEC{z}; t)
&= \, t \cdot \log \left( \frac{1}{n} \sum_{i=1}^n \sum_{k=0}^\infty \frac{z_i^k}{k! \, t^k} \right) \\
&= \, t \cdot \log \left( \frac{1}{n} \sum_{i=1}^n \left( 1 + \sum_{k=1}^\infty \frac{z_i^k}{k! \, t^k} \right) \right) \\
&= \, t \cdot \log \left( 1 + \frac{1}{n} \sum_{i=1}^n \sum_{k=1}^\infty \frac{z_i^k}{k! \, t^k} \right)
\label{eq:lae-taylor}
.\end{align}

Recall that the Taylor series expansion for $\log(1+x)$ is
\begin{align}
\log(1+x) &= \sum_{k=1}^\infty \frac{(-1)^{k+1}\,x^k}{k} \label{eq:log-taylor} \\
          &= x - \frac{x^2}{2} + \frac{x^3}{3} - \frac{x^4}{4} + \cdots
,\end{align}
which converges for $-1 < x \leq 1$.

Since $\exp(x) > 0 \, \forall \, x \in \R$,
\begin{align}
\frac{1}{n} \sum_{i=1}^n \exp\left(\frac{z_i}{t}\right) &> 0 \\
\implies \frac{1}{n} \sum_{i=1}^n \sum_{k=1}^\infty \frac{z_i^k}{k! \, t^k} &> -1
,\end{align}
where we have substituted in \autoref{eq:exp-taylor} again.
This demonstrates we satisfy the lower bound for convergence of \autoref{eq:log-taylor}.

If we let $z_* := \max_i z_i$, we can write
\begin{align}
\frac{1}{n} \sum_{i=1}^n \sum_{k=1}^\infty \frac{z_i^k}{k! \, t^k}
&= -1 + \frac{1}{n} \sum_{i=1}^n \exp\left(\frac{z_i}{t}\right) \\
&\leq -1 + \frac{1}{n} \sum_{i=1}^n \exp\left(\frac{z_*}{t}\right) \\
&= -1 + \exp\left(\frac{z_*}{t}\right)
,\end{align}
where we have made use of the fact that the exponential function is strictly monotonically increasing.

Let us choose some $t>0$ sufficiently large such that
\begin{align}
\frac{z_*}{\log(2)} & \leq t  \\
\implies \frac{z_*}{t} & \leq \log(2) \\
\implies \exp\left(\frac{z_*}{t}\right) & \leq 2 \\
\implies -1 + \exp\left(\frac{z_*}{t}\right) &\leq 1 \\
\implies \frac{1}{n} \sum_{i=1}^n \sum_{k=1}^\infty \frac{z_i^k}{k! \, t^k}  &\leq 1
.\end{align}
This proves the upper bound of the requirement for convergence of \autoref{eq:log-taylor}.

As $ -1 < \frac{1}{n} \sum_{i=1}^n \sum_{k=1}^\infty \frac{z_i^k}{k! \, t^k} \leq 1$ for sufficiently large values of $t$, we can apply the Taylor expansion in \autoref{eq:log-taylor} to \autoref{eq:lae-taylor}.
\begin{align}
\opn{LogAvgExp}(\VEC{z}; t)
=& t \, \sum_{m=1}^\infty \frac{(-1)^{m+1}}{m} \left( \frac{1}{n} \sum_{i=1}^n \sum_{k=1}^\infty \frac{z_i^k}{k! \, t^k} \right)^m \\
=& t \left( \frac{1}{n} \sum_{i=1}^n \frac{z_i}{t} + \mathcal{O}\left(\nicefrac{1}{t^2}\right) \right) \\
=& \frac{1}{n} \sum_{i=1}^n z_i + \mathcal{O}\left(\nicefrac{1}{t}\right) \\
\xrightarrow{t\to\infty}& \frac{1}{n} \sum_{i=1}^n z_i
\end{align}
\end{proof}

\section{Derivatives of LogAvgExp}

\begin{theorem}
The derivative of $\opn{LogAvgExp}$ with respect to each input element, $z_i$, is given by
\begin{equation}
\pd{z_i} \opn{LogAvgExp}(\VEC{z}; t) = \frac{\exp(\frac{z_i}{t})}{\sum_{j=1}^n \exp(\frac{z_j}{t})}
.\end{equation}
\end{theorem}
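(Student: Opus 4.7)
The plan is to take the definition of $\opn{LogAvgExp}(\VEC{z}; t)$ already established in the paper and differentiate it term by term with respect to $z_i$, using the chain rule. This is essentially a routine calculus computation; the claim really just asserts that the derivative of LAE is a softmax over the scaled inputs, which matches the well-known derivative of LogSumExp.

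First I would write the temperature-controlled operator in its expanded form
\begin{equation}
\opn{LogAvgExp}(\VEC{z}; t) = t\left(\log\sum_{j=1}^n \exp\!\left(\frac{z_j}{t}\right) - \log n\right),
\end{equation}
as given in \autoref{eq:lae-temp2}. Since $t$ and $\log n$ do not depend on $z_i$, only the inner log-sum term contributes to the derivative.

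Next I would apply the chain rule to $\log \sum_{j} \exp(z_j/t)$: the outer logarithm contributes a factor $1/\sum_j \exp(z_j/t)$, and the derivative of the summand $\exp(z_j/t)$ with respect to $z_i$ is $\frac{1}{t}\exp(z_i/t)$ when $j=i$ and zero otherwise. Combining these gives
\begin{equation}
\pd{z_i} \log \sum_{j=1}^n \exp\!\left(\frac{z_j}{t}\right) = \frac{(1/t)\exp(z_i/t)}{\sum_{j=1}^n \exp(z_j/t)}.
\end{equation}
Multiplying by the leading $t$ cancels the $1/t$ factor, yielding the stated expression.

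There is no real obstacle here: the only thing to be careful about is that $t>0$ so that dividing by $t$ inside the exponent is well-defined, and that one does not need to worry about differentiating $\log n$. I would also note, as a sanity check, that the resulting gradient coincides with $\opn{softargmax}(\VEC{z}/t)_i$, which is consistent with the interpretation of LAE as a smooth maximum and reproduces the uniform $1/n$ gradient in the $t\to\infty$ (mean-pooling) limit and a one-hot indicator at $\arg\max_i z_i$ in the $t\to 0^+$ (max-pooling) limit, matching the limits proved earlier in the supplement.
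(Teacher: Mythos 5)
Your proposal is correct and follows essentially the same route as the paper's proof: expand $\opn{LogAvgExp}(\VEC{z}; t) = t\left(\opn{LSE}\!\left(\frac{\VEC{z}}{t}\right) - \log(n)\right)$, drop the constant $\log(n)$, and apply the chain rule to the log-sum-exp so that the leading factor $t$ cancels the $\frac{1}{t}$ from differentiating $\exp\!\left(\frac{z_i}{t}\right)$. The closing sanity check relating the gradient to $\opn{softargmax}(\VEC{z}/t)_i$ is a nice addition but not needed.
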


\begin{proof}
Recall that
\begin{align}
\opn{LogAvgExp}(\VEC{z}; t)
&= \, t \cdot \opn{LogAvgExp}\left(\frac{\VEC{z}}{t}\right) \\
&= \, t \, \left( \opn{LSE}\left(\frac{\VEC{z}}{t}\right) - \log(n) \right)
\end{align}
by definition, where
\begin{equation}
\opn{LSE}\left(\VEC{z}\right)
= \opn{LogSumExp}\left(\VEC{z}\right)
= \log \left( \sum_{i=1}^n \exp\left(z_i\right) \right)
\label{eq:lse}
.\end{equation}

\begin{align}
\pd{z_i}\opn{LogAvgExp}(\VEC{z}; t)
&= \pd{z_i} \, t \, \left( \opn{LSE}\left(\frac{\VEC{z}}{t}\right) - \log(n) \right) \\
&= t \, \pd{z_i} \opn{LSE}\left(\frac{\VEC{z}}{t}\right) \\
&= t \, \pd{z_i} \log\left(\sum_{j=1}^n \exp(\frac{z_j}{t})\right) \\
&= t \, \frac{1}{\sum_{j=1}^n \exp(\frac{z_j}{t})} \pd{z_i} \sum_{j=1}^n \exp(\frac{z_j}{t}) \\
&= \frac{t}{\sum_{j=1}^n \exp(\frac{z_j}{t})} \exp(\frac{z_i}{t}) \pd{z_i} \frac{z_i}{t} \\
&= \frac{\exp(\frac{z_i}{t})}{\sum_{j=1}^n \exp(\frac{z_j}{t})}
\end{align}
\end{proof}

%\subsubsection{Derivation of derivative of LogAvgExp with respect to temperature}

\begin{theorem}
The derivative of $\opn{LogAvgExp}$ with respect to temperature, $t$, is given by
\begin{equation}
\pd{t} \opn{LogAvgExp}(\VEC{z}; t) = \frac{1}{t} \left( \opn{LogAvgExp}\left(\VEC{z}; t\right) - \frac{\sum_{i=1}^n {z_i} \exp\left(\frac{z_i}{t}\right)}{\sum_{i=1}^n \exp\left(\frac{z_i}{t}\right)} \right)
.\end{equation}
\end{theorem}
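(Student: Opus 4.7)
The plan is to differentiate the stated definition $\opn{LogAvgExp}(\VEC{z}; t) = t\,\bigl(\opn{LSE}(\VEC{z}/t) - \log(n)\bigr)$ directly with respect to $t$ by product rule. The outer-factor contribution gives $\opn{LSE}(\VEC{z}/t) - \log(n)$, which is precisely $\opn{LogAvgExp}(\VEC{z}; t)/t$ and accounts for the first summand in the target expression.

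The remaining piece is $t \cdot \pd{t}\opn{LSE}(\VEC{z}/t)$. Writing out $\opn{LSE}(\VEC{z}/t) = \log\bigl(\sum_j \exp(z_j/t)\bigr)$ and applying the chain rule yields $\sum_j \exp(z_j/t)\cdot\pd{t}(z_j/t)$ divided by $\sum_j \exp(z_j/t)$. Since $\pd{t}(z_j/t) = -z_j/t^2$, the prefactor $t$ combines with $-1/t^2$ to produce an overall factor of $-1/t$, while the surviving ratio is the exponential-weighted mean $\sum_j z_j\exp(z_j/t) \big/ \sum_j \exp(z_j/t)$. Summing the two contributions and factoring out $1/t$ reproduces the stated identity.

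There is essentially no substantive obstacle: the proof is a one-line combination of product rule and chain rule. The only care needed when drafting is (i) to keep $-\log(n)$ attached to $\opn{LSE}(\VEC{z}/t)$ through the product-rule step, so the $\opn{LogAvgExp}$ term in the final formula is recognized at once rather than split into two pieces that must be recombined; and (ii) to track the sign arising from $\pd{t}(z_j/t) = -z_j/t^2$, since this is what converts the sum of the two contributions into a difference and thereby produces the subtraction appearing in the target formula.
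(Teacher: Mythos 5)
Your proposal is correct and follows essentially the same route as the paper's proof: apply the product rule to $t\,\bigl(\opn{LSE}(\VEC{z}/t) - \log(n)\bigr)$, compute $\pd{t}\opn{LSE}(\VEC{z}/t)$ by the chain rule to obtain $-\tfrac{1}{t^2}$ times the exponential-weighted mean of the $z_i$, and recombine, recognizing $\opn{LSE}(\VEC{z}/t)-\log(n)$ as $\opn{LogAvgExp}(\VEC{z};t)/t$. No gaps; the sign bookkeeping you flag is exactly the only point of care, and it is handled correctly.
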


\begin{proof}
Recall that
\begin{align}
\opn{LogAvgExp}(\VEC{z}; t)
&= \, t \cdot \opn{LogAvgExp}\left(\frac{\VEC{z}}{t}\right) \\
&= \, t \, \left( \opn{LSE}\left(\frac{\VEC{z}}{t}\right) - \log(n) \right) \label{eq:lae-temp}
\end{align}
by definition.

\begin{align}
\pd{t} \opn{LogAvgExp}(\VEC{z}; t)
&= \pd{t} \, t \left( \opn{LSE}\left(\frac{\VEC{z}}{t}\right) - \log(n) \right) \\
&= \opn{LSE}\left(\frac{\VEC{z}}{t}\right) - \log(n) + t \pd{t} \opn{LSE}\left(\frac{\VEC{z}}{t}\right)
\end{align}

\begin{align}
\opn{LSE}\left(\frac{\VEC{z}}{t}\right) &= \log\left(\sum_{i=1}^n \exp\left(\frac{z_i}{t}\right) \right) \\
\implies \pd{t} \opn{LSE}\left(\frac{\VEC{z}}{t}\right) &= \pd{t} \log\left(\sum_{i=1}^n \exp\left(\frac{z_i}{t}\right) \right) \\
&= \frac{1}{\exp\left(\opn{LSE}\left(\nicefrac{\VEC{z}}{t}\right)\right)} \pd{t} \sum_{i=1}^n \exp\left(\frac{z_i}{t}\right) \\
&= \frac{1}{\exp\left(\opn{LSE}\left(\nicefrac{\VEC{z}}{t}\right)\right)} \sum_{i=1}^n \pd{t} \exp\left(\frac{z_i}{t}\right) \\
&= \frac{1}{\exp\left(\opn{LSE}\left(\nicefrac{\VEC{z}}{t}\right)\right)} \sum_{i=1}^n -\frac{z_i}{t^2} \exp\left(\frac{z_i}{t}\right) \\
&= \frac{-1}{t^2} \frac{\sum_{i=1}^n z_i \exp\left(\frac{z_i}{t}\right)}{\exp\left(\opn{LSE}\left(\nicefrac{\VEC{z}}{t}\right)\right)} \\
&= \frac{-1}{t^2} \frac{\sum_{i=1}^n z_i \exp\left(\frac{z_i}{t}\right)}{\sum_{i=1}^n \exp\left(\frac{z_i}{t}\right)}
\end{align}

Consequently, we conclude that
\begin{align}
\pd{t} \opn{LogAvgExp}(\VEC{z}; t)
&= \opn{LSE}\left(\frac{\VEC{z}}{t}\right) - \log(n) - \frac{\sum_{i=1}^n {z_i} \exp\left(\frac{z_i}{t}\right)}{t \exp\left(\opn{LSE}\left(\frac{\VEC{z}}{t}\right)\right)} \\
&= \opn{LSE}\left(\frac{\VEC{z}}{t}\right) - \log(n) - \frac{\sum_{i=1}^n {z_i} \exp\left(\frac{z_i}{t}\right)}{t \sum_{i=1}^n \exp\left(\frac{z_i}{t}\right)} \\
&= \frac{1}{t} \left( \opn{LogAvgExp}\left(\VEC{z}; t\right) - \frac{\sum_{i=1}^n {z_i} \exp\left(\frac{z_i}{t}\right)}{\sum_{i=1}^n \exp\left(\frac{z_i}{t}\right)} \right)
.\end{align}
\end{proof}

\end{document}